\pgfplotsset{compat=newest}
\newcommand{\footremember}[2]{%
    \footnote{#2}
    \newcounter{#1}
    \setcounter{#1}{\value{footnote}}%
}
\newcommand{\footrecall}[1]{%
    \footnotemark[\value{#1}]%
} 
\title{Inner-product Kernels are Asymptotically Equivalent to Binary Discrete Kernels}
\author{Zhenyu Liao\footremember{cs}{Laboratoire des Signaux et Syst{\`e}me, CentraleSup{\'e}lec, Universit{\'e} Paris-Saclay, France.} \footremember{gipsa}{G-STATS Data Science Chair, GIPSA-lab, University Grenoble Alpes, France.} \and \and Romain Couillet\footrecall{gipsa} \footrecall{cs}}
\date{}
\DeclareMathOperator{\tr}{ {\rm tr} }
\DeclareMathOperator{\diag}{ {\rm diag} }
\DeclareMathOperator{\erf}{ {\rm erf} }
\newcommand{\RR}{\mathbb{R}}
\newcommand{\CC}{\mathbb{C}}
\newcommand{\T}{ {\sf T} }
\newcommand{\EE}{\mathbb{E}}
\newcommand{\Var}{{\rm Var}}
\newcommand{\NN}{\mathcal{N}}
\newcommand{\zo}{{\mathbf{0}}}
\newcommand{\E}{{\mathbf{E}}}
\newcommand{\K}{{\mathbf{K}}}
\newcommand{\x}{{\mathbf{x}}}
\newcommand{\z}{{\mathbf{z}}}
\newcommand{\X}{{\mathbf{X}}}
\newcommand{\Y}{{\mathbf{Y}}}
\newcommand{\Z}{{\mathbf{Z}}}
\newcommand{\M}{{\mathbf{M}}}
\newcommand{\A}{{\mathbf{A}}}
\newcommand{\B}{{\mathbf{B}}}
\newcommand{\F}{{\mathbf{F}}}
\newcommand{\N}{{\mathbf{N}}}
\newcommand{\W}{{\mathbf{W}}}
\newcommand{\e}{{\mathbf{e}}}
\newcommand{\y}{{\mathbf{y}}}
\newcommand{\one}{{\mathbf{1}}}
\newcommand{\U}{{\mathbf{U}}}
\newcommand{\bPhi}{ \boldsymbol{\Phi} }
\newcommand{\bmu}{ \boldsymbol{\mu} }
\newcommand{\C}{{\mathbf{C}}}
\newcommand{\I}{{\mathbf{I}}}
\newcommand{\J}{{\mathbf{J}}}
\renewcommand{\v}{{\mathbf{v}}}
\renewcommand{\a}{{\mathbf{a}}}
\renewcommand{\b}{{\mathbf{b}}}
\renewcommand{\j}{{\mathbf{j}}}
\renewcommand{\S}{{\mathbf{S}}}
\renewcommand{\L}{{\mathbf{L}}}
\DeclareMathOperator{\sign}{{\rm sign}}
\definecolor{RED}{rgb}{0.7,0,0}
\definecolor{BLUE}{rgb}{0,0,0.69}
\definecolor{GREEN}{rgb}{0,0.65,0}
\definecolor{PURPLE}{rgb}{0.69,0,0.8}
\newtheorem{Theorem}{Theorem}
\newtheorem{Proposition}{Proposition}
\newtheorem{Lemma}{Lemma}
\newtheorem{Remark}{Remark}
\newtheorem{Assumption}{Assumption}
\begin{document}
% \nipsfinalcopy is no longer used

\maketitle

\begin{abstract}
This article investigates the eigenspectrum of the inner product-type kernel matrix \( \sqrt{p}\K=\{f(\x_i^\T \x_j/\sqrt{p})\}_{i,j=1}^n \) under a binary mixture model in the high dimensional regime where the number of data \(n\) and their dimension \(p\) are both large and comparable. Based on recent advances in random matrix theory, we show that, for a wide range of nonlinear functions \(f\), the eigenspectrum behavior is asymptotically equivalent to that of an (at most) cubic function. This sheds new light on the understanding of nonlinearity in large dimensional problems. As a byproduct, we propose a simple function prototype valued in \( (-1,0,1) \) that, while reducing both storage memory and running time, achieves the same (asymptotic) classification performance as any arbitrary function \(f\).
\end{abstract}

\section{Introduction}

Multivariate mixture models, especially Gaussian mixtures, play a fundamental role in statistics and have received significant research attention in the machine learning community \cite{kalai2016disentangling,ashtiani2018nearly}, mainly due to the convenient statistical properties of Gaussian and sub-Gaussian distributions. More generic mixture models, however, are somehow less covered.

On the other hand, in the study of large random matrices, one is able to reach in some cases ``universal'' results in the sense that the (asymptotic) statistical behavior of the object of interest is \emph{independent} of the underlying distribution of the random matrix. Intuitively speaking, the ``squared'' number of degrees of freedom in large matrices (e.g., sample covariance matrices \(\frac1n \X \X^\T \in \RR^{p \times p}\) based on \(n\) observations of dimension \(p\) arranged in the columns of \(\X\)) and their \emph{independent} interactions induce fast versions of central limit theorems irrespective of the data distribution, resulting in universal statistical results.

In this paper, we consider the eigenspectrum behavior of the inner product ``properly scaled'' (see details below) kernel matrix \(\K_{ij} = f(\x_i^\T \x_j/\sqrt p) /\sqrt p\), for \(n\) data \(\x_i\in\RR^p\) arising from an affine transformation of i.i.d.\@ random vectors with zero mean, unit covariance and bounded higher order moments. Under this setting and some mild regularity condition for the nonlinear function \(f\), the spectrum of \(\K\) can be shown to only depend on \emph{three} parameters of \(f\), in the regime where \(n,p\) are both large and comparable.

The theoretical study of the eigenspectrum of large random matrices serves as a basis to understand many practical statistical learning algorithms, among which are kernel spectral clustering \cite{ng2002spectral} or sparse principle component analysis (PCA) \cite{johnstone2009consistency}. In the large \(n,p\) regime, various types of ``random kernel matrices'' have been studied from a spectral random matrix viewpoint. In \cite{el2010spectrum} the authors considered kernel matrices based on the Euclidean distance \(f(\| \x_i - \x_j \|^2/p)\) or the inner product \(f(\x_i^\T \x_j/p)\) between data vectors, and study the eigenvalue distribution by essentially ``linearizing'' the nonlinear function \(f\) via a Taylor expansion, which naturally demands \(f\) to be locally smooth. Later in \cite{couillet2016kernel} the authors followed the same technical approach and considered a more involved Gaussian mixture model for the \(\x_i\), providing rich insights into the impact of nonlinearity in kernel spectral clustering application.

Nonetheless, these results are in essence built upon a local expansion of the nonlinear function \(f\), which follows from the ``concentration'' of the similarity measures \(\| \x_i - \x_j \|^2/p\) or \(\x_i^\T \x_j/p\) around a \emph{single} value of the smooth domain of \(f\), therefore disregarding most of the domain of \(f\). In this article, following \cite{cheng2013spectrum,do2013spectrum}, we study the inner product kernel \(f(\x_i^\T \x_j/\sqrt p) /\sqrt p\) which avoids the concentration effects with the more natural \(\sqrt{p}\) normalization. With the flexible tool of orthogonal polynomials, we are able to prove universal results which solely depend on the first two order moments of the data distribution and allow for nonlinear functions \(f\) that need not even be differentiable. As a practical outcome of our theoretical results, we propose an extremely simple piecewise constant function which is spectrally equivalent and thus performs equally well as arbitrarily complex functions \(f\), while inducing enormous gains in both storage and computational complexity. 

\medskip

The remainder of this article is organized as follows. In Section~\ref{sec:model} we introduce the object of interest together with necessary assumptions to work along with. Our main theoretical findings are presented in Section~\ref{sec:main} with intuitive ideas, while detailed proofs are deferred to the supplementary material due to space limitation. In Section~\ref{sec:practice} we discuss the practical consequence of our theoretical findings and propose our piecewise constant function prototype which works in a universal manner for kernel spectrum-based applications, for the system model under consideration. The article closes with concluding remarks and envisioned extensions in Section~\ref{sec:conclusion}.

\medskip

\emph{Notations}: Boldface lowercase (uppercase) characters stand for vectors (matrices). The notation \((\cdot)^\T\) denotes the transpose operator. The norm \(\|\cdot\|\) is the Euclidean norm for vectors and the operator norm for matrices, and we denote \(\| \A \|_\infty = \max_{i,j} |\A_{ij}|\) as well as \(\|\cdot\|_F\) the Frobenius norm: \(\|\A\|_F^2=\tr(\A\A^\T)\). \(\xi\) is often used to denote standard Gaussian random variable, i.e., \(\xi \sim \NN(0,1)\). 

\section{System model and preliminaries}
\label{sec:model}

\subsection{Basic setting}

Let \(\x_1, \ldots, \x_n \in \RR^p\) be \(n\) feature vectors drawn independently from the following two-class (\(\mathcal{C}_1\) and \(\mathcal{C}_2\)) mixture model:
\begin{equation}
    \label{eq:mixture}
  %\mathcal{C}_1: \x = \bmu_1 + (\I_p + \E_1)^{\frac12} \z; \quad \mathcal{C}_2: \x = \bmu_2 + (\I_p + \E_2)^{\frac12} \z,
  \begin{cases}
    \mathcal{C}_1: &\x = \bmu_1 + (\I_p + \E_1)^{\frac12} \z \\ 
    %%%
    \mathcal{C}_2: &\x = \bmu_2 + (\I_p + \E_2)^{\frac12} \z
  \end{cases}
\end{equation}
each having cardinality \(n/2\),\footnote{We restrict ourselves to binary classification for readability, but the presented results easily extend to a multi-class setting. %by rewriting \(\M\), \(\mathbf{T}\) and \(\S\) as their multi-class counterparts. {\RED *** pas définis ici ***}
%Also, while propositions hold for unbalanced class priors, some discussions become invalid, mainly due to the possible presence of non-informative isolated eigenvalue found in \cite{fan2019spectral}. This is however beyond the scope of this article. {\RED *** Est-ce que cette dernière phrase n'est pas un peu troublante? Faut-il vraiment la mettre? ***} 
} for some deterministic \(\bmu_a \in \RR^p\), \(\E_a \in \RR^{p \times p}\), \(a=1,2\) and random vector \(\z \in \RR^p\) having i.i.d.\@ entries of zero mean, unit variance and bounded moments. %, i.e., for \(k \ge 1\) there exists \(C_k\) independent of \(n,p\) such that \(\EE |[\z]_i|^k \le C_k\).
To ensure that the information of \(\bmu_a, \E_a\) is neither (asymptotically) too simple nor impossible to be extracted from the noisy features\footnote{We refer the readers to Section~\ref{sec:sm-non-trivial} in Supplementary Material for a more detailed discussion on this point.}, we work (as in \cite{couillet2018classification}) under the following assumption.
%and some constant \(\sigma >0\)
\begin{Assumption}[Non-trivial classification]\label{ass:growth-rate}
As \(n \to \infty\), we have for \(a\in\{1,2\}\)
\begin{enumerate}
  \item[(i)] \(p/n = c \to \bar c \in (0,\infty)\),
  \item[(ii)] \(\| \bmu_a \| = O(1)\), \(\| \E_a \| = O(p^{-1/4})\), \(|\tr(\E_a)| = O( \sqrt p )\) and \(\| \E_a \|_F^2 = O(\sqrt p)\). %$\tr(\E_a^2) = O(1)$.
\end{enumerate}
\end{Assumption}
% The feature vector \(\x_i \in \RR^p\) is said to belong to the class \(\mathcal{C}_a\) if
% \[
%     \x_i = \bmu_a + \sigma \z_i, \quad \z_i \sim \mathcal{N}(\mathbf{0}, \I_p),
% \]
% for some mean vector \(\bmu_a \in \RR^p\) and constant \(\sigma > 0\). We denote the feature matrix \(\X = [\x_1, \ldots, \x_n] \in \RR^{p \times n}\) by cascading all \(\x_i\) as its column vectors. Note that the above model can be rewritten in matrix model as
% \[
%     \X = \Z + \M \J^\T
% \]
% with

Following \cite{el2010information,cheng2013spectrum} we consider the following nonlinear random inner-product matrix
\begin{equation}\label{eq:def-K}
    \K = \left\{\delta_{i \neq j} f (\x_i^\T \x_j/\sqrt p)/\sqrt p\right\}_{i,j=1}^n
\end{equation}
for function \(f: \RR \mapsto \RR\) satisfying some regularity conditions (detailed later in Assumption~\ref{ass:poly}). As in \cite{el2010information,cheng2013spectrum}, the diagonal elements \(f(\x_i^\T\x_i/\sqrt{p})\) have been discarded. Indeed, under Assumption~\ref{ass:growth-rate}, \(\x_i^\T\x_i/\sqrt{p}=O(\sqrt{p})\) which is an ``improper scaling'' for the evaluation by \(f\) (unlike \(\x_i^\T\x_j/\sqrt{p}\) which properly scales as \(O(1)\) for all \(i\neq j\)).
%those  from \(\K\) removed since (i) it merely involves the norms \(\| \x_1 \|^2, \ldots, \| \x_n \|^2\), based upon which the classification becomes trivial and there is no necessity to kernel in this case {\RED *** cet argument n'est pas bon! pourquoi se priver de cette information si elle est utile? ***}, and (ii) it possesses entries of the form \(f(\| \x_i\|^2/\sqrt p)/\sqrt p\) that may not converge as \(p \to \infty\), for example in the case of \(f(x) = x^2\) {\RED *** ce n'est pas exact non plus: ce qui diverge c'est \(\|\x_i\|^2/\sqrt{p}\) et donc \(f\) est asymptotiquement évaluée en \(+\infty\), c'est ca le problème. Il faut recrire cet argument. ***}. 

In the absence of discriminative information (null model), i.e., if \(\bmu_a=\zo\) and \(\E_a=\zo\) for \(a=1,2\), we write \(\K=\K_N\) with
\begin{equation}\label{eq:def-K-n}
    [\K_N]_{ij} = \delta_{i \neq j} f ( \z_i^\T \z_j/\sqrt p)/\sqrt p.
\end{equation}
Letting \(\xi_p \equiv \z_i^\T \z_j/\sqrt p\), by the central limit theorem, \(\xi_p\overset{\mathcal L}{\longrightarrow}\NN(0,1)\) as \(p \to \infty\). As such, the \([\K_N]_{ij}\), \(1\leq i\neq j\leq n\), asymptotically behave like a family of \emph{dependent} standard Gaussian variables to which \(f\) is applied. In order to analyze the joint behavior of this family, we shall exploit some useful concepts of the theory of orthogonal polynomials and, in particular, of the class of Hermite polynomials defined with respect to the standard Gaussian distribution
\cite{abramowitz1965handbook,andrews2000special}. 
%For more details we refer the readers to \cite{szeg1939orthogonal} and the references therein.

%For a real probability measure \(\mu\), the set of orthogonal polynomials with respect to the scalar product \(\left<f,g\right>=\int fg d\mu\) is the family \(\{P_l(x), l=0,1,\ldots\}\) obtained using the Gram-Schmidt decomposition on the monomials \(\{1,x,x^2,\ldots\}\), i.e., \(P_0(x) = 1\) {\RED *** \(P_1(x) = x\) n'est pas vrai Zhenyu. \(P_1\) vaut \(x\) pour \(\mu\) de moyenne nulle mais pas dans le cas général ***} and, for \(l\geq 1\), \(P_l(x)\) is the Gram-Schmidt polynomial of degree \(l\) obtained from \(x^l\) and orthogonal to \(P_0,\ldots,P_{l-1}\). 
%Then, one has by orthogonality that \(\int_\RR P_{l_1}(x) P_{l_2}(x) d \mu(x) = 1\) for \(l_1 = l_2\) and \(0\) otherwise.
% \[
%     \int_\RR P_{l_1}(x) P_{l_2}(x) d \mu(x) = \begin{cases} 0, \quad l_1 \neq l_2, \\ 1, \quad l_1=l_2. \end{cases}
% \]
%Besides, for any function \(f\) square-integrable with respect to \(\mu\), one can expand \(f\) as
%\[
%    f(x) = \sum_{l=0}^\infty a_l P_l(x), \quad \text{with}\ a_l = \int_\RR f(x) P_l(x) d \mu(x).
%\]

\subsection{The Orthogonal Polynomial Framework}

For a real probability measure \(\mu\), we denote the set of orthogonal polynomials with respect to the scalar product \(\left<f,g\right>=\int fg d\mu\) as \(\{P_l(x), l=0,1,\ldots\}\), obtained from the Gram-Schmidt procedure on the monomials \(\{1,x,x^2,\ldots\}\) such that \(P_0(x)=1\), \(P_{l}\) is of degree \(l\) and \(\left< P_{l_1}, P_{l_2}\right> = \delta_{l_1-l_2}\). By the Riesz-Fisher theorem \cite[Theorem~11.43]{rudin1964principles}, for any function \(f \in L^2(\mu)\), the set of squared integrable functions with respect to \(\left<\cdot,\cdot\right>\), % that belongs to the span of \(\{P_l, l \ge 0\}\),
%\footnote{Note that, the set of orthogonal polynomials with respect to \(\mu\) does not always form a complete basis in \(L^2(\mu)\), this however holds for a fairly large class of distributions, in particular those with sub-Gaussian tails as pointed out in \cite{do2013spectrum}.} 
one can formally expand \(f\) as
\begin{equation}\label{eq:def-expansion}
    f(x) \sim \sum_{l=0}^\infty a_l P_l(x), \quad a_l = \int_\RR f(x) P_l(x) d \mu(x)
\end{equation}
where ``\(f\sim \sum_{l=0}^\infty P_l\)'' indicates that \(\|f-\sum_{l=0}^N P_l\|\to 0\) as \(N\to\infty\) (and \(\|f\|^2=\left<f,f\right>\)).

% In particular, since that we are interested in the Gaussian measure \(\mu(dx) = \frac1{\sqrt{2\pi}} e^{-x^2/2}\), the (possibly) most well known choice in this case is the normalized Hermite polynomials, the explicit formula of which is given by \cite{abramowitz1964handbook}
% \[
%     H_l(x) = \frac1{\sqrt{l!}} \sum_{k=0}^{\lfloor l/2 \rfloor} \left( -\frac12 \right)^k \frac{x^{l-2k}}{ k! (l-2k)! } , \quad l = 0,1,\ldots
% \]
% {\RED space and orthogonal basis of this space?} As a matter of fact, Lemma~4.1 in \cite{cheng2013spectrum} allows one to essentially ``expand'' \(f(\xi)\) with normalized Hermite polynomials if only the first finite-many terms are considered, since the coefficients of \(P_l(x)\) with respect to the probability density of \(\xi\) converge to those of \(H_l(x)\). 

\medskip

To investigate the asymptotic behavior of \(\K\) and \(\K_N\) as \(n,p \to \infty\), we position ourselves under the following technical assumption.
\begin{Assumption}\label{ass:poly}
For each \(p\), let \(\xi_p = \z_i^\T \z_j/\sqrt p\) and let \(\{P_{l,p}(x),\, l \ge 0 \}\) be the set of orthonormal polynomials with respect to the probability measure \(\mu_p\) of \(\xi_p\).\footnote{Note that \(\mu_p\) is merely standard Gaussian in the large \(p\) limit.} For \(f\in L^2(\mu_p)\) for each \(p\), i.e., 
\[
  f(x) \sim \sum_{l=0}^\infty a_{l,p} P_{l,p}(x)
\]
for \(a_{l,p}\) defined in \eqref{eq:def-expansion}, we demand that
\begin{enumerate}
  %in \(L^2(\mu_p)\) 
  \item[(i)] \(\sum_{l=0}^\infty a_{l,p} P_{l,p}(x) \mu_p(dx)\) converges in \(L^2(\mu_p)\) to \(f(x)\) uniformly over large \(p\), i.e., for any \(\epsilon > 0\) there exists \(L\) such that for all \(p\) large, %the \(L^2\) norm with respect to \(\mu_p\) satisfies
  \[
    \Big\| f - \sum_{l=0}^L a_{l,p} P_{l,p} \Big\|^2_{L^2(\mu_p)} = \sum_{l=L+1}^\infty |a_{l,p}|^2 \le \epsilon,
  \]
  \item[(ii)] as \(p \to \infty\), \(\sum_{l=1}^\infty |a_{l,p}|^2 \to \nu \in [0, \infty)\). Moreover, for \(l=0,1,2\), \(a_{l,p}\) converges and we denote \(a_0\), \(a_1\) and \(a_2\) their limits, respectively.
  \item[(iii)] \(a_0=0\).
\end{enumerate}
\end{Assumption}

Since \(\xi_p \to \NN(0,1)\), the limiting parameters \(a_0, a_1, a_2\) and \(\nu\) are simply (generalized) moments of the standard Gaussian measure involving \(f\). Precisely,
\[
  a_0 = \EE[f(\xi)], \ a_1 = \EE[\xi f(\xi)], \ a_2 = \frac{\EE[(\xi^2-1) f(\xi)]}{\sqrt 2} = \frac{\EE[\xi^2 f(\xi)] - a_0}{\sqrt 2}, \ \nu = \Var[f(\xi)] \ge a_1^2 + a_2^2
\]
for \(\xi \sim \NN(0,1)\). These parameters are of crucial significance in determining the eigenspectrum behavior of \(\K\). Note that \(a_0\) will not affect the classification performance, as described below.

\begin{Remark}[On \(a_0\)]\label{rem:a_0}
%(i.e., \(|\mathcal C_1|=|\mathcal C_2|\))
In the present case of balanced mixtures (equal cardinalities for \(\mathcal{C}_1\) and \(\mathcal{C}_2\)), \(a_0\) contributes to the polynomial expansion of \(\K_N\) (and \(\K\)) as a non-informative rank-1 perturbation of the form \(a_0(\one_n \one_n^\T - \I_n)/\sqrt p\). Since \(\one_n\) is orthogonal to the ``class-information vector'' \([\one_{n/2},-\one_{n/2}]\), its presence does not impact the classification performance.\footnote{If mixtures are unbalanced, the vector \(\one_n\) may tend to ``pull'' eigenvectors aligned to \([\one_{n_1},-\one_{n_2}]\), with \(n_i\) the cardinality in \(\mathcal{C}_i\), so away from purely noisy eigenvectors and thereby impacting classification performance. See \cite{couillet2016kernel} for similar considerations.}
\end{Remark}

\subsection{Limiting spectrum of \texorpdfstring{\(\K_N\)}{KN}}

%To characterize the eigenspectrum of large random matrices, the Stieltjes transform is a powerful tool to study the limiting eigenvalue measure \(\mu\) of interest, if exists, defined as \(m(z) = \int_\RR \frac1{t-z} d \mu(t)\), for \(z \in \CC\) not in the support of \(\mu\). One can recover all points of continuity of \(\mu\) via the inverse formula \(\mu(x) = \frac1\pi \lim_{y\to 0+} \int_\RR \Im[m(t+ \imath y)] dt\). It turns out that the Stieltjes transform is a very efficient, is not the only, way to study the spectral behavior of large dimensional random matrices.

It was shown in \cite{cheng2013spectrum,do2013spectrum} that, for independent \(\z_i\)'s with independent entries, the \emph{empirical spectral measure} \(\mathcal L_n=\frac1n\sum_{i=1}^n\delta_{\lambda_i(\K_N)}\) of the null model \(\K_N\) has an asymptotically deterministic behavior as \(n,p \to \infty\) with \(p/n\to \bar c\in(0,\infty)\). % under the form of the associated Stieltjes transform.
\begin{Theorem}[\cite{cheng2013spectrum,do2013spectrum}]\label{theo:lsd}
Let \(p/n = c \to \bar c \in (0,\infty)\) and Assumption~\ref{ass:poly} hold. % for \(\z_i\sim\mathcal N(0,\I_p)\). 
Then, the empirical spectral measure \(\mathcal L_n\)
%\footnote{The empirical spectral measure \(\mu_\A\) of a random matrix \(\A \in \RR^{n \times n}\) is the defined as the normalized counting measure of the eigenvalues \(\lambda_1(\A), \ldots, \lambda_n(\A)\) of \(\A\): \(\mu_\A(x) = \frac1n \sum_{i=1}^n \delta_{\lambda_i(\A)} (x)\). If the random \(\mu_\A\) converges, as \(n\to\infty\), to some non-random limit \(\mu\), we call \(\mu\) the limiting spectral measure of \(\A\).} 
of \(\K_N\) defined in \eqref{eq:def-K-n} converges weakly and almost surely to a probability measure \(\mathcal L\). The latter is uniquely defined through its Stieltjes transform \(m:\CC^+\to\CC^+\), \(z\mapsto \int (t-z)^{-1}\mathcal L(dt)\), given as the unique solution in \(\CC^+\) of the (cubic) equation\footnote{\(\CC^+\equiv \{z\in\CC,~\Im[z]>0\}\). We also recall that, for \(m(z)\) the Stieltjes transform of a measure \(\mu\), \(\mu\) can be obtained from \(m(z)\) via \(\mu([a,b])=\lim_{\epsilon\downarrow 0}\frac1\pi\int_a^b\Im[m(x+\imath \epsilon)]dx\) for all \(a<b\) continuity points of \(\mu\).}
\[
    -\frac1{m(z)} = z + \frac{a_1^2 m(z)}{ c + a_1 m(z) } + \frac{\nu - a_1^2}{c} m(z).
\]
\end{Theorem}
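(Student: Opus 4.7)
The plan is to exploit the orthogonal polynomial expansion \eqref{eq:def-expansion} to decompose
\[
  \K_N = \sum_{l \ge 1} a_{l,p}\, \K_N^{(l)}, \qquad [\K_N^{(l)}]_{ij} = \delta_{i \neq j}\, \frac{P_{l,p}(\z_i^\T \z_j/\sqrt p)}{\sqrt p},
\]
where the $l=0$ term vanishes by Assumption~\ref{ass:poly}(iii). Assumption~\ref{ass:poly}(i) permits truncation at a finite level $L$ with a Hilbert--Schmidt remainder satisfying $\EE \| \sum_{l > L} a_{l,p} \K_N^{(l)} \|_F^2 / n \le \epsilon / \bar c$ uniformly in $p$; standard Hoffman--Wielandt-type perturbation bounds then control the Stieltjes transform up to $O(\sqrt \epsilon)$, reducing the problem to polynomial $f$ of fixed degree, followed by $L \to \infty$.

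I would then identify the qualitatively distinct roles played by the components. Since $P_{1,p}(x) \to x$, the linear piece is $\K_N^{(1)} = \Z^\T \Z / p - \diag(\z_i^\T \z_i / p)$ with $\Z = [\z_1, \ldots, \z_n]$; the diagonal correction converges almost surely to $\I_n$, and after absorbing it into a shift of $z$ this piece is spectrally equivalent to the rescaled Wishart matrix $\Z^\T \Z / p$. For each $l \ge 2$, the key observation is that the variables $\{ P_{l,p}(\z_i^\T \z_j / \sqrt p) \}_{i<j}$, while not independent, form an asymptotically uncorrelated family of unit variance, so that $\K_N^{(l)}$ is spectrally equivalent to a rescaled Wigner matrix with semicircular limit of variance $a_{l,p}^2 / c$.

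To combine the two contributions I would use the method of moments, expanding $\EE \tr(\K_N)^k$ as a sum over closed walks whose edges each carry an orthogonal-polynomial index $l_s \ge 1$. The exact orthogonality $\EE[P_{l,p}(\xi_p) P_{l',p}(\xi_p)] = \delta_{l=l'}$ together with independence of the $\z_i$'s forces surviving walks to pair edges by matching $l$-label: edges with $l = 1$ generate Marchenko--Pastur moments, while edges with $l \ge 2$ generate semicircle moments whose variances add to $\sum_{l \ge 2} a_{l,p}^2 / c \to (\nu - a_1^2) / \bar c$. Thus $\K_N^{(1)}$ and $\sum_{l \ge 2} a_{l,p} \K_N^{(l)}$ are asymptotically free, and the free additive convolution of their limit laws has $R$-transform equal to the sum of the Marchenko--Pastur part $a_1^2 w / (c + a_1 w)$ and the semicircle part $(\nu - a_1^2) w / c$; substituting into $-1/m(z) = z + R(m(z))$ yields the displayed cubic fixed-point equation, with uniqueness in $\CC^+$ following from standard Stieltjes-transform arguments.

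The principal difficulty is the rigorous control of cross-edge joint moments: while $\EE[P_{l,p}(\xi_p) P_{l',p}(\xi_p)] = \delta_{l=l'}$ holds exactly for a single pair, the variables indexed by distinct pairs share underlying vectors $\z_i$, so their higher joint cumulants must be shown to contribute only subleading terms in the walk expansion. Controlling this requires the bounded-moment hypothesis on the entries of $\z$ together with quantitative stability of the family $\{P_{l,p}\}$ under the perturbation $\mu_p \to \NN(0,1)$, and a careful accounting of walks with several high-$l$ edges incident to a common vertex; this combinatorial analysis is precisely what is executed in \cite{cheng2013spectrum, do2013spectrum}, and is where the universality of the limit ultimately rests.
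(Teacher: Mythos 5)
This theorem is imported: the paper offers no proof of it and simply cites \cite{cheng2013spectrum,do2013spectrum}, so the relevant comparison is with the arguments in those references. Your outline reconstructs their strategy faithfully and correctly: the expansion of \(\K_N\) along the orthonormal polynomials \(P_{l,p}\), the \(L^2\)-truncation controlled via the normalized Frobenius norm (your bound \(\EE\|\sum_{l>L}a_{l,p}\K_N^{(l)}\|_F^2/n\le \epsilon n/p\) is the right estimate, and Hoffman--Wielandt does convert it into Stieltjes-transform stability), the identification of the \(l=1\) piece with the centered companion Wishart matrix \(\Z^\T\Z/p-\I_n\), the \(l\ge 2\) pieces with a Wigner-type matrix of aggregate variance \(\sum_{l\ge2}a_{l,p}^2/c\to(\nu-a_1^2)/\bar c\), and the limit as the free additive convolution of a scaled Mar\v{c}enko--Pastur law and a semicircle law; I checked that your two \(R\)-transform summands do reproduce the displayed cubic equation under the convention \(-1/m=z+R(m)\) you adopt. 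The one place where your text is a sketch rather than a proof is exactly the one you flag: ``asymptotically uncorrelated of unit variance'' does not by itself yield a Wigner/semicircle limit for \(\K_N^{(l)}\), nor does pairwise orthogonality of the \(P_{l,p}\) yield asymptotic freeness of \(\K_N^{(1)}\) and the remainder, because entries sharing an index \(i\) are genuinely dependent through \(\z_i\); the moment/walk combinatorics showing that these dependencies contribute only subleading terms is the substantive content of the cited proofs (Stieltjes-transform/self-consistent-equation arguments in \cite{cheng2013spectrum}, moment-method universality in \cite{do2013spectrum}). So your proposal is a correct and well-organized road map that matches the literature the paper relies on, but, like the paper itself, it ultimately defers the hard step to those references rather than executing it.
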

%This result, originally proved for the Gaussian case \(\z_i \sim \NN(\zo, \I_p)\) in \cite{cheng2013spectrum}, is then generalized in \cite{do2013spectrum} to any random vector \(\z\) having i.i.d.\@ entries of zero mean, unit variance and finite (absolute) moments, i.e., \(\EE |z_i|^k \le C_k\) for all \(k \ge 1\) and \(C_k\) independent of \(n,p\). 

Theorem~\ref{theo:lsd} is ``universal'' with respect to the law of the (independent) entries of \(\z_i\). While universality is classical in random matrix results, with mostly first and second order statistics involved, the present universality result is much less obvious since (i) the nonlinear application \(f(\x_i^\T\x_j/\sqrt{p})\) depends in an intricate manner on all moments of \(\x_i^\T\x_j\) and (ii) the entries of \(\K_N\) are strongly dependent. In essence, universality still holds here because the convergence speed to Gaussian of \(\x_i^\T\x_j/\sqrt{p}\) is sufficiently fast to compensate the residual impact of higher order moments in the spectrum of \(\K_N\).

%As we shall see later, this ``universality'' holds not only for the (limiting) spectral measure of the null model \(\K_N\), but also for those isolated and informative eigenvalue/vector pairs that carry the class structural information of the data.
As an illustration, Figure~\ref{fig:eigs-K-N} compares the empirical spectral measure of \(\K_N\) to the limiting measure \(\mu\) of Theorem~\ref{theo:lsd}.\footnote{For all figures in this article, the eigenvalues (that produce the empirical histograms) as well as the associated eigenvectors are computed by MATLAB's \texttt{eig(s)} function and correspond to a \emph{single} realization of the random kernel matrix \(\K\) or \(\K_N\).}

\medskip

From a technical viewpoint, the objective of the article is to go beyond the null model described in Theorem~\ref{theo:lsd} by providing a tractable \emph{random matrix equivalent} \(\tilde\K\) for the kernel matrix \(\K\), in the sense that \(\|\K-\tilde\K\|\to 0\) almost surely in operator norm, as \(n,p \to \infty\). This convergence allows one to identify the eigenvalues and isolated eigenvectors (that can be used for spectral clustering purpose) of \(\K\) by means of those of \(\tilde\K\), see for instance \cite[Corollary~4.3.15]{horn2012matrix}. More importantly, while not visible from the expression of \(\K\), the impact of the mixture model (\(\bmu_1,\bmu_2,\E_1,\E_2\)) on \(\K\) is readily accessed from \(\tilde\K\) and easily related to the Hermite coefficients ( \(a_1,a_2,\nu\) ) of \(f\). This allows us to further investigate how the choice of \(f\) impacts the asymptotically feasibility and efficiency of spectral clustering from the top eigenvectors of \(\K\).

\begin{figure}[htb]
\centering
    \begin{subfigure}[c]{0.25\textwidth}
        \centering
        \begin{tikzpicture}[font=\footnotesize,spy using outlines]
        \renewcommand{\axisdefaulttryminticks}{4} 
        \pgfplotsset{every major grid/.style={densely dashed}}       
        \tikzstyle{every axis y label}+=[yshift=-10pt] 
        \tikzstyle{every axis x label}+=[yshift=5pt]
        \pgfplotsset{every axis legend/.style={cells={anchor=west},fill=white,
        at={(0.98,1)}, anchor=north east, font=\footnotesize }}
        \begin{axis}[
        width=1.35\textwidth,
        height=1.2\textwidth,
        xmin=-3.5,
        ymin=0,
        xmax=11,
        ymax=0.24,
        ytick={0,0.2,0.4,0.6,0.8},
        yticklabels = {},
        bar width=2pt,
        grid=major,
        ymajorgrids=false,
        scaled ticks=true,
        xlabel={},
        ylabel={}
        ]
        \addplot+[ybar,mark=none,color=white,fill=BLUE,area legend] coordinates{
        (-3.568400,0.000000)(-3.235239,0.011725)(-2.902078,0.105523)(-2.568916,0.162682)(-2.235755,0.193459)(-1.902594,0.213978)(-1.569433,0.227168)(-1.236272,0.233031)(-0.903111,0.235962)(-0.569950,0.230099)(-0.236789,0.215443)(0.096372,0.199322)(0.429533,0.168544)(0.762694,0.123110)(1.095855,0.087936)(1.429017,0.074746)(1.762178,0.067418)(2.095339,0.057158)(2.428500,0.057158)(2.761661,0.045434)(3.094822,0.045434)(3.427983,0.042502)(3.761144,0.035174)(4.094305,0.035174)(4.427466,0.030778)(4.760627,0.027846)(5.093788,0.021984)(5.426950,0.019053)(5.760111,0.019053)(6.093272,0.010259)(6.426433,0.004397)(6.759594,0.000000)(7.092755,0.000000)(7.425916,0.000000)(7.759077,0.000000)(8.092238,0.000000)(8.425399,0.000000)(8.758560,0.000000)(9.091721,0.000000)(9.424883,0.000000)(9.758044,0.000000)(10.091205,0.000000)(10.424366,0.000000)(10.757527,0.000000)(11.090688,0.000000)(11.423849,0.000000)(11.757010,0.000000)(12.090171,0.000000)(12.423332,0.000000)(12.756493,0.000000)
        };
        %\addlegendentry{{Eigenvalues of \(\K_N\)}};
        \addplot[smooth,RED,line width=1pt] plot coordinates{
        (-3.066958,0.000003)(-2.848848,0.082674)(-2.630738,0.129808)(-2.412628,0.160374)(-2.194517,0.182844)(-1.976407,0.199984)(-1.758297,0.213049)(-1.540187,0.222790)(-1.322077,0.229590)(-1.103967,0.233710)(-0.885857,0.235288)(-0.667747,0.234330)(-0.449637,0.230842)(-0.231527,0.224631)(-0.013417,0.215437)(0.204694,0.202819)(0.422804,0.186004)(0.640914,0.163717)(0.859024,0.134390)(1.077134,0.103928)(1.295244,0.086756)(1.513354,0.077201)(1.731464,0.070533)(1.949574,0.065270)(2.167684,0.060846)(2.385794,0.056994)(2.603905,0.053527)(2.822015,0.050354)(3.040125,0.047417)(3.258235,0.044694)(3.476345,0.042111)(3.694455,0.039636)(3.912565,0.037251)(4.130675,0.034946)(4.348785,0.032711)(4.566895,0.030511)(4.785005,0.028332)(5.003116,0.026153)(5.221226,0.023958)(5.439336,0.021754)(5.657446,0.019400)(5.875556,0.016975)(6.093666,0.014244)(6.311776,0.011139)(6.529886,0.007062)(6.747996,0.000000)(6.966106,0.000000)
        };
        \coordinate (spike) at (10.42,0.002);
        \coordinate (spike_spy) at (8,0.15);
        \end{axis}
        \begin{scope}
          \spy[black!50!white,size=1cm,circle,connect spies,magnification=10] on (spike) in node [fill=none] at (spike_spy);
        \end{scope}
        \end{tikzpicture}
        \caption{Eigenvalues of \(\K_N\)}
    \label{fig:eigs-K-N}
    \end{subfigure}%
    \hfill{}%
    \begin{subfigure}[c]{0.25\textwidth}
        \centering
        \begin{tikzpicture}[font=\footnotesize,spy using outlines]
        \renewcommand{\axisdefaulttryminticks}{4} 
        \pgfplotsset{every major grid/.style={densely dashed}}       
        \tikzstyle{every axis y label}+=[yshift=-10pt] 
        \tikzstyle{every axis x label}+=[yshift=5pt]
        \pgfplotsset{every axis legend/.style={cells={anchor=west},fill=white,
        at={(0.98,1)}, anchor=north east, font=\footnotesize }}
        \begin{axis}[
        width=1.35\textwidth,
        height=1.2\textwidth,
        xmin=-3.5,
        ymin=0,
        xmax=11,
        ymax=0.24,
        ytick={0,0.2,0.4,0.6,0.8},
        yticklabels = {},
        bar width=2pt,
        grid=major,
        ymajorgrids=false,
        scaled ticks=true,
        xlabel={},
        ylabel={}
        ]
        \addplot+[ybar,mark=none,color=white,fill=BLUE,area legend] coordinates{
        (-3.568400,0.000000)(-3.235239,0.011725)(-2.902078,0.106989)(-2.568916,0.158285)(-2.235755,0.194925)(-1.902594,0.212512)(-1.569433,0.225703)(-1.236272,0.235962)(-0.903111,0.233031)(-0.569950,0.230099)(-0.236789,0.218375)(0.096372,0.200787)(0.429533,0.168544)(0.762694,0.126042)(1.095855,0.087936)(1.429017,0.076211)(1.762178,0.064486)(2.095339,0.061555)(2.428500,0.052762)(2.761661,0.046899)(3.094822,0.045434)(3.427983,0.041037)(3.761144,0.036640)(4.094305,0.033709)(4.427466,0.033709)(4.760627,0.026381)(5.093788,0.021984)(5.426950,0.017587)(5.760111,0.017587)(6.093272,0.010259)(6.426433,0.002931)(6.759594,0.000000)(7.092755,0.000000)(7.425916,0.000000)(7.759077,0.000000)(8.092238,0.000000)(8.425399,0.000000)(8.758560,0.000000)(9.091721,0.000000)(9.424883,0.000000)(9.758044,0.000000)(10.091205,0.000000)(10.424366,0.002)(10.757527,0.000000)(11.090688,0.000000)(11.423849,0.000000)(11.757010,0.000000)(12.090171,0.000000)(12.423332,0.000000)(12.756493,0.000000)
        };
        %\addlegendentry{{Eigenvalues of \(\K\)}};
        \addplot[smooth,RED,line width=1pt] plot coordinates{
        (-3.066958,0.000003)(-2.848848,0.082674)(-2.630738,0.129808)(-2.412628,0.160374)(-2.194517,0.182844)(-1.976407,0.199984)(-1.758297,0.213049)(-1.540187,0.222790)(-1.322077,0.229590)(-1.103967,0.233710)(-0.885857,0.235288)(-0.667747,0.234330)(-0.449637,0.230842)(-0.231527,0.224631)(-0.013417,0.215437)(0.204694,0.202819)(0.422804,0.186004)(0.640914,0.163717)(0.859024,0.134390)(1.077134,0.103928)(1.295244,0.086756)(1.513354,0.077201)(1.731464,0.070533)(1.949574,0.065270)(2.167684,0.060846)(2.385794,0.056994)(2.603905,0.053527)(2.822015,0.050354)(3.040125,0.047417)(3.258235,0.044694)(3.476345,0.042111)(3.694455,0.039636)(3.912565,0.037251)(4.130675,0.034946)(4.348785,0.032711)(4.566895,0.030511)(4.785005,0.028332)(5.003116,0.026153)(5.221226,0.023958)(5.439336,0.021754)(5.657446,0.019400)(5.875556,0.016975)(6.093666,0.014244)(6.311776,0.011139)(6.529886,0.007062)(6.747996,0.000000)(6.966106,0.000000)
        };
        \coordinate (spike) at (10.42,0.002);
        \coordinate (spike_spy) at (8,0.15);
        \end{axis}
        \begin{scope}
          \spy[black!50!white,size=1cm,circle,connect spies,magnification=10] on (spike) in node [fill=none] at (spike_spy);
        \end{scope}
        \end{tikzpicture}
        \caption{Eigenvalues of \(\K\)}
      \label{fig:eigs-K}
    \end{subfigure}
    \hfill{}%
    \begin{subfigure}[c]{0.48\textwidth}
    \centering
        \begin{tikzpicture}[font=\footnotesize,spy using outlines]
        \renewcommand{\axisdefaulttryminticks}{4} 
        \pgfplotsset{every major grid/.style={densely dashed}}       
        \tikzstyle{every axis y label}+=[yshift=-10pt] 
        \tikzstyle{every axis x label}+=[yshift=5pt]
        \pgfplotsset{every axis legend/.style={cells={anchor=west},fill=white,
        at={(0.98,1)}, anchor=north east, font=\footnotesize}}
        \begin{axis}[
        width=1\textwidth,
        height=.4\textwidth,
        xmin=0,
        ymin=-0.05,
        xmax=410,
        ymax=0.05,
        xtick = \empty,
        ytick = \empty,
        grid=major,
        ymajorgrids=false,
        scaled ticks=true,
        xlabel={},
        ylabel={}
        ]
        \addplot[smooth,BLUE,line width=.5pt] coordinates{
        (1,-0.011715)(2,0.012531)(3,-0.027794)(4,-0.030240)(5,-0.015226)(6,-0.037121)(7,-0.030553)(8,0.016090)(9,-0.003953)(10,0.025625)(11,0.020208)(12,-0.022395)(13,0.019704)(14,-0.000256)(15,0.013523)(16,-0.007396)(17,-0.026061)(18,0.016325)(19,0.019531)(20,-0.001136)(21,0.019073)(22,0.029734)(23,0.008530)(24,-0.014253)(25,0.003471)(26,-0.020142)(27,0.006559)(28,-0.027555)(29,-0.000764)(30,0.032121)(31,-0.030922)(32,-0.026899)(33,-0.006038)(34,0.005579)(35,0.035274)(36,-0.000473)(37,0.005314)(38,0.014083)(39,-0.013787)(40,0.028475)(41,-0.026261)(42,-0.002527)(43,0.027732)(44,-0.019740)(45,0.031170)(46,-0.032430)(47,-0.024088)(48,0.012646)(49,0.014398)(50,0.062969)(51,0.000267)(52,0.003137)(53,0.004329)(54,-0.004865)(55,-0.005026)(56,0.020446)(57,0.011469)(58,-0.026837)(59,-0.008851)(60,0.006982)(61,0.004718)(62,-0.007441)(63,-0.033710)(64,-0.021509)(65,-0.012206)(66,0.002890)(67,-0.001203)(68,0.049037)(69,-0.018624)(70,0.002866)(71,-0.010115)(72,0.006862)(73,-0.023647)(74,-0.014324)(75,-0.003853)(76,-0.026882)(77,-0.004939)(78,0.024707)(79,-0.007674)(80,-0.009139)(81,-0.036444)(82,-0.010573)(83,0.003541)(84,0.025876)(85,0.003965)(86,-0.018060)(87,-0.027360)(88,0.010413)(89,0.011453)(90,0.021887)(91,0.026809)(92,0.040197)(93,0.007157)(94,-0.055454)(95,-0.007238)(96,-0.014252)(97,0.038808)(98,-0.017097)(99,0.028391)(100,0.031238)(101,0.017671)(102,-0.034370)(103,-0.011936)(104,-0.028498)(105,-0.010952)(106,0.019387)(107,-0.002971)(108,0.002943)(109,-0.012868)(110,-0.018102)(111,0.027281)(112,0.037245)(113,-0.008018)(114,0.021569)(115,-0.023933)(116,-0.020170)(117,0.004925)(118,0.046657)(119,0.031454)(120,0.017303)(121,0.026365)(122,-0.028121)(123,-0.006409)(124,-0.016269)(125,0.023254)(126,-0.003808)(127,0.013186)(128,-0.007955)(129,-0.019387)(130,-0.018305)(131,-0.022077)(132,-0.000660)(133,0.019361)(134,-0.032222)(135,0.016120)(136,-0.006875)(137,-0.017426)(138,0.000437)(139,-0.038481)(140,-0.019999)(141,0.025901)(142,-0.004424)(143,-0.006416)(144,-0.044412)(145,0.008661)(146,0.015671)(147,0.037819)(148,0.004807)(149,0.018446)(150,-0.000143)(151,-0.026202)(152,0.018780)(153,-0.026390)(154,-0.022479)(155,-0.016228)(156,-0.003798)(157,-0.003769)(158,0.001358)(159,0.032340)(160,-0.051862)(161,0.008178)(162,-0.043018)(163,0.002124)(164,0.040079)(165,-0.009662)(166,-0.011518)(167,-0.004586)(168,-0.019682)(169,-0.005950)(170,0.027427)(171,-0.008948)(172,0.016257)(173,0.026335)(174,-0.042267)(175,0.029226)(176,-0.039527)(177,-0.018085)(178,0.001489)(179,-0.006369)(180,0.010262)(181,-0.012423)(182,0.009232)(183,0.001786)(184,-0.015002)(185,0.002656)(186,-0.008928)(187,-0.023221)(188,-0.018259)(189,-0.019707)(190,-0.009501)(191,-0.003750)(192,-0.038673)(193,0.022828)(194,-0.015598)(195,0.015079)(196,-0.022833)(197,-0.008628)(198,-0.002358)(199,0.004420)(200,0.000489)(201,-0.019458)(202,-0.025238)(203,0.006395)(204,-0.012608)(205,-0.052210)(206,0.013402)(207,-0.022485)(208,-0.034535)(209,0.003942)(210,0.000450)(211,-0.005915)(212,-0.009412)(213,0.018776)(214,-0.013030)(215,-0.009979)(216,-0.049411)(217,0.003371)(218,0.018449)(219,0.020155)(220,-0.016505)(221,0.022274)(222,0.013154)(223,-0.003225)(224,-0.013397)(225,-0.020955)(226,0.018021)(227,-0.035373)(228,-0.043181)(229,0.028185)(230,-0.021984)(231,-0.041526)(232,-0.020866)(233,-0.010690)(234,0.006533)(235,-0.007776)(236,-0.003506)(237,0.003765)(238,0.017719)(239,-0.009984)(240,0.002649)(241,0.007090)(242,0.017665)(243,-0.004693)(244,0.001756)(245,-0.027000)(246,-0.013154)(247,-0.026428)(248,-0.031995)(249,0.005186)(250,0.003093)(251,0.009872)(252,-0.019265)(253,0.030664)(254,-0.004257)(255,0.012209)(256,-0.007596)(257,-0.038286)(258,0.050828)(259,-0.002876)(260,-0.009161)(261,0.016545)(262,-0.004954)(263,0.006202)(264,-0.008541)(265,-0.045649)(266,0.058688)(267,0.004840)(268,0.003162)(269,-0.057833)(270,-0.017085)(271,-0.010695)(272,0.030057)(273,0.041790)(274,0.019473)(275,-0.005049)(276,0.018542)(277,0.032443)(278,-0.012938)(279,0.029452)(280,-0.039555)(281,-0.006468)(282,0.020495)(283,-0.003898)(284,-0.009396)(285,0.012368)(286,0.005583)(287,-0.008755)(288,0.005478)(289,0.027360)(290,-0.038149)(291,0.002054)(292,-0.014340)(293,0.002171)(294,-0.013092)(295,-0.028746)(296,0.001423)(297,-0.019940)(298,-0.016542)(299,-0.007645)(300,-0.022217)(301,0.013378)(302,-0.048955)(303,0.023162)(304,-0.008868)(305,0.024395)(306,0.038562)(307,0.017759)(308,0.035415)(309,0.004390)(310,-0.010585)(311,0.000462)(312,0.015951)(313,-0.001950)(314,-0.023242)(315,-0.002266)(316,0.023946)(317,0.013529)(318,0.029976)(319,-0.002477)(320,-0.000897)(321,0.026515)(322,-0.020802)(323,-0.008562)(324,-0.013248)(325,0.001951)(326,0.019512)(327,-0.007248)(328,-0.023327)(329,-0.014106)(330,0.003149)(331,0.020527)(332,-0.007463)(333,0.025948)(334,-0.024614)(335,-0.039665)(336,-0.057622)(337,-0.047727)(338,0.006719)(339,0.003572)(340,0.009234)(341,-0.009145)(342,0.005749)(343,0.000662)(344,-0.026630)(345,0.017996)(346,0.000553)(347,-0.006401)(348,0.003923)(349,0.012576)(350,-0.007983)(351,-0.002362)(352,-0.000997)(353,0.005121)(354,0.021890)(355,-0.015039)(356,0.000107)(357,-0.027485)(358,-0.016829)(359,0.011405)(360,-0.007731)(361,0.000496)(362,-0.023229)(363,0.012753)(364,0.033128)(365,-0.005315)(366,-0.009822)(367,0.022006)(368,0.002224)(369,-0.016445)(370,-0.021965)(371,0.014422)(372,0.014033)(373,-0.056615)(374,-0.011001)(375,-0.022879)(376,-0.022537)(377,-0.017803)(378,-0.017631)(379,0.009657)(380,-0.004528)(381,0.045093)(382,0.002799)(383,-0.010216)(384,0.001318)(385,0.006346)(386,-0.042403)(387,0.011955)(388,0.013178)(389,-0.021439)(390,-0.002821)(391,-0.006260)(392,-0.005499)(393,-0.007782)(394,0.030482)(395,-0.023903)(396,-0.027228)(397,0.021028)(398,-0.031982)(399,-0.000668)(400,-0.008137)(401,-0.056514)(402,-0.021760)(403,0.031758)(404,0.046521)(405,-0.020261)(406,0.002335)(407,0.059908)(408,-0.001517)(409,-0.015916)(410,0.010995)
        };
        \end{axis}
        \end{tikzpicture}
    \vfill{}%
        \begin{tikzpicture}[font=\footnotesize,spy using outlines]
        \renewcommand{\axisdefaulttryminticks}{4} 
        \pgfplotsset{every major grid/.style={densely dashed}}       
        \tikzstyle{every axis y label}+=[yshift=-10pt] 
        \tikzstyle{every axis x label}+=[yshift=5pt]
        \pgfplotsset{every axis legend/.style={cells={anchor=west},fill=white,
        at={(0.98,1)}, anchor=north east, font=\footnotesize}}
        \begin{axis}[
        width=1\textwidth,
        height=.4\textwidth,
        xmin=0,
        ymin=-0.05,
        xmax=410,
        ymax=0.05,
        ytick={-0.05, 0, 0.05},
        xtick = \empty,
        ytick = \empty,
        grid=major,
        ymajorgrids=false,
        scaled ticks=true,
        xlabel={},
        ylabel={}
        ]
        \addplot[smooth,BLUE,line width=.5pt] coordinates{
        (1,-0.023638)(2,-0.031984)(3,-0.022181)(4,-0.030033)(5,-0.004275)(6,-0.021977)(7,-0.029416)(8,-0.027339)(9,-0.016521)(10,-0.014694)(11,-0.014565)(12,-0.019319)(13,-0.005972)(14,-0.037402)(15,-0.039795)(16,-0.012609)(17,-0.009725)(18,-0.033062)(19,-0.015511)(20,-0.055743)(21,-0.030054)(22,-0.025133)(23,-0.006374)(24,-0.008458)(25,-0.025599)(26,0.001589)(27,-0.035379)(28,-0.042342)(29,-0.017546)(30,-0.021719)(31,-0.009429)(32,-0.014639)(33,-0.021904)(34,-0.021503)(35,0.005928)(36,-0.024085)(37,-0.017644)(38,-0.011611)(39,-0.029971)(40,-0.029029)(41,0.003706)(42,-0.018345)(43,-0.042147)(44,-0.019248)(45,-0.027750)(46,-0.042358)(47,-0.006389)(48,-0.033909)(49,-0.015170)(50,-0.008413)(51,-0.027270)(52,-0.013321)(53,-0.013603)(54,-0.015276)(55,-0.025779)(56,-0.017351)(57,-0.015547)(58,-0.005784)(59,-0.025327)(60,-0.041577)(61,-0.032210)(62,-0.019762)(63,-0.009955)(64,-0.025224)(65,-0.031653)(66,-0.011798)(67,-0.020668)(68,-0.034331)(69,-0.009106)(70,-0.033746)(71,-0.007916)(72,-0.006970)(73,-0.029057)(74,-0.026248)(75,-0.011221)(76,-0.025376)(77,-0.005132)(78,-0.005895)(79,-0.018837)(80,0.009850)(81,-0.002948)(82,-0.011543)(83,-0.033404)(84,-0.011749)(85,-0.012302)(86,0.018077)(87,-0.020054)(88,-0.005482)(89,0.009018)(90,-0.004262)(91,-0.010487)(92,0.003933)(93,-0.025029)(94,-0.024642)(95,-0.010838)(96,-0.040499)(97,-0.020094)(98,-0.025087)(99,-0.010626)(100,-0.005969)(101,-0.017701)(102,-0.015499)(103,-0.007370)(104,-0.020520)(105,0.007947)(106,-0.044929)(107,-0.022903)(108,-0.001898)(109,-0.012967)(110,-0.053032)(111,-0.009326)(112,-0.014880)(113,-0.026857)(114,-0.014744)(115,-0.013125)(116,-0.019363)(117,-0.003956)(118,-0.004051)(119,-0.038141)(120,-0.021024)(121,-0.014619)(122,-0.019163)(123,-0.011047)(124,-0.020865)(125,-0.016889)(126,-0.010169)(127,-0.010535)(128,-0.026648)(129,-0.023921)(130,-0.016251)(131,-0.033863)(132,-0.039194)(133,-0.042004)(134,-0.001673)(135,-0.048972)(136,-0.035162)(137,-0.032648)(138,-0.006821)(139,-0.009101)(140,-0.000733)(141,-0.019585)(142,-0.020028)(143,-0.025948)(144,-0.022857)(145,-0.020852)(146,-0.006202)(147,-0.026585)(148,-0.030858)(149,-0.026582)(150,-0.016563)(151,-0.006086)(152,-0.000670)(153,-0.030223)(154,-0.004019)(155,-0.019043)(156,-0.004086)(157,-0.018079)(158,-0.032985)(159,-0.026705)(160,-0.023993)(161,-0.023187)(162,-0.033193)(163,-0.022953)(164,-0.029284)(165,-0.031144)(166,-0.016058)(167,0.004480)(168,-0.026223)(169,-0.021611)(170,-0.002775)(171,-0.024127)(172,-0.026721)(173,0.008759)(174,-0.016903)(175,-0.011480)(176,-0.025468)(177,-0.022468)(178,-0.010416)(179,-0.016760)(180,-0.013272)(181,-0.025042)(182,0.006078)(183,-0.024956)(184,-0.001244)(185,-0.031327)(186,-0.016983)(187,-0.015879)(188,-0.020241)(189,-0.024773)(190,-0.021978)(191,-0.040619)(192,-0.007289)(193,-0.021067)(194,-0.026644)(195,-0.018298)(196,-0.020268)(197,-0.035630)(198,-0.011853)(199,-0.045588)(200,-0.024422)(201,-0.026125)(202,-0.013029)(203,-0.000581)(204,-0.024167)(205,-0.013381)(206,0.025236)(207,0.004873)(208,0.006658)(209,0.020798)(210,0.015812)(211,0.019857)(212,-0.000612)(213,0.005407)(214,0.013365)(215,0.035109)(216,0.040373)(217,-0.001601)(218,0.032844)(219,0.020736)(220,0.004535)(221,0.022083)(222,0.008424)(223,0.009716)(224,0.003770)(225,0.015528)(226,0.010746)(227,0.013942)(228,0.015888)(229,0.023679)(230,0.004168)(231,-0.004594)(232,0.007384)(233,0.025717)(234,0.019123)(235,0.023000)(236,-0.002698)(237,0.004516)(238,0.024199)(239,0.013011)(240,0.015605)(241,-0.009503)(242,0.002564)(243,0.045318)(244,0.025936)(245,-0.006086)(246,0.022159)(247,0.012401)(248,0.026590)(249,0.010994)(250,0.030489)(251,0.009119)(252,0.022665)(253,0.020190)(254,-0.004280)(255,0.024017)(256,0.033715)(257,0.007714)(258,0.022160)(259,0.028375)(260,0.007638)(261,0.024759)(262,0.022408)(263,0.023641)(264,0.018793)(265,0.035382)(266,0.009903)(267,0.016624)(268,0.025422)(269,0.005735)(270,0.035081)(271,0.023018)(272,0.016056)(273,0.014880)(274,0.015077)(275,0.011003)(276,0.016930)(277,0.023992)(278,0.014667)(279,0.026733)(280,0.034578)(281,0.014382)(282,0.019658)(283,0.038417)(284,0.001915)(285,0.034446)(286,0.025692)(287,0.025581)(288,0.036329)(289,0.034312)(290,0.032703)(291,0.019636)(292,0.022448)(293,0.029788)(294,0.020024)(295,0.031004)(296,0.026791)(297,0.031818)(298,0.014479)(299,0.021754)(300,-0.004457)(301,0.016453)(302,0.007233)(303,0.033406)(304,0.012795)(305,0.033790)(306,0.027290)(307,0.004535)(308,0.018967)(309,0.028302)(310,0.016303)(311,0.040514)(312,0.028683)(313,0.015417)(314,0.020757)(315,0.028428)(316,0.018333)(317,0.016753)(318,0.009479)(319,0.025446)(320,0.036925)(321,0.019219)(322,0.004591)(323,0.007163)(324,0.004887)(325,0.033137)(326,0.009766)(327,0.024515)(328,0.008521)(329,0.004273)(330,0.019115)(331,0.036518)(332,0.022088)(333,0.030253)(334,0.028230)(335,0.026318)(336,0.030227)(337,0.013556)(338,0.018496)(339,0.033695)(340,0.011729)(341,0.003317)(342,0.013729)(343,0.016323)(344,0.012095)(345,0.035356)(346,0.014251)(347,0.028733)(348,-0.004425)(349,0.018973)(350,0.031684)(351,0.009387)(352,0.040105)(353,0.012639)(354,0.009955)(355,0.007931)(356,0.017389)(357,0.013054)(358,0.040082)(359,0.019031)(360,0.045513)(361,0.003029)(362,0.029579)(363,0.010900)(364,0.031816)(365,-0.010752)(366,-0.003258)(367,0.036859)(368,0.029059)(369,0.009251)(370,0.042733)(371,0.042062)(372,-0.008507)(373,0.030864)(374,0.025716)(375,0.043060)(376,0.008022)(377,0.009457)(378,0.018455)(379,0.011123)(380,0.032581)(381,0.025052)(382,0.025053)(383,-0.005526)(384,0.040563)(385,0.019343)(386,0.011535)(387,0.033451)(388,0.012417)(389,0.050854)(390,0.002457)(391,-0.004888)(392,0.017548)(393,0.005268)(394,0.018483)(395,0.014931)(396,-0.013557)(397,0.009085)(398,0.022604)(399,0.030217)(400,0.018360)(401,0.001328)(402,0.011730)(403,0.009271)(404,-0.000163)(405,0.020912)(406,0.014938)(407,0.014348)(408,0.037197)(409,0.018569)(410,0.020137)
        };
        \end{axis}
        \end{tikzpicture}
         \caption{Top eigenvectors of \(\K_N\) (top) and \(\K\) (bottom)}
       \label{fig:eigvecs-K-and-K_N}
    \end{subfigure}
    \caption{Eigenvalue distribution and top eigenvector of \(\K_N\) and \(\K\), together with the limiting spectral measure \(\mathcal L\) (from Theorem~\ref{theo:lsd}) in red; \(f(x) = \sign(x)\), Gaussian \(\z_i\), \(n=2\,048\), \(p=512\), \(\bmu_1 = -[3/2;\ \zo_{p-1}] = -\bmu_2\) and \(\E_1 = \E_2 = \zo\). \( \X = [\x_1, \ldots, \x_n] \in \RR^{p \times n} \) with \(\x_1, \ldots, \x_{n/2} \in \mathcal{C}_1\) and \(\x_{n/2+1}, \ldots, \x_n \in \mathcal{C}_2\). 
    }
\end{figure}

\section{Theoretical results}
\label{sec:main}

The main idea for the asymptotic analysis of \(\K\) comes in two steps: (i) first, by an expansion of \(\x_i^\T\x_j\) as a function of \(\z_i,\z_j\) and the statistical mixture model parameters \(\bmu, \E\), we decompose \(\x_i^\T\x_j\) (under Assumption~\ref{ass:growth-rate}) into successive orders of magnitudes with respect to \(p\); this, as we will show, further allows for a Taylor expansion of \(f(\x_i^\T\x_j/\sqrt{p})\) for at least twice differentiable functions \(f\) around its dominant term \(f(\z_i^\T\z_j/\sqrt{p})\). Then, (ii) we rely on the orthogonal polynomial approach of \cite{cheng2013spectrum} to ``linearize'' the resulting matrix terms \(\{f(\x_i^\T\x_j/\sqrt{p})\}\), \(\{f'(\x_i^\T\x_j/\sqrt{p})\}\) and \(\{f''(\x_i^\T\x_j/\sqrt{p})\}\) (all terms corresponding to higher order derivatives asymptotically vanish) and use Assumption~\ref{ass:poly} to extend the result to arbitrary square-summable \(f\).

\medskip

Our main conclusion is that \(\K\) asymptotically behaves like a matrix \(\tilde\K\) following a so-called ``spiked random matrix model'' in the sense that \(\tilde\K=\K_N+\tilde\K_I\) is the sum of the full-rank ``noise'' matrix \(\K_N\) having compact limiting spectrum (the support of \(\mathcal L\)) and a low-rank ``information'' matrix \(\tilde\K_I\) \cite{baik2005phase,benaych2011eigenvalues}.

%Before tackling the generic case of \(\z_i\) with independent bounded moment entries, we begin with the simpler Gaussian case. 

\subsection{Information-plus-noise decomposition of \texorpdfstring{\(\K\)}{K}}

We first show that \(\K\) can be asymptotically approximated as \(\K_N+\K_I\) with \(\K_N\) defined in \eqref{eq:def-K-n} and \(\K_I\) an additional (so far full-rank) term containing the statistical information of the mixture model.

\medskip

As announced, we start by decomposing \(\x_i^\T\x_j\) into a sequence of terms of successive orders of magnitude using Assumption~\ref{ass:growth-rate} and \(\x_i=\bmu_a+(\I_p+\E_a)^{\frac12}\z_i\), \(\x_j=\bmu_b+(\I_p+\E_b)^{\frac12}\z_j\) for \(\x_i\in\mathcal C_a\) and \(\x_j\in\mathcal C_b\). We have precisely, for \(i\neq j\),
%First note that, under the non-trivial classification condition in Assumption~\ref{ass:growth-rate}, one has, for \(\x_i \in \mathcal{C}_a, \x_j \in \mathcal{C}_b\), \(a,b \in \{1,2\}\) that
\begin{align}
  &\frac{ \x_i^\T \x_j }{\sqrt p } = \frac{ \bmu_a ^\T \bmu_b }{\sqrt p} + \frac1{\sqrt p} (\bmu_a^\T (\I_p + \E_b)^{\frac12 } \z_j + \bmu_b^\T (\I_p + \E_a)^{\frac12 } \z_i) + \frac1{\sqrt p} \z_i^\T (\I_p + \E_a)^{\frac12} (\I_p + \E_b)^{\frac12} \z_j \nonumber \\
  %%%
  &= \underbrace{\frac{ \z_i^\T \z_j }{\sqrt p} }_{O(1)} + \underbrace{\frac{ \z_i^\T (\E_a + \E_b) \z_j }{2\sqrt p} }_{ \equiv \A_{ij} = O(p^{-1/4})} + \underbrace{\frac{ \bmu_a ^\T \bmu_b + \bmu_a^\T \z_j + \bmu_b^\T \z_i }{\sqrt p} - \frac{ \z_i^\T (\E_a - \E_b)^2 \z_j }{8 \sqrt p} }_{\equiv \B_{ij} = O(p^{-1/2})} + o(p^{-1/2}) \label{eq:def-A-B}
\end{align}
where in particular we performed a Taylor expansion of \((\I_p + \E_a)^{\frac12}\) (since \(\|\E_a\|=O(p^{-\frac14})\)) around \(\I_p\), and used the fact that with high probability \(\z_i^\T \E_a\z_j = O(p^{1/4})\) and \(\z_i^\T (\E_a - \E_b)^2 \z_j = O(1)\). 

\medskip

As a consequence of this expansion, for at least twice differentiable \(f\in L^2(\mu_p)\), we have
\[
  \K_{ij} =\frac{f(\x_i^\T \x_j/\sqrt p)}{\sqrt p} = \frac{f(\z_i^\T \z_j/\sqrt{p})}{\sqrt p} + \frac{f'(\z_i^\T \z_j/\sqrt{p}) }{\sqrt p} (\A_{ij} + \B_{ij}) + \frac{f''(\z_i^\T \z_j/\sqrt{p}) }{2 \sqrt p}\A_{ij}^2+ o(p^{-1})
\]
where \(o(p^{-1})\) is understood with high probability and uniformly over \(i,j\in\{1,\ldots,n\}\). This \emph{entry-wise} expansion up to order \(o(p^{-1})\) is sufficient since, \emph{matrix-wise}, if \(\A_{ij}=o(p^{-1})\) uniformly on \(i,j\), from \(\| \A \| \leq p \| \A \|_\infty = p \max_{i,j} |\A_{ij}|\), we have \(\|\A\|=o(1)\) as \(n,p\to\infty\).

\medskip

In the particular case where \(f\) is a monomial of degree \(k\geq 2\), this implies the following result.
\begin{Proposition}[Monomial \(f\)]\label{prop:K-approx}
Under Assumptions~\ref{ass:growth-rate}--\ref{ass:poly}, let \(f(x) = x^k\), \(k \geq 2\). Then, as \(n,p \to \infty\),
\begin{equation}\label{eq:K-decomposition}
  \| \K - (\K_N + \K_I) \| \to 0
\end{equation}
almost surely, with \(\K_N\) defined in \eqref{eq:def-K-n} and
\begin{equation}\label{eq:def-K-I}
  %K_I =  \frac{k}{\sqrt{p}}(\sigma^2 Z^\T Z/\sqrt{p})^{\circ (k-1)} \circ (A+B) + \frac{k(k-1)}{2\sqrt{p}} (\sigma^2 Z^\T Z/\sqrt{p})^{\circ (k-2)} \circ (A)^{\circ 2}.
  \K_I =  \frac{k}{\sqrt{p}}( \Z^\T \Z/\sqrt{p})^{\circ (k-1)} \circ (\A+\B) + \frac{k(k-1)}{2\sqrt{p}} ( \Z^\T \Z/\sqrt{p})^{\circ (k-2)} \circ (\A)^{\circ 2} %- \diag(\cdot)
\end{equation}
for \(\Z = [\z_1, \ldots, \z_n] \in \RR^{p \times n}\) and \(\A,\B\in\RR^{n\times n}\) defined in \eqref{eq:def-A-B} with \(\A_{ii}=\B_{ii}=0\). Here \(\X\circ \Y\) denotes the Hadamard product between \(\X,\Y\) and \(\X^{\circ k}\) the \(k\)-th Hadamard power, i.e., \([\X^{\circ k}]_{ij} = (\X_{ij})^k\).
\end{Proposition}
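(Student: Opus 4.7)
The plan is to exploit the fact that for $f(x)=x^k$ the binomial theorem gives an exact (not merely Taylor) identity, so we only have to keep track of order-of-magnitude cancellations. Writing $u_{ij}=\z_i^\T\z_j/\sqrt p$ and setting $\epsilon_{ij}$ for the $o(p^{-1/2})$ residual in \eqref{eq:def-A-B}, I would expand
\[
  \sqrt p\,\K_{ij} \;=\; \bigl(u_{ij}+\A_{ij}+\B_{ij}+\epsilon_{ij}\bigr)^k \;=\; \sum_{m=0}^{k}\binom{k}{m}u_{ij}^{k-m}\bigl(\A_{ij}+\B_{ij}+\epsilon_{ij}\bigr)^m.
\]
The $m=0$ summand reproduces $\sqrt p\,[\K_N]_{ij}$; the $m=1$ summand contributes $k u_{ij}^{k-1}(\A_{ij}+\B_{ij})$, which matches the first block of $\sqrt p\,\K_I$; and the $m=2$ summand contributes $\binom{k}{2}u_{ij}^{k-2}\A_{ij}^{2}$, which is the second block. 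Everything else will be absorbed into a residual $\R$ for which I must prove $\|\R\|\to 0$.

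Second, I would estimate $\|\R\|$ via the crude bound $\|\R\|\le p\,\|\R\|_\infty$ mentioned right before the proposition statement, so that any entrywise estimate $\|\R\|_\infty=o(p^{-1})$ is enough. Using the orders $u_{ij}=O(1)$, $\A_{ij}=O(p^{-1/4})$, $\B_{ij}=O(p^{-1/2})$, $\epsilon_{ij}=o(p^{-1/2})$ from \eqref{eq:def-A-B}, the candidate residual entries factor as
\[
  \tfrac{1}{\sqrt p}\,u_{ij}^{k-1}\epsilon_{ij},\qquad \tfrac{1}{\sqrt p}\,u_{ij}^{k-2}(\A_{ij}\B_{ij}+\B_{ij}^{2}+\A_{ij}\epsilon_{ij}+\cdots),\qquad \tfrac{1}{\sqrt p}\,u_{ij}^{k-m}\A_{ij}^{m}\ (m\ge 3),
\]
and each of these is $o(p^{-1})$ provided the bounds on $u_{ij},\A_{ij},\B_{ij},\epsilon_{ij}$ hold \emph{uniformly} over $1\le i\neq j\le n$.

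The main obstacle is precisely that uniform-in-$(i,j)$ control. The raw statements in \eqref{eq:def-A-B} are ``with high probability'' at fixed $(i,j)$, whereas we need simultaneous control of $n^2$ bilinear forms in the independent vectors $\z_i,\z_j$ (namely $\z_i^\T\z_j$, $\z_i^\T\E_a\z_j$, $\z_i^\T(\E_a-\E_b)^{2}\z_j$, $\bmu_a^\T\z_i$, and higher powers $u_{ij}^{m}$). The plan is to apply a Hanson-Wright-type concentration inequality for bilinear forms of vectors with bounded moments, combined with Assumption~\ref{ass:growth-rate}(ii) on $\|\E_a\|$, $\tr(\E_a)$, $\|\E_a\|_F$ and $\|\bmu_a\|$, and then a union bound over the $n^2$ pairs. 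Because $\z$ is only assumed to have bounded moments of sufficiently high order rather than being sub-Gaussian, I would use Markov's inequality at a high enough moment, in order to produce probability tails summable in $n$, and then invoke Borel-Cantelli to upgrade ``with high probability'' to ``almost surely''.

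Finally, I would collect the pieces by the triangle inequality: combining the almost-sure bound $\|\R\|\le p\|\R\|_\infty\to 0$ for each of the finitely many residual types enumerated above yields $\|\K-(\K_N+\K_I)\|\to 0$ almost surely, which is the claim. The by-product of this argument is that only the $m\in\{0,1,2\}$ terms and, within $m=2$, only $\A_{ij}^{2}$ survive, which is precisely the structural reason why $\K_I$ takes the Hadamard form in \eqref{eq:def-K-I}.
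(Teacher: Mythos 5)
Your proposal is correct and follows essentially the same route as the paper: the paper Taylor-expands \(f(\x_i^\T\x_j/\sqrt p)\) around \(\z_i^\T\z_j/\sqrt p\) using the order decomposition \eqref{eq:def-A-B} (which, for \(f(x)=x^k\), is exactly your binomial expansion), keeps the \(m\in\{0,1,2\}\) terms with only \(\A_{ij}^2\) surviving at second order, and kills the residual through the same crude bound \(\|\cdot\|\le p\,\|\cdot\|_\infty\) combined with \(o(p^{-1})\) entrywise estimates. Your explicit handling of uniformity over the \(n^2\) pairs via high-moment Markov bounds, a union bound and Borel--Cantelli supplies a step the paper only asserts (``with high probability and uniformly over \(i,j\)''), but it is a refinement of the same argument rather than a different one.
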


Since \(f\in L^2(\mu)\) can be decomposed into its Hermite polynomials, Proposition~\ref{prop:K-approx} along with Theorem~\ref{theo:lsd} allows for an asymptotic quantification of \(\K\). However, the expression of \(\K_I\) in \eqref{eq:def-K-I} does not so far allow for a thorough understanding of the spectrum of \(\K\), due to (i) the delicate Hadamard products between purely random (\(\Z^\T \Z\)) and informative matrices (\(\A, \B\)) and (ii) the fact that \(\K_I\) is full rank (so that the resulting spectral properties of \(\K_N+\K_I\) remains intractable). We next show that, as \(n,p\to\infty\), \(\K_I\) admits a tractable low-rank approximation \(\tilde\K_I\), thereby leading to a spiked-model approximation of \(\K\). 

\subsection{Spiked-model approximation of \texorpdfstring{\(\K\)}{K}}

Let us then consider \(\K_I\) defined in \eqref{eq:def-K-I}, the \((i,j)\) entry of which can be written as the sum of terms containing \(\bmu_a, \bmu_b\) (treated separately) and random variables of the type
\[
  \phi = \frac{C}{\sqrt p} (\x^\T \y /\sqrt p)^\alpha (\x^\T \F \y)^\beta
\]
for independent random vectors \(\x,\y \in \RR^p\) with i.i.d.\@ zero mean, unit variance and finite moments (uniformly on \(p\)) entries, deterministic \(\F \in \RR^{p \times p}\), \(C\in\RR\), \(\alpha\in \mathbb N\) and \(\beta\in\{1,2\}\). %More precisely, according to Proposition~\ref{prop:K-I-approx}, \(\F\) may take the form of \(\frac12 (\E_a + \E_b)/ \sqrt p = O(p^{-3/4})\) or \(- \frac18 (\E_a - \E_b)^2/\sqrt p = O(p^{-1})\) for \(\beta = 1\) and \(\frac12 (\E_a + \E_b)/ \sqrt p\) in the case \(\beta = 2\).

For Gaussian \(\x,\y\), the expectation of \(\phi\) can be explicitly computed via an integral trick \cite{williams1997computing,louart2018random}. For more generic \(\x,\y\) with i.i.d.\@ bounded moment entries, a combinatorial argument controls the higher order moments of the expansion which asymptotically result in (matrix-wise) vanishing terms. See Sections~\ref{sec:sm-expectation-computation-Gaussian}--\ref{sec:sm-proof-prop-K-I-approx} of the supplementary material. This leads to the following result.

\begin{Proposition}[Low rank asymptotics of \(\K_I\)]\label{prop:K-I-approx}
Under Assumptions~\ref{ass:growth-rate}--\ref{ass:poly}, for \(f(x) = x^k\), \(k \ge 2\),
\[
  \| \K_I - \tilde\K_I \| \to 0
\]
almost surely as \(n,p \to \infty\), for \(\K_I\) defined in \eqref{eq:def-K-I} and
\begin{equation}\label{eq:def-bar-K-I}
  \tilde\K_I = \begin{cases} \frac{k!!}p (\J \M^\T \M \J^\T + \J \M^\T \Z + \Z^\T \M \J^\T), & \text{for \(k\) odd}\\ 
  %\frac{k(k-1)!!}p \J \left( \frac12 \mathbf{T} + \frac{3(k-2)}8 \mathbf{T}^2/\sqrt p+ \frac12 \S \right) \J^\T - \diag(\cdot), & \text{for \(k\) even}; \end{cases}
  \frac{k(k-1)!!}{2p} \J \left( \mathbf{T} + \S \right) \J^\T, & \text{for \(k\) even} \end{cases}
\end{equation}
where\footnote{For mental reminder, \(\M\) stands for \emph{means}, \(\mathbf{T}\) accounts for the difference in \emph{traces} of covariance matrices and \(\S\) for the ``shapes'' of the covariances.} 
\[
  \M = [\bmu_1,\bmu_2] \in \RR^{p \times 2}, \ \mathbf{T} = \{ \tr(\E_a + \E_b)/\sqrt p \}_{a,b=1}^2, \ \S = \{ \tr(\E_a \E_b)/ \sqrt p \}_{a,b=1}^2 \in \RR^{2 \times 2}
\]
and \(\J = [\j_1,\j_2] \in \RR^{n \times 2}\) with \(\j_a \in \RR^n\) the canonical vector of class \(\mathcal{C}_a\), i.e., \([\j_a]_i = \delta_{\x_i \in \mathcal{C}_a}\).
\end{Proposition}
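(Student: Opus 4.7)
The plan is to work entry-wise. I substitute the decomposition of $\A_{ij},\B_{ij}$ from \eqref{eq:def-A-B} into \eqref{eq:def-K-I}, opening $[\K_I]_{ij}$ as a finite sum of \emph{atoms} of two kinds: (a) \emph{$\E$-atoms} $p^{-1}(\z_i^\T\z_j/\sqrt p)^{\alpha}(\z_i^\T\F\z_j)^{\beta}$ with $\beta\in\{1,2\}$ and $\F$ a linear combination of $\E_a,\E_b$ or $(\E_a\pm\E_b)^2$; (b) \emph{$\bmu$-atoms} $p^{-1}(\z_i^\T\z_j/\sqrt p)^{\alpha}\bmu_a^\T\z_j$ (and its counterpart in $\z_i$) together with the constant-type $p^{-1}(\z_i^\T\z_j/\sqrt p)^{\alpha}\bmu_a^\T\bmu_b$. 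Each atom is then replaced by a deterministic low-rank equivalent and the residual is controlled in operator norm.

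The first key step is to compute atom expectations. For Gaussian $\z$, I would condition on $\z_i$, use that $(\z_i^\T\z_j,\z_i^\T\F\z_j)\mid\z_i$ is jointly Gaussian, and apply Isserlis' theorem:
\[
\EE\bigl[(\z_i^\T\z_j)^{\alpha}(\z_i^\T\F\z_j)\bigm|\z_i\bigr]=\alpha!!\,(\z_i^\T\F\z_i)\,\|\z_i\|^{\alpha-1}\quad(\alpha\ \text{odd}),
\]
together with the analogous expression for $\beta=2$ in terms of $(\z_i^\T\F\z_i)^2$ and $\|\F\z_i\|^2$. A parallel Stein identity yields a non-zero leading coefficient for the $\bmu$-atoms only when $\alpha$ is even. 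A combinatorial Wick-graph enumeration then extends these identities to generic $\z$ with bounded moments, the non-Gaussian cumulant corrections being subleading under the scalings of Assumption~\ref{ass:growth-rate}.

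Two structural facts drive the matching with $\tilde\K_I$. The \emph{parity rule} ``$\EE[\cdot]=0$ unless $\alpha+\beta$ is even'' enforces the dichotomy between odd and even $k$: for $k$ odd only the $\bmu$-atoms survive in expectation (their $\alpha=k-1$ being even), while for $k$ even only the $\E$-atoms do. The \emph{algebraic identity}
\[
\tfrac14\tr\bigl((\E_a+\E_b)^2\bigr)-\tfrac14\tr\bigl((\E_a-\E_b)^2\bigr)=\tr(\E_a\E_b),
\]
together with the double-factorial reductions $k\cdot(k-2)!!=k!!$ (for $k$ odd) and $(k-1)(k-3)!!=(k-1)!!$ (for $k$ even), then collapses the surviving contributions exactly into the two expressions of \eqref{eq:def-bar-K-I}: the $\bmu$-atoms supply $\frac{k!!}{p}(\J\M^\T\M\J^\T+\J\M^\T\Z+\Z^\T\M\J^\T)$, while the combined $\A_{ij}$, $\A_{ij}^2$ and $(\E_a-\E_b)^2$ contributions supply $\frac{k(k-1)!!}{2p}\J(\mathbf T+\S)\J^\T$.

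The main obstacle is upgrading this entry-wise replacement to operator-norm convergence, since $\tilde\K_I$ has rank at most $4$ whereas the atomic residuals are $n\times n$ polynomials in strongly dependent vectors. I treat residuals atom by atom: for each residual $\mathbf R$, a second-moment computation using Hermite orthogonality (in the spirit of \cite{cheng2013spectrum,do2013spectrum}) bounds $\|\mathbf R\|_F^2$, and a decomposition $\mathbf R=\mathbf R^{\mathrm{lr}}+\mathbf R^{\mathrm{bulk}}$ with $\rank(\mathbf R^{\mathrm{lr}})=O(1)$ upgrades the Frobenius bound to operator norm. The most delicate case is the wrong-parity atoms (e.g. the $\bmu$-atoms in the $k$-even regime): although their entries have zero expectation, their operator norm is not a priori vanishing and must be controlled by a fine concentration on Hadamard products of type $(\Z^\T\Z)^{\circ m}\circ(\J\M^\T\Z)$, relying on $\|\bmu_a\|=O(1)$, $p/n\to\bar c$, and the Hermite expansion of $x^{k-1}$. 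The generic non-Gaussian case is then closed by the graphical argument detailed in Sections~\ref{sec:sm-expectation-computation-Gaussian}--\ref{sec:sm-proof-prop-K-I-approx} of the supplement.
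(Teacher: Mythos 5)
Your overall strategy---expanding $\K_I$ entry-wise into $\bmu$- and $\E$-atoms, computing Gaussian expectations by conditioning and Isserlis, extending to bounded-moment entries by a pairing/combinatorial argument, and letting the parity of $\alpha+\beta$ drive the odd/even dichotomy---is essentially the paper's, and your conditional Isserlis formula, the parity rule, the identity $\tr((\E_a+\E_b)^2)-\tr((\E_a-\E_b)^2)=4\tr(\E_a\E_b)$ and the double-factorial bookkeeping are all correct. Two steps, however, do not go through as written. First, the $\bmu$-atoms are \emph{not} replaced by a deterministic equivalent: for $k$ odd the limit $\frac{k!!}p(\J\M^\T\Z+\Z^\T\M\J^\T)$ is random, and the \emph{unconditional} expectation of the atom $(\z_i^\T\z_j/\sqrt p)^{k-1}\bmu_a^\T\z_j$ is asymptotically zero for every $k$ (it has odd degree in $\z_j$ when $k$ is odd and odd degree in $\z_i$ when $k$ is even), so literally replacing atoms by their expectations erases exactly the informative cross terms. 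What is needed---and what the paper does---is to keep the matrix $\L$ with $\L_{ij}=\frac1p[\J\M^\T\M\J^\T+\J\M^\T\Z+\Z^\T\M\J^\T]_{ij}$ intact and replace only the Hadamard factor $(\Z^\T\Z/\sqrt p)^{\circ(k-1)}$ by its entrywise mean $(k-2)!!\,\one_n\one_n^\T$ ($k$ odd) or $\zo$ ($k$ even); i.e., a conditional rather than a full expectation.

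Second, and more seriously, the operator-norm control you propose would fail. The Frobenius norms of the relevant residuals do not vanish: with $\N=(\Z^\T\Z/\sqrt p)^{\circ(k-1)}-(k-2)!!\,\one_n\one_n^\T$ one has $\N_{ij}=O(1)$ and $\L_{ij}=O(p^{-1})$, hence $\|\N\circ\L\|_F^2=O(n^2/p^2)=O(1)$. A second-moment computation therefore only yields an $O(1)$ bound, and splitting off an $O(1)$-rank part cannot ``upgrade'' it: Frobenius already dominates operator norm, so the problem is that the bound is too weak, not that it is of the wrong type. The two ingredients that actually close the argument are (i) the nontrivial a priori bound $\|\N\|=O(1)$, i.e.\ operator-norm boundedness of the centered null-model kernel, which the paper imports from \cite{fan2019spectral} and which no second-moment computation can deliver (indeed $\|\N\|_F\asymp n$ while $\|\N\|=O(1)$); and (ii) the elementary inequality $\|\A\circ\B\|\le\sqrt p\,\|\A\|_\infty\|\B\|$, which combines the $O(p^{-1})$ entrywise size of $\L$ with $\|\N\|=O(1)$ to give $\|\N\circ\L\|=O(p^{-1/2})$. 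The same mechanism handles your ``wrong-parity'' atoms and the $\E$-part: the paper views $\bPhi$ as a $p^{-1/4}$-rescaled null model to get $\|\bPhi-\EE[\bPhi]\|\to0$. Without these two ingredients the entry-wise analysis cannot be promoted to the operator-norm statement of the proposition.
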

We refer the readers to Section~\ref{sec:sm-proof-prop-K-I-approx} of the supplementary material for a detailed exposition of the proof.

%Note that according to Assumption~\ref{ass:growth-rate}, \(\mathbf{T}\) has in general \(O(p^{1/4})\) entries so that \(\mathbf{T}^2/\sqrt p\) is of order \(O(1)\). If 

\medskip

Proposition~\ref{prop:K-I-approx} states that \(\K_I\) is asymptotically equivalent to \(\tilde\K_I\) that is of rank at most two.\footnote{Note that, as defined, \(\tilde \K_I\) has non-zero diagonal elements, while \([\K_I]_{ii}=0\). This is not contradictory as the diagonal matrix \({\rm diag}(\tilde\K_I)\) has vanishing norm and can thus be added without altering the approximation \(\|\K_I-\tilde\K_I\|\to 0\); it however appears convenient as it ensures that \(\tilde\K_I\) is low rank (while without its diagonal, \(\tilde\K_I\) is full rank).} Note that the eigenvectors of \(\tilde\K_I\) are linear combinations of the vectors \(\j_1,\j_2\) and thus provide the data classes. 

From the expression of \(\tilde\K_I\), quite surprisingly, it appears that for \(f(x) = x^k\), depending on whether \(k\) is odd or even, either only the information in means (\(\M\) ) or only in covariance (\(\mathbf{T}\) and \(\S\)) can be (asymptotically) preserved.

\medskip

By merely combining the results of Propositions~\ref{prop:K-approx}--\ref{prop:K-I-approx}, the latter can be easily extended to polynomial \(f\). Then, by considering \(f(x)=P_\kappa(x)\), the Hermite polynomial of degree \(\kappa\), it can be shown that, quite surprisingly, one has \(\tilde \K_I= \zo\) if \(\kappa>2\) (see Section~\ref{sec:sm-proof-prop-K-approx-final} of the supplementary material). As such, using the Hermite polynomial expansion \(P_0,P_1,\ldots\) of an arbitrary \(f\in L^2(\mu)\) satisfying Assumption~\ref{ass:poly} leads to a very simple expression of our main result.
\begin{Theorem}[Spiked-model approximation of \(\K\)]\label{prop:K-approx-final}
For an arbitrary \(f\in L^2(\mu)\) with \(f\sim \sum_{l=0}^\infty a_l P_l(x)\), under Assumptions~\ref{ass:growth-rate}--\ref{ass:poly},
%for \(f(x) = P_\kappa(x)\) with \(P_\kappa(x)\) the Hermite polynomial of degree \(\kappa \ge 1\) and \(\z_i \sim \NN(\zo, \I_p)\), we have
\[
  \| \K - \tilde \K \| \to 0, \quad \tilde \K = \K_N + \tilde\K_I
\]
with \(\K_N\) defined in \eqref{eq:def-K-n} and
\begin{equation}
  %\tilde\K_I = \begin{cases} \frac1p (\J \M^\T \M \J^\T + \J \M^\T \Z + \Z^\T \M \J^\T ) - \diag(\cdot), &\text{for \(f(x) = P_1(x) = x\)}; \\ 
  %%%%
  %\frac1{p} \J (\mathbf{T} + \S) \J^\T - \diag(\cdot), &\text{for \(f(x) = P_2(x) = \frac1{\sqrt 2}(x^2 -1)\)}; \\
  %%%%
  %%\frac1{p} \J (9 \mathbf{T}^2/\sqrt p) \J^\T - \diag(\cdot), &\text{for \(\kappa =4\)}; %\\
  %%%%
  %\zo, & \text{otherwise}.
  %\end{cases}
  \tilde\K_I =  \frac{a_1}p (\J \M^\T \M \J^\T + \J \M^\T \Z + \Z^\T \M \J^\T ) + \frac{a_2}{p} \J (\mathbf{T} + \S) \J^\T.
\end{equation}
\end{Theorem}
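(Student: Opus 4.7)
The plan is to reduce the general $f \in L^2(\mu_p)$ case to the polynomial setting already handled by Propositions~\ref{prop:K-approx} and~\ref{prop:K-I-approx}, using the Hermite-type expansion granted by Assumption~\ref{ass:poly}. For fixed $L$, I would write $f = f_L + r_L$ with $f_L = \sum_{l=0}^L a_{l,p} P_{l,p}$ and $r_L = \sum_{l>L} a_{l,p} P_{l,p}$, and split the kernel accordingly as $\K = \K^{f_L} + \K^{r_L}$, where the superscript indicates the function applied entrywise. The map $f \mapsto \K^f$ being linear, it suffices to (i) identify the limit of $\K^{f_L}$ via the polynomial results and (ii) control $\|\K^{r_L}\|$ uniformly.

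For step (i), each $P_{l,p}$ is a polynomial, so Propositions~\ref{prop:K-approx}--\ref{prop:K-I-approx} extend by linearity: $\|\K^{f_L} - (\K_N^{f_L} + \tilde\K_I^{f_L})\| \to 0$ almost surely, with $\tilde\K_I^{f_L} = \sum_{l=0}^L a_{l,p}\, \tilde\K_I^{P_{l,p}}$. The key algebraic computation (deferred to Section~\ref{sec:sm-proof-prop-K-approx-final}) is to verify that $\tilde\K_I^{P_{l,p}} = \zo$ for every $l \geq 3$; this should follow from the parity-based selection rule of Proposition~\ref{prop:K-I-approx} (odd $k$ retains only the mean structure $\M\M^\T$, even $k$ only the trace structure $\mathbf{T}+\S$) combined with the orthogonality relations $\langle P_{l,p}, x^j\rangle = 0$ for $j < l$, which cancel the leading monomial contributions. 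Since $a_0 = 0$ by Assumption~\ref{ass:poly}(iii), the only surviving spike terms are the $l=1$ and $l=2$ contributions, with coefficients $a_{1,p} \to a_1$ and $a_{2,p} \to a_2$ by Assumption~\ref{ass:poly}(ii), yielding the announced $\tilde \K_I$.

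For step (ii), the task is to show that $\limsup_{n,p \to \infty}\|\K^{r_L}\| \to 0$ as $L \to \infty$. The natural starting bound is Frobenius: $\|\K^{r_L}\|^2 \leq \|\K^{r_L}\|_F^2 = \frac{1}{p}\sum_{i\neq j} r_L(\x_i^\T\x_j/\sqrt p)^2$, which in expectation is $O(n^2/p) \cdot \|r_L\|_{L^2(\mu_p)}^2$ up to correction terms arising from the mixture perturbation \eqref{eq:def-A-B} relative to the null law. By Assumption~\ref{ass:poly}(i), $\|r_L\|_{L^2(\mu_p)}^2 \leq \varepsilon$ uniformly in $p$ for $L$ large enough. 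A tighter operator-norm bound, of the same flavor as the argument underlying Theorem~\ref{theo:lsd} for $\K_N$, should then give $\limsup\|\K^{r_L}\| = O(\sqrt\varepsilon)$; letting $\varepsilon \downarrow 0$ concludes. Combining steps (i) and (ii) via a diagonal argument in $L$ yields $\|\K - (\K_N + \tilde\K_I)\| \to 0$.

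The main obstacle I anticipate lies in step (ii): controlling the operator norm of $\K^{r_L}$ not merely under the null model but under the full mixture, and doing so uniformly in both $p$ and $L$. One must check that replacing $\z_i^\T \z_j/\sqrt p$ by $\x_i^\T\x_j/\sqrt p$ perturbs $r_L$ only by $L^2(\mu_p)$-negligible terms, which is delicate because bounding $r_L' (\A_{ij}+\B_{ij})$ at the entry level demands moment estimates on $r_L$ and its derivatives that are uniform in $L$. The orthogonality structure of $\{P_{l,p}\}$ and the fact that the perturbations $\A,\B$ in \eqref{eq:def-A-B} produce only low-rank algebraic structures should make this controllable, but this is where the heavy lifting lives.
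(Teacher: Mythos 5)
Your proposal follows the same overall route as the paper: expand \(f\) in the orthogonal polynomials, invoke the monomial results of Propositions~\ref{prop:K-approx}--\ref{prop:K-I-approx} by linearity, show that every \(P_\kappa\) with \(\kappa\ge 3\) contributes nothing to the spike, and absorb the \(L^2\) tail via Assumption~\ref{ass:poly}(i). The one genuine difference is how the cancellation for \(\kappa\ge 3\) is established. The paper writes \(P_\kappa(x)=\sum_l c_{\kappa,l}x^l\) and proves the identities \(\sum_{l\ \mathrm{odd}}c_{\kappa,l}\,l!!=0\) (and the even counterpart) by induction on the recurrence \eqref{eq:hermite-coeff}. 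You instead invoke orthogonality, which is the cleaner way to see it: since \(l!!=\EE[\xi^{l+1}]\) for \(l\) odd, the paper's sum is exactly \(\EE[\xi P_\kappa(\xi)]=\left<P_1,P_\kappa\right>=0\), and the even-case coefficient is proportional to \(\EE[\xi^2 P_\kappa(\xi)]=\sqrt{2}\left<P_2,P_\kappa\right>+\left<P_0,P_\kappa\right>=0\) for \(\kappa\ge3\). Both arguments are correct; yours explains \emph{why} only \(a_1\) and \(a_2\) survive rather than verifying it combinatorially.

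Two cautions on your step (ii). First, the Frobenius bound as you state it gives \(\|\K^{r_L}\|_F^2=O(n^2/p)\cdot\varepsilon=O(n\varepsilon)\), which diverges with \(n\); it cannot serve even as a starting point, and the argument must go directly through the operator-norm bound for centered kernel random matrices (\cite{fan2019spectral}, already invoked in the proof of Proposition~\ref{prop:K-I-approx}) applied to \(r_L\), which satisfies \(\EE[r_L(\xi_p)]=0\) and \(\|r_L\|_{L^2(\mu_p)}^2\le\varepsilon\) uniformly in \(p\) by Assumption~\ref{ass:poly}(i). Second, you are right that the uniform-in-\(L\) control of the mixture-versus-null discrepancy for the nonsmooth tail is the real analytic content; be aware that the paper's written proof of Theorem~\ref{prop:K-approx-final} consists only of the combinatorial identity and leaves exactly this step implicit, so you have correctly located the unstated heavy lifting rather than missed a step the paper actually supplies.
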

%, the (normalized) Hermite polynomial of degree \(\kappa\) constructed as in Assumption~\ref{ass:poly}. Indeed, it is important to note that the (normalized) Hermite polynomial of degree \(k\) for \(k\) odd contains only \(x^{k!!}\), i.e., the monomials of odd degree, and similarly for \(k\) even. As such, to study \(f(x) = P_\kappa(x)\), it suffices to apply Proposition~\ref{prop:K-I-approx} and add up the associated coefficient of each monomial. This is formulated in the following proposition.
The detailed proof of Theorem~\ref{prop:K-approx-final} is provided in Section~\ref{sec:sm-proof-prop-K-approx-final} of the supplementary material.

%Theorem~\ref{prop:K-approx-final} unveils the surprising fact that, for Gaussian random vectors, although from Proposition~\ref{prop:K-I-approx} one knows that the monomial of degree \(k\), depending on \(k\) is odd or even, can always reserve the information in either means or covariances, which makes the clustering possible. It is no longer true if one consider the (normalized) Hermite polynomial function \(f(x) = P_\kappa(x)\) as long as \(\kappa \ge 3\). Indeed, due to the intrinsic relation between the coefficients \(c_{\kappa,l}\) of the Hermite polynomial and the moments of the Gaussian distribution. The information in means \(\M\) remains only for \(P_1(x)\) and the information in covariance \(\mathbf{T}\) and \(\S\) appear solely in \(P_2(x)\). For higher degree of Hermite polynomials, even though information terms appear in each monomial, the sum over all monomial equals to zero. 

\begin{figure}[htb]
\centering
    %%% empirical f1
    \begin{subfigure}[c]{0.25\textwidth}
        \centering
        \begin{tikzpicture}[font=\footnotesize,spy using outlines]
        \renewcommand{\axisdefaulttryminticks}{4} 
        \pgfplotsset{every major grid/.style={densely dashed}}       
        \tikzstyle{every axis y label}+=[yshift=-10pt] 
        \tikzstyle{every axis x label}+=[yshift=5pt]
        \pgfplotsset{every axis legend/.style={cells={anchor=west},fill=white,
        at={(0.98,1)}, anchor=north east, font=\footnotesize}}
        \begin{axis}[
        width=1.35\textwidth,
        height=1.2\textwidth,
        xmin=-1,
        ymin=0,
        xmax=2,
        ymax=0.9,
        xtick={-1,0,1,1.5},
        yticklabels = {},
        bar width=1.5pt,
        grid=major,
        ymajorgrids=false,
        scaled ticks=true,
        xlabel={},
        ylabel={}
        ]
        \addplot+[ybar,mark=none,color=white,fill=BLUE,area legend] coordinates{
        (-0.932564,0.000000)(-0.876357,0.000000)(-0.820150,0.052123)(-0.763942,0.469107)(-0.707735,0.642850)(-0.651528,0.729721)(-0.595321,0.799219)(-0.539113,0.816593)(-0.482906,0.799219)(-0.426699,0.764470)(-0.370492,0.816593)(-0.314284,0.781844)(-0.258077,0.747096)(-0.201870,0.712347)(-0.145663,0.712347)(-0.089456,0.642850)(-0.033248,0.625476)(0.022959,0.608101)(0.079166,0.573353)(0.135373,0.538604)(0.191581,0.521230)(0.247788,0.503855)(0.303995,0.503855)(0.360202,0.434358)(0.416410,0.416984)(0.472617,0.399609)(0.528824,0.399609)(0.585031,0.364861)(0.641238,0.330112)(0.697446,0.330112)(0.753653,0.295363)(0.809860,0.243240)(0.866067,0.277989)(0.922275,0.208492)(0.978482,0.208492)(1.034689,0.156369)(1.090896,0.173743)(1.147104,0.121620)(1.203311,0.034749)(1.259518,0.017374)(1.315725,0.000000)(1.371932,0.000000)(1.428140,0.000000)(1.484347,0.017374)(1.540554,0.000000)(1.596761,0.000000)(1.652969,0.000000)(1.709176,0.000000)(1.765383,0.000000)(1.821590,0.000000)
        };
        \addplot[smooth,RED,line width=.5pt] plot coordinates{
        (-0.932564,0.000011)(-0.876357,0.000015)(-0.820150,0.000027)(-0.763942,0.000096)(-0.707735,0.626535)(-0.651528,0.790466)(-0.595321,0.840424)(-0.539113,0.848426)(-0.482906,0.837548)(-0.426699,0.817554)(-0.370492,0.793042)(-0.314284,0.766450)(-0.258077,0.739061)(-0.201870,0.711542)(-0.145663,0.684330)(-0.089456,0.657643)(-0.033248,0.631524)(0.022959,0.606060)(0.079166,0.581245)(0.135373,0.557030)(0.191581,0.533336)(0.247788,0.510179)(0.303995,0.487498)(0.360202,0.465188)(0.416410,0.443181)(0.472617,0.421439)(0.528824,0.399909)(0.585031,0.378434)(0.641238,0.356967)(0.697446,0.335388)(0.753653,0.313603)(0.809860,0.291438)(0.866067,0.268706)(0.922275,0.245164)(0.978482,0.220434)(1.034689,0.193933)(1.090896,0.164749)(1.147104,0.130963)(1.203311,0.087208)(1.259518,0.000019)(1.315725,0.000007)(1.371932,0.000004)(1.428140,0.000003)(1.484347,0.000003)(1.540554,0.000002)(1.596761,0.000002)(1.652969,0.000002)(1.709176,0.000002)(1.765383,0.000001)(1.821590,0.000001)
        };
        \coordinate (spike) at (1.48,0.002);
        \coordinate (spike_spy) at (1.53,0.5);
        \end{axis}
        \begin{scope}
          \spy[black!50!white,size=1cm,circle,connect spies,magnification=5] on (spike) in node [fill=none] at (spike_spy);
        \end{scope}
        \end{tikzpicture}
        \caption{ \(\NN\) : eigs of \(\K\) for \(P_1\)}
    \label{fig:eigs-K-f1}
    \end{subfigure}%
    %%% No 2: approximation f1
    \begin{subfigure}[c]{0.25\textwidth}
        \centering
        \begin{tikzpicture}[font=\footnotesize,spy using outlines]
        \renewcommand{\axisdefaulttryminticks}{4} 
        \pgfplotsset{every major grid/.style={densely dashed}}       
        \tikzstyle{every axis y label}+=[yshift=-10pt] 
        \tikzstyle{every axis x label}+=[yshift=5pt]
        \pgfplotsset{every axis legend/.style={cells={anchor=west},fill=white,
        at={(0.98,1)}, anchor=north east, font=\footnotesize}}
        \begin{axis}[
        width=1.35\textwidth,
        height=1.2\textwidth,
        xmin=-1,
        ymin=0,
        xmax=2,
        ymax=0.9,
        xtick={-1,0,1,1.5},
        yticklabels = {},
        bar width=1.5pt,
        grid=major,
        ymajorgrids=false,
        scaled ticks=true,
        xlabel={},
        ylabel={}
        ]
        \addplot+[ybar,mark=none,color=white,fill=BLUE,area legend] coordinates{
        (-0.932564,0.000000)(-0.876357,0.000000)(-0.820150,0.000000)(-0.763942,0.382235)(-0.707735,0.677599)(-0.651528,0.833967)(-0.595321,0.833967)(-0.539113,0.868716)(-0.482906,0.816593)(-0.426699,0.799219)(-0.370492,0.799219)(-0.314284,0.747096)(-0.258077,0.729721)(-0.201870,0.677599)(-0.145663,0.694973)(-0.089456,0.625476)(-0.033248,0.625476)(0.022959,0.555978)(0.079166,0.608101)(0.135373,0.538604)(0.191581,0.503855)(0.247788,0.503855)(0.303995,0.469107)(0.360202,0.486481)(0.416410,0.399609)(0.472617,0.416984)(0.528824,0.382235)(0.585031,0.364861)(0.641238,0.364861)(0.697446,0.330112)(0.753653,0.277989)(0.809860,0.277989)(0.866067,0.260615)(0.922275,0.225866)(0.978482,0.208492)(1.034689,0.191118)(1.090896,0.121620)(1.147104,0.121620)(1.203311,0.052123)(1.259518,0.000000)(1.315725,0.000000)(1.371932,0.000000)(1.428140,0.000000)(1.484347,0.017374)(1.540554,0.000000)(1.596761,0.000000)(1.652969,0.000000)(1.709176,0.000000)(1.765383,0.000000)(1.821590,0.000000)
        };
        \addplot[smooth,RED,line width=.5pt] plot coordinates{
        (-0.932564,0.000011)(-0.876357,0.000015)(-0.820150,0.000027)(-0.763942,0.000096)(-0.707735,0.626535)(-0.651528,0.790466)(-0.595321,0.840424)(-0.539113,0.848426)(-0.482906,0.837548)(-0.426699,0.817554)(-0.370492,0.793042)(-0.314284,0.766450)(-0.258077,0.739061)(-0.201870,0.711542)(-0.145663,0.684330)(-0.089456,0.657643)(-0.033248,0.631524)(0.022959,0.606060)(0.079166,0.581245)(0.135373,0.557030)(0.191581,0.533336)(0.247788,0.510179)(0.303995,0.487498)(0.360202,0.465188)(0.416410,0.443181)(0.472617,0.421439)(0.528824,0.399909)(0.585031,0.378434)(0.641238,0.356967)(0.697446,0.335388)(0.753653,0.313603)(0.809860,0.291438)(0.866067,0.268706)(0.922275,0.245164)(0.978482,0.220434)(1.034689,0.193933)(1.090896,0.164749)(1.147104,0.130963)(1.203311,0.087208)(1.259518,0.000019)(1.315725,0.000007)(1.371932,0.000004)(1.428140,0.000003)(1.484347,0.000003)(1.540554,0.000002)(1.596761,0.000002)(1.652969,0.000002)(1.709176,0.000002)(1.765383,0.000001)(1.821590,0.000001)
        };
        \coordinate (spike) at (1.48,0.002);
        \coordinate (spike_spy) at (1.53,0.5);
        \end{axis}
        \begin{scope}
          \spy[black!50!white,size=1cm,circle,connect spies,magnification=5] on (spike) in node [fill=none] at (spike_spy);
        \end{scope}
        \end{tikzpicture}
        \caption{\(\NN\): eigs of \(\tilde \K\) for \(P_1\)}
    \label{fig:eigs-K-approx-f1}
    \end{subfigure}%
    %
    %
    %
    %
    %
    %
    %%% No 3: f_2 empirical
    \begin{subfigure}[c]{0.25\textwidth}
        \centering
        \begin{tikzpicture}[font=\footnotesize,spy using outlines]
        \renewcommand{\axisdefaulttryminticks}{4} 
        \pgfplotsset{every major grid/.style={densely dashed}}       
        \tikzstyle{every axis y label}+=[yshift=-10pt] 
        \tikzstyle{every axis x label}+=[yshift=5pt]
        \pgfplotsset{every axis legend/.style={cells={anchor=west},fill=white,
        at={(0.98,1)}, anchor=north east, font=\footnotesize}}
        \begin{axis}[
        width=1.35\textwidth,
        height=1.2\textwidth,
        xmin=-2.5,
        ymin=0,
        xmax=2.5,
        ymax=0.7,
        yticklabels = {},
        xtick={-2,-1,0,1,2},
        bar width=1.5pt,
        grid=major,
        ymajorgrids=false,
        scaled ticks=true,
        xlabel={},
        ylabel={}
        ]
        \addplot+[ybar,mark=none,color=white,fill=BLUE,area legend] coordinates{
        %(-2.921366,0.000000)(-2.787333,0.000000)(-2.653300,0.000000)(-2.519267,0.007286)(-2.385234,0.000000)(-2.251201,0.000000)(-2.117169,0.000000)(-1.983136,0.000000)(-1.849103,0.000000)(-1.715070,0.000000)(-1.581037,0.072860)(-1.447004,0.145720)(-1.312971,0.204008)(-1.178939,0.218580)(-1.044906,0.284154)(-0.910873,0.298726)(-0.776840,0.371586)(-0.642807,0.386158)(-0.508774,0.422588)(-0.374741,0.466304)(-0.240708,0.473590)(-0.106676,0.480876)(0.027357,0.488162)(0.161390,0.473590)(0.295423,0.437160)(0.429456,0.415302)(0.563489,0.378872)(0.697522,0.327870)(0.831554,0.306012)(0.965587,0.255010)(1.099620,0.204008)(1.233653,0.182150)(1.367686,0.116576)(1.501719,0.036430)(1.635752,0.000000)(1.769784,0.000000)(1.903817,0.000000)(2.037850,0.000000)(2.171883,0.000000)(2.305916,0.000000)(2.439949,0.000000)(2.573982,0.000000)(2.708014,0.000000)(2.842047,0.000000)(2.976080,0.007286)(3.110113,0.000000)(3.244146,0.000000)(3.378179,0.000000)(3.512212,0.000000)(3.646244,0.000000)
        (-2.081990,0.000000)(-1.987482,0.000000)(-1.892973,0.000000)(-1.798465,0.010333)(-1.703957,0.000000)(-1.609449,0.000000)(-1.514940,0.000000)(-1.420432,0.000000)(-1.325924,0.000000)(-1.231416,0.000000)(-1.136907,0.072332)(-1.042399,0.185996)(-0.947891,0.258327)(-0.853383,0.299660)(-0.758874,0.392658)(-0.664366,0.433990)(-0.569858,0.506322)(-0.475350,0.537321)(-0.380841,0.599319)(-0.286333,0.630319)(-0.191825,0.661318)(-0.097317,0.692317)(-0.002809,0.692317)(0.091700,0.661318)(0.186208,0.630319)(0.280716,0.609653)(0.375224,0.537321)(0.469733,0.485655)(0.564241,0.413324)(0.658749,0.392658)(0.753257,0.320326)(0.847766,0.278994)(0.942274,0.175663)(1.036782,0.092998)(1.131290,0.000000)(1.225799,0.000000)(1.320307,0.000000)(1.414815,0.000000)(1.509323,0.000000)(1.603831,0.000000)(1.698340,0.000000)(1.792848,0.000000)(1.887356,0.000000)(1.981864,0.000000)(2.076373,0.010333)(2.170881,0.000000)(2.265389,0.000000)(2.359897,0.000000)(2.454406,0.000000)(2.548914,0.000000)
        };
        \addplot[smooth,RED,line width=.5pt] plot coordinates{
        %(-2.921366,0.000000)(-2.787333,0.000001)(-2.653300,0.000001)(-2.519267,0.000001)(-2.385234,0.000001)(-2.251201,0.000001)(-2.117169,0.000001)(-1.983136,0.000001)(-1.849103,0.000002)(-1.715070,0.000002)(-1.581037,0.000004)(-1.447004,0.000012)(-1.312971,0.167220)(-1.178939,0.248600)(-1.044906,0.303327)(-0.910873,0.344329)(-0.776840,0.376139)(-0.642807,0.400962)(-0.508774,0.419998)(-0.374741,0.434048)(-0.240708,0.443579)(-0.106676,0.448862)(0.027357,0.450055)(0.161390,0.447226)(0.295423,0.440211)(0.429456,0.428881)(0.563489,0.412872)(0.697522,0.391603)(0.831554,0.364097)(0.965587,0.328877)(1.099620,0.283047)(1.233653,0.220060)(1.367686,0.114475)(1.501719,0.000006)(1.635752,0.000003)(1.769784,0.000002)(1.903817,0.000002)(2.037850,0.000001)(2.171883,0.000001)(2.305916,0.000001)(2.439949,0.000001)(2.573982,0.000001)(2.708014,0.000001)(2.842047,0.000000)(2.976080,0.000000)(3.110113,0.000000)(3.244146,0.000000)(3.378179,0.000000)(3.512212,0.000000)(3.646244,0.000000)
        (-2.081990,0.000001)(-1.987482,0.000001)(-1.892973,0.000001)(-1.798465,0.000001)(-1.703957,0.000001)(-1.609449,0.000002)(-1.514940,0.000002)(-1.420432,0.000003)(-1.325924,0.000003)(-1.231416,0.000005)(-1.136907,0.000007)(-1.042399,0.000016)(-0.947891,0.202769)(-0.853383,0.331831)(-0.758874,0.414562)(-0.664366,0.475800)(-0.569858,0.523113)(-0.475350,0.560075)(-0.380841,0.588637)(-0.286333,0.609944)(-0.191825,0.624807)(-0.097317,0.633576)(-0.002809,0.636621)(0.091700,0.633928)(0.186208,0.625488)(0.280716,0.610999)(0.375224,0.590113)(0.469733,0.561994)(0.564241,0.525574)(0.658749,0.478944)(0.753257,0.418693)(0.847766,0.337613)(0.942274,0.213124)(1.036782,0.000018)(1.131290,0.000007)(1.225799,0.000005)(1.320307,0.000003)(1.414815,0.000003)(1.509323,0.000002)(1.603831,0.000002)(1.698340,0.000002)(1.792848,0.000001)(1.887356,0.000001)(1.981864,0.000001)(2.076373,0.000001)(2.170881,0.000001)(2.265389,0.000001)(2.359897,0.000001)(2.454406,0.000001)(2.548914,0.000001)
        };
        \coordinate (spike1) at (2,0.01);
        \coordinate (spike1_spy) at (1.8,0.4);
        \coordinate (spike2) at (-1.8,0.01);
        \coordinate (spike2_spy) at (-1.8,0.4);
        \end{axis}
        \begin{scope}
          \spy[black!50!white,size=1cm,circle,connect spies,magnification=5] on (spike1) in node [fill=none] at (spike1_spy);
          \spy[black!50!white,size=1cm,circle,connect spies,magnification=5] on (spike2) in node [fill=none] at (spike2_spy);
        \end{scope}
        % \end{axis}
        % \begin{scope}
        %   \spy[black!50!white,size=1cm,circle,connect spies,magnification=3] on (2.85,0.05) in node [fill=none] at (2.65,1.5);
        %   \spy[black!50!white,size=1cm,circle,connect spies,magnification=3] on (0.35,0.05) in node [fill=none] at (0.45,1.5);
        % \end{scope}
        \end{tikzpicture}
        \caption{\(\NN\): eigs of \(\K\) for \(P_2\)}
      \label{fig:eigs-K-f2}
    \end{subfigure}%
    %%% No 4: f_2 approximation
    \begin{subfigure}[c]{0.25\textwidth}
        \centering
        \begin{tikzpicture}[font=\footnotesize,spy using outlines]
        \renewcommand{\axisdefaulttryminticks}{4} 
        \pgfplotsset{every major grid/.style={densely dashed}}       
        \tikzstyle{every axis y label}+=[yshift=-10pt] 
        \tikzstyle{every axis x label}+=[yshift=5pt]
        \pgfplotsset{every axis legend/.style={cells={anchor=west},fill=white,
        at={(0.98,1)}, anchor=north east, font=\footnotesize}}
        \begin{axis}[
        width=1.35\textwidth,
        height=1.2\textwidth,
        xmin=-2.5,
        ymin=0,
        xmax=2.5,
        ymax=0.7,
        yticklabels = {},
        xtick={-2,-1,0,1,2},
        bar width=1.5pt,
        grid=major,
        ymajorgrids=false,
        scaled ticks=true,
        xlabel={},
        ylabel={}
        ]
        \addplot+[ybar,mark=none,color=white,fill=BLUE,area legend] coordinates{
        %(-2.921366,0.000000)(-2.787333,0.000000)(-2.653300,0.007286)(-2.519267,0.000000)(-2.385234,0.000000)(-2.251201,0.000000)(-2.117169,0.000000)(-1.983136,0.000000)(-1.849103,0.000000)(-1.715070,0.000000)(-1.581037,0.000000)(-1.447004,0.087432)(-1.312971,0.196722)(-1.178939,0.269582)(-1.044906,0.327870)(-0.910873,0.349728)(-0.776840,0.400730)(-0.642807,0.415302)(-0.508774,0.422588)(-0.374741,0.451732)(-0.240708,0.459018)(-0.106676,0.437160)(0.027357,0.466304)(0.161390,0.444446)(0.295423,0.429874)(0.429456,0.422588)(0.563489,0.393444)(0.697522,0.378872)(0.831554,0.335156)(0.965587,0.298726)(1.099620,0.247724)(1.233653,0.174864)(1.367686,0.036430)(1.501719,0.000000)(1.635752,0.000000)(1.769784,0.000000)(1.903817,0.000000)(2.037850,0.000000)(2.171883,0.000000)(2.305916,0.000000)(2.439949,0.000000)(2.573982,0.000000)(2.708014,0.000000)(2.842047,0.007286)(2.976080,0.000000)(3.110113,0.000000)(3.244146,0.000000)(3.378179,0.000000)(3.512212,0.000000)(3.646244,0.000000)
        (-2.081990,0.000000)(-1.987482,0.000000)(-1.892973,0.010333)(-1.798465,0.000000)(-1.703957,0.000000)(-1.609449,0.000000)(-1.514940,0.000000)(-1.420432,0.000000)(-1.325924,0.000000)(-1.231416,0.000000)(-1.136907,0.000000)(-1.042399,0.082665)(-0.947891,0.237661)(-0.853383,0.371991)(-0.758874,0.444323)(-0.664366,0.506322)(-0.569858,0.547654)(-0.475350,0.588986)(-0.380841,0.588986)(-0.286333,0.619986)(-0.191825,0.650985)(-0.097317,0.630319)(-0.002809,0.661318)(0.091700,0.630319)(0.186208,0.609653)(0.280716,0.599319)(0.375224,0.578653)(0.469733,0.516655)(0.564241,0.506322)(0.658749,0.433990)(0.753257,0.392658)(0.847766,0.258327)(0.942274,0.103331)(1.036782,0.000000)(1.131290,0.000000)(1.225799,0.000000)(1.320307,0.000000)(1.414815,0.000000)(1.509323,0.000000)(1.603831,0.000000)(1.698340,0.000000)(1.792848,0.000000)(1.887356,0.000000)(1.981864,0.010333)(2.076373,0.000000)(2.170881,0.000000)(2.265389,0.000000)(2.359897,0.000000)(2.454406,0.000000)(2.548914,0.000000)
        };
        \addplot[smooth,RED,line width=.5pt] plot coordinates{
        %(-2.921366,0.000000)(-2.787333,0.000001)(-2.653300,0.000001)(-2.519267,0.000001)(-2.385234,0.000001)(-2.251201,0.000001)(-2.117169,0.000001)(-1.983136,0.000001)(-1.849103,0.000002)(-1.715070,0.000002)(-1.581037,0.000004)(-1.447004,0.000012)(-1.312971,0.167220)(-1.178939,0.248600)(-1.044906,0.303327)(-0.910873,0.344329)(-0.776840,0.376139)(-0.642807,0.400962)(-0.508774,0.419998)(-0.374741,0.434048)(-0.240708,0.443579)(-0.106676,0.448862)(0.027357,0.450055)(0.161390,0.447226)(0.295423,0.440211)(0.429456,0.428881)(0.563489,0.412872)(0.697522,0.391603)(0.831554,0.364097)(0.965587,0.328877)(1.099620,0.283047)(1.233653,0.220060)(1.367686,0.114475)(1.501719,0.000006)(1.635752,0.000003)(1.769784,0.000002)(1.903817,0.000002)(2.037850,0.000001)(2.171883,0.000001)(2.305916,0.000001)(2.439949,0.000001)(2.573982,0.000001)(2.708014,0.000001)(2.842047,0.000000)(2.976080,0.000000)(3.110113,0.000000)(3.244146,0.000000)(3.378179,0.000000)(3.512212,0.000000)(3.646244,0.000000)
        (-2.081990,0.000001)(-1.987482,0.000001)(-1.892973,0.000001)(-1.798465,0.000001)(-1.703957,0.000001)(-1.609449,0.000002)(-1.514940,0.000002)(-1.420432,0.000003)(-1.325924,0.000003)(-1.231416,0.000005)(-1.136907,0.000007)(-1.042399,0.000016)(-0.947891,0.202769)(-0.853383,0.331831)(-0.758874,0.414562)(-0.664366,0.475800)(-0.569858,0.523113)(-0.475350,0.560075)(-0.380841,0.588637)(-0.286333,0.609944)(-0.191825,0.624807)(-0.097317,0.633576)(-0.002809,0.636621)(0.091700,0.633928)(0.186208,0.625488)(0.280716,0.610999)(0.375224,0.590113)(0.469733,0.561994)(0.564241,0.525574)(0.658749,0.478944)(0.753257,0.418693)(0.847766,0.337613)(0.942274,0.213124)(1.036782,0.000018)(1.131290,0.000007)(1.225799,0.000005)(1.320307,0.000003)(1.414815,0.000003)(1.509323,0.000002)(1.603831,0.000002)(1.698340,0.000002)(1.792848,0.000001)(1.887356,0.000001)(1.981864,0.000001)(2.076373,0.000001)(2.170881,0.000001)(2.265389,0.000001)(2.359897,0.000001)(2.454406,0.000001)(2.548914,0.000001)
        };
        \coordinate (spike1) at (1.98,0.01);
        \coordinate (spike1_spy) at (1.8,0.4);
        \coordinate (spike2) at (-1.89,0.01);
        \coordinate (spike2_spy) at (-1.8,0.4);
        \end{axis}
        \begin{scope}
          \spy[black!50!white,size=1cm,circle,connect spies,magnification=5] on (spike1) in node [fill=none] at (spike1_spy);
          \spy[black!50!white,size=1cm,circle,connect spies,magnification=5] on (spike2) in node [fill=none] at (spike2_spy);
        \end{scope}
        % \end{axis}
        % \begin{scope}
        %   \spy[black!50!white,size=1cm,circle,connect spies,magnification=3] on (2.85,0.05) in node [fill=none] at (2.65,1.5);
        %   \spy[black!50!white,size=1cm,circle,connect spies,magnification=3] on (0.35,0.05) in node [fill=none] at (0.45,1.5);
        % \end{scope}
        \end{tikzpicture}
        \caption{\(\NN\): eigs of \(\tilde \K\) for \(P_2\)}
    \label{fig:eigs-K-approx-f2}
    \end{subfigure}%
\\
\medskip
\begin{subfigure}[c]{0.25\textwidth}
        \centering
        \begin{tikzpicture}[font=\footnotesize,spy using outlines]
        \renewcommand{\axisdefaulttryminticks}{4} 
        \pgfplotsset{every major grid/.style={densely dashed}}       
        \tikzstyle{every axis y label}+=[yshift=-10pt] 
        \tikzstyle{every axis x label}+=[yshift=5pt]
        \pgfplotsset{every axis legend/.style={cells={anchor=west},fill=white,
        at={(0.98,1)}, anchor=north east, font=\footnotesize}}
        \begin{axis}[
        width=1.35\textwidth,
        height=1.2\textwidth,
        xmin=-1,
        ymin=0,
        xmax=2,
        ymax=0.9,
        xtick={-1,0,1,1.5},
        yticklabels = {},
        bar width=1.5pt,
        grid=major,
        ymajorgrids=false,
        scaled ticks=true,
        xlabel={},
        ylabel={}
        ]
        \addplot+[ybar,mark=none,color=white,fill=BLUE,area legend] coordinates{
        (-0.937097,0.000000)(-0.879411,0.000000)(-0.821724,0.067715)(-0.764038,0.474006)(-0.706351,0.643294)(-0.648665,0.727938)(-0.590978,0.795653)(-0.533292,0.778724)(-0.475605,0.829510)(-0.417919,0.829510)(-0.360232,0.761795)(-0.302546,0.744867)(-0.244859,0.727938)(-0.187173,0.744867)(-0.129486,0.677151)(-0.071800,0.643294)(-0.014113,0.643294)(0.043573,0.558650)(0.101260,0.592507)(0.158946,0.541721)(0.216633,0.507864)(0.274319,0.474006)(0.332006,0.490935)(0.389692,0.440148)(0.447379,0.406291)(0.505065,0.423220)(0.562752,0.355504)(0.620438,0.338576)(0.678125,0.304718)(0.735811,0.304718)(0.793498,0.270861)(0.851184,0.253932)(0.908871,0.220074)(0.966557,0.203145)(1.024244,0.186217)(1.081930,0.152359)(1.139617,0.118501)(1.197303,0.084644)(1.254990,0.000000)(1.312676,0.000000)(1.370363,0.000000)(1.428049,0.000000)(1.485736,0.000000)(1.543422,0.016929)(1.601109,0.000000)(1.658795,0.000000)(1.716482,0.000000)(1.774168,0.000000)(1.831855,0.000000)(1.889541,0.000000)
        };
        \addplot[smooth,RED,line width=.5pt] plot coordinates{
        (-0.932564,0.000011)(-0.876357,0.000015)(-0.820150,0.000027)(-0.763942,0.000096)(-0.707735,0.626535)(-0.651528,0.790466)(-0.595321,0.840424)(-0.539113,0.848426)(-0.482906,0.837548)(-0.426699,0.817554)(-0.370492,0.793042)(-0.314284,0.766450)(-0.258077,0.739061)(-0.201870,0.711542)(-0.145663,0.684330)(-0.089456,0.657643)(-0.033248,0.631524)(0.022959,0.606060)(0.079166,0.581245)(0.135373,0.557030)(0.191581,0.533336)(0.247788,0.510179)(0.303995,0.487498)(0.360202,0.465188)(0.416410,0.443181)(0.472617,0.421439)(0.528824,0.399909)(0.585031,0.378434)(0.641238,0.356967)(0.697446,0.335388)(0.753653,0.313603)(0.809860,0.291438)(0.866067,0.268706)(0.922275,0.245164)(0.978482,0.220434)(1.034689,0.193933)(1.090896,0.164749)(1.147104,0.130963)(1.203311,0.087208)(1.259518,0.000019)(1.315725,0.000007)(1.371932,0.000004)(1.428140,0.000003)(1.484347,0.000003)(1.540554,0.000002)(1.596761,0.000002)(1.652969,0.000002)(1.709176,0.000002)(1.765383,0.000001)(1.821590,0.000001)
        };
        \coordinate (spike) at (1.51,0.002);
        \coordinate (spike_spy) at (1.53,0.5);
        \end{axis}
        \begin{scope}
          \spy[black!50!white,size=1cm,circle,connect spies,magnification=5] on (spike) in node [fill=none] at (spike_spy);
        \end{scope}
        % \end{axis}
        % \begin{scope}
        %   \spy[black!50!white,size=1cm,circle,connect spies,magnification=3] on (2.6,0.05) in node [fill=none] at (2.4,1.5);
        % \end{scope}
        \end{tikzpicture}
        \caption{Stud: eigs of \(\K\) for \(P_1\)}
    \label{fig:eigs-K-f1-student}
    \end{subfigure}%
    %%% No 2: approximation f1
    \begin{subfigure}[c]{0.25\textwidth}
        \centering
        \begin{tikzpicture}[font=\footnotesize,spy using outlines]
        \renewcommand{\axisdefaulttryminticks}{4} 
        \pgfplotsset{every major grid/.style={densely dashed}}       
        \tikzstyle{every axis y label}+=[yshift=-10pt] 
        \tikzstyle{every axis x label}+=[yshift=5pt]
        \pgfplotsset{every axis legend/.style={cells={anchor=west},fill=white,
        at={(0.98,1)}, anchor=north east, font=\footnotesize}}
        \begin{axis}[
        width=1.35\textwidth,
        height=1.2\textwidth,
        xmin=-1,
        ymin=0,
        xmax=2,
        ymax=0.9,
        xtick={-1,0,1,1.5},
        yticklabels = {},
        bar width=1.5pt,
        grid=major,
        ymajorgrids=false,
        scaled ticks=true,
        xlabel={},
        ylabel={}
        ]
        \addplot+[ybar,mark=none,color=white,fill=BLUE,area legend] coordinates{
        (-0.937097,0.000000)(-0.879411,0.000000)(-0.821724,0.000000)(-0.764038,0.355504)(-0.706351,0.761795)(-0.648665,0.778724)(-0.590978,0.846439)(-0.533292,0.846439)(-0.475605,0.829510)(-0.417919,0.812582)(-0.360232,0.761795)(-0.302546,0.727938)(-0.244859,0.694080)(-0.187173,0.727938)(-0.129486,0.677151)(-0.071800,0.626365)(-0.014113,0.626365)(0.043573,0.558650)(0.101260,0.592507)(0.158946,0.507864)(0.216633,0.507864)(0.274319,0.507864)(0.332006,0.474006)(0.389692,0.423220)(0.447379,0.423220)(0.505065,0.423220)(0.562752,0.355504)(0.620438,0.338576)(0.678125,0.338576)(0.735811,0.304718)(0.793498,0.253932)(0.851184,0.304718)(0.908871,0.220074)(0.966557,0.237003)(1.024244,0.135430)(1.081930,0.169288)(1.139617,0.118501)(1.197303,0.050786)(1.254990,0.000000)(1.312676,0.000000)(1.370363,0.000000)(1.428049,0.000000)(1.485736,0.000000)(1.543422,0.016929)(1.601109,0.000000)(1.658795,0.000000)(1.716482,0.000000)(1.774168,0.000000)(1.831855,0.000000)(1.889541,0.000000)
        };
        \addplot[smooth,RED,line width=.5pt] plot coordinates{
        (-0.932564,0.000011)(-0.876357,0.000015)(-0.820150,0.000027)(-0.763942,0.000096)(-0.707735,0.626535)(-0.651528,0.790466)(-0.595321,0.840424)(-0.539113,0.848426)(-0.482906,0.837548)(-0.426699,0.817554)(-0.370492,0.793042)(-0.314284,0.766450)(-0.258077,0.739061)(-0.201870,0.711542)(-0.145663,0.684330)(-0.089456,0.657643)(-0.033248,0.631524)(0.022959,0.606060)(0.079166,0.581245)(0.135373,0.557030)(0.191581,0.533336)(0.247788,0.510179)(0.303995,0.487498)(0.360202,0.465188)(0.416410,0.443181)(0.472617,0.421439)(0.528824,0.399909)(0.585031,0.378434)(0.641238,0.356967)(0.697446,0.335388)(0.753653,0.313603)(0.809860,0.291438)(0.866067,0.268706)(0.922275,0.245164)(0.978482,0.220434)(1.034689,0.193933)(1.090896,0.164749)(1.147104,0.130963)(1.203311,0.087208)(1.259518,0.000019)(1.315725,0.000007)(1.371932,0.000004)(1.428140,0.000003)(1.484347,0.000003)(1.540554,0.000002)(1.596761,0.000002)(1.652969,0.000002)(1.709176,0.000002)(1.765383,0.000001)(1.821590,0.000001)
        };
        \coordinate (spike) at (1.51,0.002);
        \coordinate (spike_spy) at (1.53,0.5);
        \end{axis}
        \begin{scope}
          \spy[black!50!white,size=1cm,circle,connect spies,magnification=5] on (spike) in node [fill=none] at (spike_spy);
        \end{scope}
        % \end{axis}
        % \begin{scope}
        %   \spy[black!50!white,size=1cm,circle,connect spies,magnification=3] on (2.6,0.05) in node [fill=none] at (2.4,1.5);
        % \end{scope}
        \end{tikzpicture}
        \caption{Stud: eigs of \(\tilde \K\) for \(P_1\)}
    \label{fig:eigs-K-approx-f1-student}
    \end{subfigure}%
    %
    %
    %
    %
    %
    %
    %%% No 3: f_2 empirical
    \begin{subfigure}[c]{0.25\textwidth}
        \centering
        \begin{tikzpicture}[font=\footnotesize,spy using outlines]
        \renewcommand{\axisdefaulttryminticks}{4} 
        \pgfplotsset{every major grid/.style={densely dashed}}       
        \tikzstyle{every axis y label}+=[yshift=-10pt] 
        \tikzstyle{every axis x label}+=[yshift=5pt]
        \pgfplotsset{every axis5 legend/.style={cells={anchor=west},fill=white,
        at={(0.98,1)}, anchor=north east, font=\footnotesize}}
        \begin{axis}[
        width=1.35\textwidth,
        height=1.2\textwidth,
        xmin=-2,
        ymin=0,
        xmax=2.5,
        ymax=0.75,
        yticklabels = {},
        xtick={-2,-1,0,1,2},
        bar width=1.2pt,
        grid=major,
        ymajorgrids=false,
        scaled ticks=true,
        xlabel={},
        ylabel={}
        ]
        \addplot+[ybar,mark=none,color=white,fill=BLUE,area legend] coordinates{
        %(-1.592784,0.000000)(-1.527795,0.000000)(-1.462807,0.000000)(-1.397819,0.000000)(-1.332831,0.045080)(-1.267843,0.105187)(-1.202855,0.105187)(-1.137866,0.165295)(-1.072878,0.150268)(-1.007890,0.195348)(-0.942902,0.240428)(-0.877914,0.240428)(-0.812926,0.270482)(-0.747938,0.315562)(-0.682949,0.375669)(-0.617961,0.405723)(-0.552973,0.420750)(-0.487985,0.510910)(-0.422997,0.525937)(-0.358009,0.571018)(-0.293020,0.631125)(-0.228032,0.661178)(-0.163044,0.661178)(-0.098056,0.721285)(-0.033068,0.691232)(0.031920,0.721285)(0.096908,0.706259)(0.161897,0.691232)(0.226885,0.631125)(0.291873,0.571018)(0.356861,0.555991)(0.421849,0.495884)(0.486837,0.405723)(0.551826,0.390696)(0.616814,0.360643)(0.681802,0.285509)(0.746790,0.300536)(0.811778,0.270482)(0.876766,0.210375)(0.941754,0.195348)(1.006743,0.180321)(1.071731,0.135241)(1.136719,0.120214)(1.201707,0.075134)(1.266695,0.075134)(1.331683,0.000000)(1.396672,0.000000)(1.461660,0.000000)(1.526648,0.000000)(1.591636,0.000000)
        (-1.475049,0.000000)(-1.414702,0.000000)(-1.354356,0.000000)(-1.294009,0.000000)(-1.233663,0.032365)(-1.173316,0.072822)(-1.112970,0.121370)(-1.052624,0.178009)(-0.992277,0.218465)(-0.931931,0.258922)(-0.871584,0.283196)(-0.811238,0.339835)(-0.750891,0.356017)(-0.690545,0.396474)(-0.630198,0.428839)(-0.569852,0.461204)(-0.509506,0.509752)(-0.449159,0.525935)(-0.388813,0.582574)(-0.328466,0.590665)(-0.268120,0.631121)(-0.207773,0.663487)(-0.147427,0.663487)(-0.087080,0.671578)(-0.026734,0.687761)(0.033613,0.663487)(0.093959,0.671578)(0.154305,0.639213)(0.214652,0.631121)(0.274998,0.606848)(0.335345,0.566391)(0.395691,0.534026)(0.456038,0.509752)(0.516384,0.461204)(0.576731,0.412656)(0.637077,0.388382)(0.697423,0.339835)(0.757770,0.315561)(0.818116,0.291287)(0.878463,0.258922)(0.938809,0.202283)(0.999156,0.169917)(1.059502,0.145643)(1.119849,0.064730)(1.180195,0.024274)(1.240541,0.000000)(1.300888,0.000000)(1.361234,0.000000)(1.421581,0.000000)(1.481927,0.000000)
        };
        \addplot[smooth,RED,line width=.5pt] plot coordinates{
        %(-1.592784,0.000002)(-1.527795,0.000002)(-1.462807,0.000002)(-1.397819,0.000003)(-1.332831,0.000003)(-1.267843,0.000004)(-1.202855,0.000005)(-1.137866,0.000007)(-1.072878,0.000011)(-1.007890,0.000044)(-0.942902,0.211986)(-0.877914,0.304801)(-0.812926,0.370729)(-0.747938,0.422551)(-0.682949,0.465006)(-0.617961,0.500491)(-0.552973,0.530406)(-0.487985,0.555651)(-0.422997,0.576859)(-0.358009,0.594413)(-0.293020,0.608662)(-0.228032,0.619834)(-0.163044,0.628111)(-0.098056,0.633547)(-0.033068,0.636249)(0.031920,0.636304)(0.096908,0.633602)(0.161897,0.628225)(0.226885,0.619996)(0.291873,0.608878)(0.356861,0.594711)(0.421849,0.577202)(0.486837,0.556064)(0.551826,0.530891)(0.616814,0.501065)(0.681802,0.465683)(0.746790,0.423363)(0.811778,0.371737)(0.876766,0.306127)(0.941754,0.214044)(1.006743,0.000048)(1.071731,0.000011)(1.136719,0.000007)(1.201707,0.000005)(1.266695,0.000004)(1.331683,0.000003)(1.396672,0.000003)(1.461660,0.000002)(1.526648,0.000002)(1.591636,0.000002)
        (-1.475049,0.000002)(-1.414702,0.000003)(-1.354356,0.000003)(-1.294009,0.000004)(-1.233663,0.000005)(-1.173316,0.000006)(-1.112970,0.000008)(-1.052624,0.000014)(-0.992277,0.079014)(-0.931931,0.230821)(-0.871584,0.312065)(-0.811238,0.372219)(-0.750891,0.420420)(-0.690545,0.460432)(-0.630198,0.494271)(-0.569852,0.523115)(-0.509506,0.547796)(-0.449159,0.568769)(-0.388813,0.586516)(-0.328466,0.601306)(-0.268120,0.613319)(-0.207773,0.622719)(-0.147427,0.629658)(-0.087080,0.634203)(-0.026734,0.636379)(0.033613,0.636257)(0.093959,0.633800)(0.154305,0.628979)(0.214652,0.621759)(0.274998,0.612066)(0.335345,0.599737)(0.395691,0.584638)(0.456038,0.566547)(0.516384,0.545168)(0.576731,0.520063)(0.637077,0.490680)(0.697423,0.456225)(0.757770,0.415379)(0.818116,0.366058)(0.878463,0.304145)(0.938809,0.219227)(0.999156,0.025774)(1.059502,0.000013)(1.119849,0.000008)(1.180195,0.000006)(1.240541,0.000004)(1.300888,0.000004)(1.361234,0.000003)(1.421581,0.000003)(1.481927,0.000002)
        };
        \end{axis}
        % \begin{scope}
        %   \spy[black!50!white,size=1cm,circle,connect spies,magnification=3] on (2.85,0.05) in node [fill=none] at (2.6,1.5);
        % \end{scope}
        \end{tikzpicture}
        \caption{Stud: eigs of \(\K\) for \(P_3\)}
      \label{fig:eigs-K-f3-student}
    \end{subfigure}%
    %%% No 4: f_2 approximation
    \begin{subfigure}[c]{0.25\textwidth}
        \centering
        \begin{tikzpicture}[font=\footnotesize,spy using outlines]
        \renewcommand{\axisdefaulttryminticks}{4} 
        \pgfplotsset{every major grid/.style={densely dashed}}       
        \tikzstyle{every axis y label}+=[yshift=-10pt] 
        \tikzstyle{every axis x label}+=[yshift=5pt]
        \pgfplotsset{every axis legend/.style={cells={anchor=west},fill=white,
        at={(0.98,1)}, anchor=north east, font=\footnotesize}}
        \begin{axis}[
        width=1.35\textwidth,
        height=1.2\textwidth,
        xmin=-2,
        ymin=0,
        xmax=2.5,
        ymax=0.7,
        yticklabels = {},
        xtick={-2,-1,0,1,2},
        bar width=1.2pt,
        grid=major,
        ymajorgrids=false,
        scaled ticks=true,
        xlabel={},
        ylabel={}
        ]
        \addplot+[ybar,mark=none,color=white,fill=BLUE,area legend] coordinates{
        %(-1.592784,0.000000)(-1.527795,0.000000)(-1.462807,0.000000)(-1.397819,0.000000)(-1.332831,0.000000)(-1.267843,0.000000)(-1.202855,0.000000)(-1.137866,0.000000)(-1.072878,0.030054)(-1.007890,0.150268)(-0.942902,0.270482)(-0.877914,0.300536)(-0.812926,0.360643)(-0.747938,0.420750)(-0.682949,0.450803)(-0.617961,0.480857)(-0.552973,0.540964)(-0.487985,0.571018)(-0.422997,0.586044)(-0.358009,0.601071)(-0.293020,0.616098)(-0.228032,0.661178)(-0.163044,0.646151)(-0.098056,0.676205)(-0.033068,0.676205)(0.031920,0.646151)(0.096908,0.631125)(0.161897,0.676205)(0.226885,0.646151)(0.291873,0.586044)(0.356861,0.571018)(0.421849,0.586044)(0.486837,0.510910)(0.551826,0.525937)(0.616814,0.450803)(0.681802,0.405723)(0.746790,0.375669)(0.811778,0.270482)(0.876766,0.255455)(0.941754,0.165295)(1.006743,0.045080)(1.071731,0.000000)(1.136719,0.000000)(1.201707,0.000000)(1.266695,0.000000)(1.331683,0.000000)(1.396672,0.000000)(1.461660,0.000000)(1.526648,0.000000)(1.591636,0.000000)
        (-1.475049,0.000000)(-1.414702,0.000000)(-1.354356,0.000000)(-1.294009,0.000000)(-1.233663,0.000000)(-1.173316,0.000000)(-1.112970,0.000000)(-1.052624,0.064730)(-0.992277,0.186100)(-0.931931,0.250830)(-0.871584,0.331743)(-0.811238,0.372200)(-0.750891,0.420748)(-0.690545,0.461204)(-0.630198,0.493569)(-0.569852,0.542117)(-0.509506,0.542117)(-0.449159,0.566391)(-0.388813,0.598756)(-0.328466,0.606848)(-0.268120,0.639213)(-0.207773,0.631121)(-0.147427,0.639213)(-0.087080,0.655395)(-0.026734,0.655395)(0.033613,0.631121)(0.093959,0.639213)(0.154305,0.647304)(0.214652,0.631121)(0.274998,0.614939)(0.335345,0.590665)(0.395691,0.566391)(0.456038,0.550208)(0.516384,0.525935)(0.576731,0.485478)(0.637077,0.445022)(0.697423,0.412656)(0.757770,0.364109)(0.818116,0.331743)(0.878463,0.234648)(0.938809,0.194191)(0.999156,0.048548)(1.059502,0.000000)(1.119849,0.000000)(1.180195,0.000000)(1.240541,0.000000)(1.300888,0.000000)(1.361234,0.000000)(1.421581,0.000000)(1.481927,0.000000)
        };
        \addplot[smooth,RED,line width=.5pt] plot coordinates{
        %(-1.592784,0.000002)(-1.527795,0.000002)(-1.462807,0.000002)(-1.397819,0.000003)(-1.332831,0.000003)(-1.267843,0.000004)(-1.202855,0.000005)(-1.137866,0.000007)(-1.072878,0.000011)(-1.007890,0.000044)(-0.942902,0.211986)(-0.877914,0.304801)(-0.812926,0.370729)(-0.747938,0.422551)(-0.682949,0.465006)(-0.617961,0.500491)(-0.552973,0.530406)(-0.487985,0.555651)(-0.422997,0.576859)(-0.358009,0.594413)(-0.293020,0.608662)(-0.228032,0.619834)(-0.163044,0.628111)(-0.098056,0.633547)(-0.033068,0.636249)(0.031920,0.636304)(0.096908,0.633602)(0.161897,0.628225)(0.226885,0.619996)(0.291873,0.608878)(0.356861,0.594711)(0.421849,0.577202)(0.486837,0.556064)(0.551826,0.530891)(0.616814,0.501065)(0.681802,0.465683)(0.746790,0.423363)(0.811778,0.371737)(0.876766,0.306127)(0.941754,0.214044)(1.006743,0.000048)(1.071731,0.000011)(1.136719,0.000007)(1.201707,0.000005)(1.266695,0.000004)(1.331683,0.000003)(1.396672,0.000003)(1.461660,0.000002)(1.526648,0.000002)(1.591636,0.000002)
        (-1.475049,0.000002)(-1.414702,0.000003)(-1.354356,0.000003)(-1.294009,0.000004)(-1.233663,0.000005)(-1.173316,0.000006)(-1.112970,0.000008)(-1.052624,0.000014)(-0.992277,0.079014)(-0.931931,0.230821)(-0.871584,0.312065)(-0.811238,0.372219)(-0.750891,0.420420)(-0.690545,0.460432)(-0.630198,0.494271)(-0.569852,0.523115)(-0.509506,0.547796)(-0.449159,0.568769)(-0.388813,0.586516)(-0.328466,0.601306)(-0.268120,0.613319)(-0.207773,0.622719)(-0.147427,0.629658)(-0.087080,0.634203)(-0.026734,0.636379)(0.033613,0.636257)(0.093959,0.633800)(0.154305,0.628979)(0.214652,0.621759)(0.274998,0.612066)(0.335345,0.599737)(0.395691,0.584638)(0.456038,0.566547)(0.516384,0.545168)(0.576731,0.520063)(0.637077,0.490680)(0.697423,0.456225)(0.757770,0.415379)(0.818116,0.366058)(0.878463,0.304145)(0.938809,0.219227)(0.999156,0.025774)(1.059502,0.000013)(1.119849,0.000008)(1.180195,0.000006)(1.240541,0.000004)(1.300888,0.000004)(1.361234,0.000003)(1.421581,0.000003)(1.481927,0.000002)
        };
        \end{axis}
        % \begin{scope}
        %   \spy[black!50!white,size=1cm,circle,connect spies,magnification=3] on (2.85,0.05) in node [fill=none] at (2.6,1.5);
        % \end{scope}
        \end{tikzpicture}
        \caption{Stud: eigs of \(\tilde \K\) for \(P_3\)}
    \label{fig:eigs-K-approx-f3-student}
    \end{subfigure}%
    \caption{Eigenvalue distributions of \(\K\) and \(\tilde\K\) from Theorem~\ref{prop:K-approx-final} (blue) and \(\mathcal L\) from Theorem~\ref{theo:lsd} (red), for \(\z_i\) with Gaussian (top) or Student-t with degree of freedom \(7\) (bottom) entries; functions \(f(x) = P_1(x) = x\), \(f(x) = P_2(x) =(x^2 - 1)\sqrt 2\), \(f(x) = P_3(x) = (x^3 - 3x)/\sqrt 6\); \(n=2\,048\), \(p=8\,192\), \(\bmu_1 = -[2;\zo_{p-1}] = -\bmu_2\) and \(\E_1 = - 10\I_p/\sqrt p = -\E_2\).}
    \label{fig:validation-approx}
\end{figure}
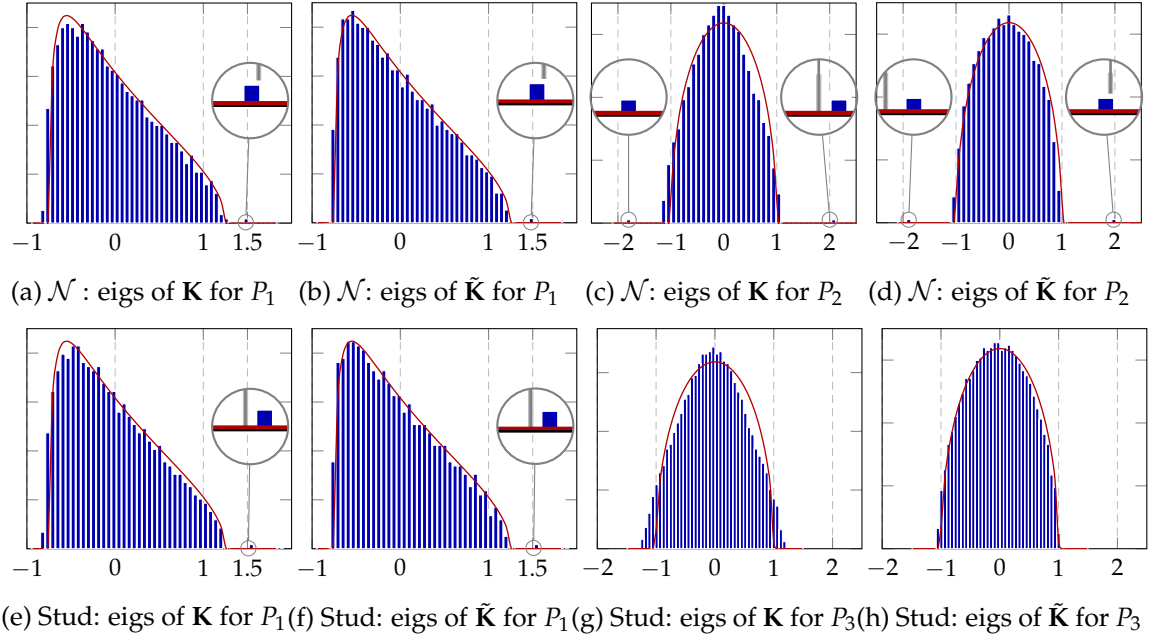

Figure~\ref{fig:validation-approx} compares the spectra of \(\K\) and \(\tilde\K\) for random vectors with independent Gaussian or Student-t entries, for the first three (normalized) Hermite polynomials \(P_1(x)\), \(P_2(x)\) and \(P_3(x)\). These numerical evidences validate Theorem~\ref{prop:K-approx-final}: only for \(P_1(x)\) and \(P_2(x)\) is an isolated eigenvalue observed. Besides, as shown in the bottom display of Figure~\ref{fig:eigvecs-K-and-K_N}, the corresponding eigenvector is, as expected, a noisy version of linear combinations of \(\j_1,\j_2\).

\begin{Remark}[Even and odd \(f\)]
While \({\rm rank}(\tilde \K_I)\leq 4\) (as the sum of two rank-two terms), in Figure~\ref{fig:validation-approx} \emph{no more than} two isolated eigenvalues are observed (for \(f = P_1\) only one on the right side, for \(f = P_2\) one on each side). This follows from \(a_2=0\) when \(f=P_1\) and \(a_1 = 0\) for \(f=P_2\). More generally, for \(f\) \emph{odd} (\(f(-x)=-f(x)\) ), \(a_2 = 0\) and the statistical information on covariances (through \(\E\)) asymptotically vanishes in \(\K\); for \(f\) \emph{even} (\(f(-x)=f(x)\)), \(a_1 = 0\) and information about the means \(\bmu_1,\bmu_2\) vanishes. Thus, only \(f\) neither odd nor even can preserve both first and second order discriminating statistics (e.g., the popular ReLU function \(f(x) = \max(0,x)\)). This was previously remarked in \cite{liao2018spectrum} based on a local expansion of smooth \(f\) in a similar setting.
\end{Remark}

\section{Practical consequences: universality of binary kernels}
\label{sec:practice}

As a direct consequence of Theorem~\ref{prop:K-approx-final}, the performance of spectral clustering for large dimensional mixture models of the type \eqref{eq:mixture} only depends on the \emph{three} parameters of the nonlinear function \(f\): \(a_1 = \EE[\xi f(\xi)]\), \(a_2 = \EE[\xi^2 f(\xi)]/\sqrt 2\) and \(\nu = \EE[f^2(\xi)]\). The parameters \(a_1,\nu\) determine the limiting spectral measure \(\mathcal L\) of \(\K\) (since \(\K\) and \(\K_N\) asymptotically differ by a rank-\(4\) matrix, they share the same limiting spectral measure) while \(a_2,a_2\) determine the low rank structure within \(\tilde\K_I\). 

As an immediate consequence, arbitrary (square-summable) kernel functions \(f\) (with \(a_0=0\)) are asymptotically \emph{equivalent} to the simple cubic function \(\tilde f(x) = c_3 x^3 + c_2 x^2 + c_1 x - c_2\) having the \emph{same} Hermite polynomial coefficients \(a_1, a_2 ,\nu\).\footnote{The coefficients being related through \(a_1 = 3 c_3 + c_1\), \(a_2 = \sqrt 2 c_2\) and \(\nu = (3c_3 + c_1)^2 + 6 c_3^2 + 2c_2^2\).}

\medskip

The idea of this section is to design a prototypical family \(\mathcal F\) of functions \(f\) having (i) universal properties with respect to \((a_1,a_2,\nu)\), i.e., for each \((a_1,a_2,\nu)\) there exists \(f\in\mathcal F\) with these Hermite coefficients and (ii) having numerically advantageous properties. Thus, any arbitrary kernel function \(f\) can be mapped, through \((a_1,a_2,\nu)\), to a function in \(\mathcal F\) with good numerical properties.

%Moreover, one can indeed construct much simpler functions \(f\) which share the same coefficients with a cubic function (thus of equivalent clustering performance) with a huge gain in terms of both allocated memory size and running time. For instance, consider the piecewise function

\medskip

One such prototypical family \(\mathcal F\) can be the set of \(f\), parametrized by \((t,s_-,s_+)\), and defined as
\begin{equation}\label{eq:def-piecewise-func}
    f(x) = \begin{cases} -r t \quad & x \le \sqrt 2 s_- \\ 0 \quad & \sqrt 2 s_- < x \le \sqrt 2 s_+ \\ t \quad & x > \sqrt 2 s_+ \end{cases}  \text{with} \quad \begin{cases} a_1 = \frac{t}{\sqrt{2\pi}} ( e^{-s_+^2} + r e^{-s_-^2} ) \\
    a_2 = \frac{t}{\sqrt{2\pi}} ( s_+ e^{-s_+^2} + r s_- e^{-s_-^2} ) \\
    \nu = \frac{t^2}2 \left(1-\erf(s_+) \right) ( 1 + r ) \end{cases}
\end{equation}
\begin{figure}[t!]
\begin{minipage}[c]{0.48\textwidth}%
\centering
%(\(100\) interations of power method)
  \captionof{table}{Storage size and top eigenvector running time of \(\K\) for piecewise constant and cubic \(f\), in the setting of Figure~\ref{fig:validation-approx} and~\ref{fig:piecewise-versus-cubic}.}
  \label{tab:size-time-f}
  \centering
  \begin{tabular}{lll}
    \toprule
    %\multicolumn{2}{c}{Part}                   \\
    %\cmidrule(r){1-2}
    \(f\)     & Size (Mb)     & Running time (s) \\
    \midrule
    Piecewise & \(4.15\)  & \(0.2390\)     \\
    Cubic     & \(16.75\) & \(0.4244\)      \\
    \bottomrule
  \end{tabular}
\end{minipage}
\hfill{}%
\begin{minipage}[c]{0.48\textwidth}%
\centering
\begin{tikzpicture}[scale=.6]
  \pgfplotsset{every axis legend/.style={cells={anchor=west},fill=white,
  at={(0.02,1)}, anchor=north west, font=\footnotesize}}
  \draw[->] (-3,0) -- (3,0) node[right] {};
  \draw[->] (0,-1) -- (0,3) node[above] {};
%   %
%   %
%   %
  \draw[scale=1.5,domain=-2:0,smooth,variable=\x,GREEN,line width=1.5pt] plot ({\x},{-0.314598});
  \draw[scale=1.5,domain=-0.314598:0,densely dashed,variable=\y,GREEN,line width=1.5pt] plot ({0},{\y});
  \draw[scale=1.5,domain=0:1.412826,smooth,variable=\x,GREEN,line width=1.5pt] plot ({\x},{0});
  \draw[scale=1.5,domain=0:2,densely dashed,variable=\y,GREEN,line width=1.5pt] plot ({1.412826},{\y});
  \draw[scale=1.5,domain=1.432866:2,smooth,variable=\x,GREEN,line width=1.5pt] plot ({\x},{2});
  \draw[scale=1.5,domain=-2:2,smooth,variable=\x,BLUE,line width=1pt] plot ({\x},{0.1306*\x*\x*\x+0.2076*\x*\x+0.0272*\x-0.2076});
  %
  %
  %
  %\draw[scale=2,domain=-2:2,smooth,variable=\x,GREEN,line width=1pt] plot ({\x},{tanh(\x)});
  %
  %
  %
%   \draw[scale=2,domain=-2:-0.4327,smooth,variable=\x,BLUE,line width=1.5pt] plot ({\x},{-0.8744});
%   \draw[scale=2,domain=-0.8744:0,densely dashed,variable=\y,BLUE,line width=1.5pt] plot ({-0.4327},{\y});
%   %
%   \draw[scale=2,domain=-0.4327:0.4327,smooth,variable=\x,BLUE,line width=1.5pt] plot ({\x},{0});
%   \draw[scale=2,domain=0:0.8744,densely dashed,variable=\y,BLUE,line width=1.5pt] plot ({0.4327},{\y});
%   %
%   \draw[scale=2,domain=0.4327:2,smooth,variable=\x,BLUE,line width=1.5pt] plot ({\x},{0.8744});
  %
  %
  %
%  \draw[scale=1.5,domain=-2:2,smooth,variable=\x,RED,line width=1pt] plot ({\x},{0.1145*\x*\x*\x+0.2352*\x});
  \end{tikzpicture}
   %\caption{Piecewise constant (blue), \(\tanh\) (green) and cubic (red) function with equal \((a_1, a_2, \nu)\).}
   \captionof{figure}{Piecewise constant (green) versus cubic (blue) function with equal \((a_1, a_2, \nu)\).}
  \label{fig:illustration-nonlinearities}
\end{minipage}
\end{figure}

where \(r \equiv \frac{1-\erf(s_+)}{1 + \erf(s_-)}\). Figure~\ref{fig:illustration-nonlinearities} displays \(f\) given in \eqref{eq:def-piecewise-func} together with the cubic function \(c_3 x^3 + c_2 (x^2 -1) + c_1 x\) sharing the same Hermite coefficients \((a_1, a_2, \nu)\). 

The class of equivalence of kernel functions induced by this mapping is quite unlike that raised in \cite{el2010spectrum} or \cite{couillet2016kernel} in the ``improper'' scaling \(f(\x_i^\T\x_j/p)\) regime. While in the latter, functions \(f(x)\) of the same class of equivalence are those having common \(f'(0)\) and \(f''(0)\) values, in the present case, these functions may have no similar local behavior (as shown in the example of Figure~\ref{fig:illustration-nonlinearities}).

For the piecewise constant function defined in \eqref{eq:def-piecewise-func} and the associated cubic function having the same \( (a_1, a_2, \nu) \), a close match is observed for both eigenvalues and top eigenvectors of \(\K\) in Figure~\ref{fig:piecewise-versus-cubic}, with gains in both storage size and computational time displayed in Table~\ref{tab:size-time-f}.

%%% comparison between cubic and piecewise function
\begin{figure}[htb]
\centering
    \begin{subfigure}[c]{0.25\textwidth}
        \centering
        \begin{tikzpicture}[font=\footnotesize,spy using outlines]
        \renewcommand{\axisdefaulttryminticks}{4} 
        \pgfplotsset{every major grid/.style={densely dashed}}       
        \tikzstyle{every axis y label}+=[yshift=-10pt] 
        \tikzstyle{every axis x label}+=[yshift=5pt]
        \pgfplotsset{every axis legend/.style={cells={anchor=west},fill=white,
        at={(0.98,1)}, anchor=north east, font=\footnotesize }}
        \begin{axis}[
        width=1.35\textwidth,
        height=1.2\textwidth,
        xmin=-.8,
        ymin=0,
        xmax=1.2,
        ymax=1.17,
        ytick={0,0.2,0.4,0.6,0.8},
        yticklabels = {},
        bar width=2pt,
        grid=major,
        ymajorgrids=false,
        scaled ticks=true,
        xlabel={},
        ylabel={}
        ]
        \addplot+[ybar,mark=none,color=white,fill=GREEN,area legend] coordinates{
        %(-0.703494,0.000000)(-0.665548,0.000000)(-0.627602,0.000000)(-0.589656,0.115810)(-0.551711,0.373167)(-0.513765,0.566185)(-0.475819,0.707731)(-0.437873,0.784938)(-0.399927,0.913616)(-0.361981,0.926484)(-0.324035,1.016559)(-0.286089,1.042294)(-0.248144,1.068030)(-0.210198,1.106634)(-0.172252,1.093766)(-0.134306,1.106634)(-0.096360,1.119501)(-0.058414,1.119501)(-0.020468,1.068030)(0.017478,1.080898)(0.055424,1.055162)(0.093369,1.003691)(0.131315,0.990823)(0.169261,0.952220)(0.207207,0.900748)(0.245153,0.836409)(0.283099,0.823541)(0.321045,0.759202)(0.358991,0.681995)(0.396936,0.669127)(0.434882,0.591920)(0.472828,0.527581)(0.510774,0.463242)(0.548720,0.373167)(0.586666,0.321696)(0.624612,0.167282)(0.662558,0.012868)(0.700504,0.000000)(0.738449,0.000000)(0.776395,0.000000)(0.814341,0.000000)(0.852287,0.000000)(0.890233,0.000000)(0.928179,0.012868)(0.966125,0.000000)(1.004071,0.000000)(1.042017,0.000000)(1.079962,0.000000)(1.117908,0.000000)(1.155854,0.000000) %% equivalent to cubic
        (-0.695432,0.000000)(-0.657052,0.000000)(-0.618673,0.000000)(-0.580293,0.203560)(-0.541914,0.458009)(-0.503534,0.610679)(-0.465155,0.750626)(-0.426776,0.801516)(-0.388396,0.916018)(-0.350017,0.954186)(-0.311637,1.017798)(-0.273258,1.055965)(-0.234878,1.055965)(-0.196499,1.106855)(-0.158120,1.119578)(-0.119740,1.106855)(-0.081361,1.106855)(-0.042981,1.106855)(-0.004602,1.068688)(0.033778,1.068688)(0.072157,1.030520)(0.110536,0.966908)(0.148916,0.979630)(0.187295,0.916018)(0.225675,0.877851)(0.264054,0.826961)(0.302434,0.788793)(0.340813,0.725181)(0.379192,0.674291)(0.417572,0.623401)(0.455951,0.547066)(0.494331,0.496176)(0.532710,0.407119)(0.571090,0.330784)(0.609469,0.267172)(0.647848,0.076335)(0.686228,0.000000)(0.724607,0.000000)(0.762987,0.000000)(0.801366,0.000000)(0.839745,0.000000)(0.878125,0.000000)(0.916504,0.000000)(0.954884,0.012722)(0.993263,0.000000)(1.031643,0.000000)(1.070022,0.000000)(1.108401,0.000000)(1.146781,0.000000)(1.185160,0.000000)
        %(-0.662075,0.000000)(-0.626293,0.000000)(-0.590512,0.000000)(-0.554731,0.382098)(-0.518950,0.695964)(-0.483169,0.846073)(-0.447388,0.941598)(-0.411607,1.023476)(-0.375826,1.078061)(-0.340045,1.078061)(-0.304264,1.078061)(-0.268482,1.132647)(-0.232701,1.119000)(-0.196920,1.064415)(-0.161139,1.078061)(-0.125358,1.023476)(-0.089577,1.050769)(-0.053796,0.982537)(-0.018015,0.968891)(0.017766,0.941598)(0.053547,0.873366)(0.089329,0.887012)(0.125110,0.846073)(0.160891,0.818781)(0.196672,0.750549)(0.232453,0.764195)(0.268234,0.682317)(0.304015,0.682317)(0.339796,0.655025)(0.375577,0.586793)(0.411358,0.559500)(0.447140,0.532207)(0.482921,0.491268)(0.518702,0.463976)(0.554483,0.436683)(0.590264,0.368451)(0.626045,0.327512)(0.661826,0.300220)(0.697607,0.231988)(0.733388,0.136463)(0.769169,0.054585)(0.804951,0.000000)(0.840732,0.000000)(0.876513,0.013646)(0.912294,0.000000)(0.948075,0.000000)(0.983856,0.000000)(1.019637,0.000000)(1.055418,0.000000)(1.091199,0.000000) %% equivalent to tanh
        };
        \addplot[smooth,RED,line width=.5pt] plot coordinates{
        (-0.695432,0.000024)(-0.657052,0.000040)(-0.618673,0.000284)(-0.580293,0.453299)(-0.541914,0.630014)(-0.503534,0.752008)(-0.465155,0.843796)(-0.426776,0.915040)(-0.388396,0.970674)(-0.350017,1.013776)(-0.311637,1.046358)(-0.273258,1.069854)(-0.234878,1.085435)(-0.196499,1.093930)(-0.158120,1.096093)(-0.119740,1.092453)(-0.081361,1.083547)(-0.042981,1.069876)(-0.004602,1.051778)(0.033778,1.029612)(0.072157,1.003707)(0.110536,0.974355)(0.148916,0.941727)(0.187295,0.906039)(0.225675,0.867450)(0.264054,0.825965)(0.302434,0.781607)(0.340813,0.734256)(0.379192,0.683708)(0.417572,0.629556)(0.455951,0.571107)(0.494331,0.507117)(0.532710,0.435406)(0.571090,0.351717)(0.609469,0.244075)(0.647848,0.000150)(0.686228,0.000025)(0.724607,0.000016)(0.762987,0.000012)(0.801366,0.000010)(0.839745,0.000008)(0.878125,0.000007)(0.916504,0.000006)(0.954884,0.000005)(0.993263,0.000005)(1.031643,0.000004)(1.070022,0.000004)(1.108401,0.000003)(1.146781,0.000003)(1.185160,0.000003)
        };
        \coordinate (spike) at (0.95,0.01);
        \coordinate (spike_spy) at (.9,0.5);
        \end{axis}
        \begin{scope}
          \spy[black!50!white,size=.8cm,circle,connect spies,magnification=8] on (spike) in node [fill=none] at (spike_spy);
        \end{scope}
        % \end{axis}
        % \begin{scope}
        %   \spy[black!50!white,size=.8cm,circle,connect spies,magnification=10] on (2.75,0.02) in node [fill=none] at (2.5,1.5);
        % \end{scope}
        \end{tikzpicture}
        \caption{Piecewise constant}
    \end{subfigure}
    \hfill{}%
    %
    %
    %
    %
    %
    %%% cubic function
    \begin{subfigure}[c]{0.25\textwidth}
        \centering
        \begin{tikzpicture}[font=\footnotesize,spy using outlines]
        \renewcommand{\axisdefaulttryminticks}{4} 
        \pgfplotsset{every major grid/.style={densely dashed}}       
        \tikzstyle{every axis y label}+=[yshift=-10pt] 
        \tikzstyle{every axis x label}+=[yshift=5pt]
        \pgfplotsset{every axis legend/.style={cells={anchor=west},fill=white,
        at={(0.98,1)}, anchor=north east, font=\footnotesize }}
        \begin{axis}[
        width=1.35\textwidth,
        height=1.2\textwidth,
        xmin=-.8,
        ymin=0,
        xmax=1.2,
        ymax=1.2,
        ytick={0,0.2,0.4,0.6,0.8},
        yticklabels = {},
        bar width=2pt,
        grid=major,
        ymajorgrids=false,
        scaled ticks=true,
        xlabel={},
        ylabel={}
        ]
        \addplot+[ybar,mark=none,color=white,fill=BLUE!60!white,area legend] coordinates{
        %(-0.767323,0.000000)(-0.726194,0.000000)(-0.685066,0.000000)(-0.643938,0.166210)(-0.602810,0.296804)(-0.561681,0.415526)(-0.520553,0.510503)(-0.479425,0.593608)(-0.438297,0.688586)(-0.397169,0.759819)(-0.356040,0.854796)(-0.314912,0.914157)(-0.273784,0.985390)(-0.232656,1.056623)(-0.191527,1.127856)(-0.150399,1.139728)(-0.109271,1.163472)(-0.068143,1.163472)(-0.027014,1.163472)(0.014114,1.127856)(0.055242,1.044751)(0.096370,1.044751)(0.137499,0.961646)(0.178627,0.914157)(0.219755,0.854796)(0.260883,0.795435)(0.302011,0.724202)(0.343140,0.652969)(0.384268,0.605481)(0.425396,0.534248)(0.466524,0.486759)(0.507653,0.415526)(0.548781,0.368037)(0.589909,0.308676)(0.631037,0.249316)(0.672166,0.166210)(0.713294,0.047489)(0.754422,0.000000)(0.795550,0.000000)(0.836678,0.000000)(0.877807,0.000000)(0.918935,0.000000)(0.960063,0.000000)(1.001191,0.011872)(1.042320,0.000000)(1.083448,0.000000)(1.124576,0.000000)(1.165704,0.000000)(1.206833,0.000000)(1.247961,0.000000)
        (-0.773288,0.000000)(-0.731589,0.000000)(-0.689890,0.000000)(-0.648191,0.175645)(-0.606492,0.292742)(-0.564794,0.421549)(-0.523095,0.480097)(-0.481396,0.608904)(-0.439697,0.679162)(-0.397998,0.761130)(-0.356299,0.819678)(-0.314600,0.948485)(-0.272901,0.995324)(-0.231202,1.053872)(-0.189504,1.100711)(-0.147805,1.147550)(-0.106106,1.170969)(-0.064407,1.147550)(-0.022708,1.135840)(0.018991,1.100711)(0.060690,1.077292)(0.102389,0.995324)(0.144088,0.983614)(0.185787,0.901646)(0.227485,0.819678)(0.269184,0.784549)(0.310883,0.702582)(0.352582,0.655743)(0.394281,0.585485)(0.435980,0.503517)(0.477679,0.456678)(0.519378,0.421549)(0.561077,0.339581)(0.602775,0.304452)(0.644474,0.222484)(0.686173,0.140516)(0.727872,0.035129)(0.769571,0.000000)(0.811270,0.000000)(0.852969,0.000000)(0.894668,0.000000)(0.936367,0.000000)(0.978066,0.000000)(1.019764,0.011710)(1.061463,0.000000)(1.103162,0.000000)(1.144861,0.000000)(1.186560,0.000000)(1.228259,0.000000)(1.269958,0.000000)
        };
        \addplot[smooth,RED,line width=.5pt] plot coordinates{
        %(-0.767323,0.000010)(-0.726194,0.000012)(-0.685066,0.000016)(-0.643938,0.000023)(-0.602810,0.000039)(-0.561681,0.000925)(-0.520553,0.484138)(-0.479425,0.664061)(-0.438297,0.787358)(-0.397169,0.878771)(-0.356040,0.948199)(-0.314912,1.000855)(-0.273784,1.039915)(-0.232656,1.067535)(-0.191527,1.085300)(-0.150399,1.094418)(-0.109271,1.095804)(-0.068143,1.090240)(-0.027014,1.078466)(0.014114,1.061007)(0.055242,1.038429)(0.096370,1.011097)(0.137499,0.979435)(0.178627,0.943768)(0.219755,0.904329)(0.260883,0.861326)(0.302011,0.814827)(0.343140,0.764767)(0.384268,0.711006)(0.425396,0.653079)(0.466524,0.590308)(0.507653,0.521360)(0.548781,0.443662)(0.589909,0.351930)(0.631037,0.230430)(0.672166,0.000059)(0.713294,0.000021)(0.754422,0.000014)(0.795550,0.000011)(0.836678,0.000009)(0.877807,0.000007)(0.918935,0.000006)(0.960063,0.000005)(1.001191,0.000005)(1.042320,0.000004)(1.083448,0.000004)(1.124576,0.000003)(1.165704,0.000003)(1.206833,0.000003)(1.247961,0.000003)
        (-0.773288,0.000014)(-0.731589,0.000018)(-0.689890,0.000027)(-0.648191,0.000053)(-0.606492,0.291001)(-0.564794,0.553186)(-0.523095,0.707077)(-0.481396,0.817062)(-0.439697,0.899939)(-0.397998,0.963284)(-0.356299,1.011428)(-0.314600,1.046984)(-0.272901,1.071935)(-0.231202,1.087620)(-0.189504,1.095144)(-0.147805,1.095386)(-0.106106,1.089029)(-0.064407,1.076745)(-0.022708,1.059083)(0.018991,1.036452)(0.060690,1.009305)(0.102389,0.978043)(0.144088,0.942873)(0.185787,0.904100)(0.227485,0.861885)(0.269184,0.816308)(0.310883,0.767301)(0.352582,0.714703)(0.394281,0.658208)(0.435980,0.597039)(0.477679,0.530099)(0.519378,0.455167)(0.561077,0.367738)(0.602775,0.255625)(0.644474,0.000160)(0.686173,0.000024)(0.727872,0.000015)(0.769571,0.000011)(0.811270,0.000009)(0.852969,0.000007)(0.894668,0.000006)(0.936367,0.000005)(0.978066,0.000005)(1.019764,0.000004)(1.061463,0.000004)(1.103162,0.000003)(1.144861,0.000003)(1.186560,0.000003)(1.228259,0.000003)(1.269958,0.000002)
        };
        \coordinate (spike) at (1.02,0.01);
        \coordinate (spike_spy) at (.9,0.5);
        \end{axis}
        \begin{scope}
          \spy[black!50!white,size=.8cm,circle,connect spies,magnification=8] on (spike) in node [fill=none] at (spike_spy);
        \end{scope}
        % \end{axis}
        % \begin{scope}
        %   \spy[black!50!white,size=.8cm,circle,connect spies,magnification=10] on (2.85,0.02) in node [fill=none] at (2.5,1.5);
        % \end{scope}
        \end{tikzpicture}
        \caption{Cubic polynomial}
    \end{subfigure}%
    \begin{subfigure}[c]{0.48\textwidth}
    \centering
    \begin{tikzpicture}[font=\footnotesize,spy using outlines]
        \renewcommand{\axisdefaulttryminticks}{4} 
        \pgfplotsset{every major grid/.style={densely dashed}}       
        \tikzstyle{every axis y label}+=[yshift=-10pt] 
        \tikzstyle{every axis x label}+=[yshift=5pt]
        \pgfplotsset{every axis legend/.style={cells={anchor=west},fill=white,
        at={(0.98,1)}, anchor=north east, font=\footnotesize}}
        \begin{axis}[
        width=1\textwidth,
        height=.7\textwidth,
        xmin=0,
        ymin=-0.05,
        xmax=410,
        ymax=0.03,
        ytick={-0.05, 0, 0.05},
        xticklabels = {},
        grid=major,
        ymajorgrids=false,
        scaled ticks=true,
        xlabel={},
        ylabel={}
        ]
        \addplot[densely dashed,GREEN,line width=.5pt] coordinates{
        (1,0.012815)(2,0.012921)(3,-0.004221)(4,0.000062)(5,0.016318)(6,0.009867)(7,0.013321)(8,0.012131)(9,0.005399)(10,-0.004032)(11,0.014383)(12,0.007106)(13,0.018542)(14,0.010648)(15,0.012957)(16,0.005735)(17,-0.006320)(18,0.015816)(19,0.020323)(20,-0.016641)(21,0.002150)(22,0.013875)(23,0.012995)(24,-0.000736)(25,0.009843)(26,0.019045)(27,0.010060)(28,0.001422)(29,-0.008873)(30,-0.010849)(31,0.007342)(32,-0.003722)(33,0.020387)(34,0.011522)(35,0.003709)(36,0.014791)(37,0.009291)(38,0.014614)(39,0.011548)(40,-0.000418)(41,0.011474)(42,-0.013212)(43,0.020337)(44,0.006436)(45,0.004683)(46,-0.008871)(47,0.019684)(48,0.001116)(49,0.014176)(50,0.002939)(51,-0.009900)(52,0.004564)(53,0.010583)(54,0.011497)(55,0.006415)(56,0.018481)(57,0.012916)(58,0.012413)(59,0.003951)(60,0.016971)(61,0.008296)(62,-0.007710)(63,-0.003791)(64,0.002215)(65,0.016340)(66,0.004434)(67,0.002971)(68,0.012980)(69,0.004579)(70,-0.003216)(71,0.011096)(72,-0.002693)(73,0.004766)(74,0.010812)(75,0.018249)(76,0.023383)(77,0.014529)(78,0.010797)(79,0.003643)(80,0.018448)(81,-0.004028)(82,0.009879)(83,0.002050)(84,-0.005329)(85,0.011545)(86,0.013843)(87,0.016424)(88,0.012071)(89,-0.005871)(90,0.022184)(91,0.010940)(92,0.005131)(93,-0.004054)(94,0.018736)(95,0.010148)(96,0.016484)(97,0.010023)(98,0.012485)(99,-0.003344)(100,-0.002414)(101,0.007962)(102,-0.003234)(103,-0.001777)(104,0.014320)(105,0.007648)(106,0.003084)(107,-0.009217)(108,-0.006849)(109,-0.003477)(110,0.004253)(111,0.007988)(112,0.011015)(113,0.001262)(114,0.021348)(115,0.019116)(116,0.001626)(117,0.016761)(118,0.012235)(119,0.003651)(120,0.011049)(121,-0.008965)(122,0.006419)(123,0.018708)(124,0.004367)(125,0.008355)(126,0.010301)(127,0.014212)(128,0.009151)(129,0.007221)(130,0.000796)(131,-0.001584)(132,0.015890)(133,0.004699)(134,0.016153)(135,-0.010707)(136,-0.003650)(137,-0.004758)(138,0.011223)(139,0.011442)(140,0.020056)(141,-0.005985)(142,-0.000996)(143,0.012865)(144,0.010878)(145,0.023698)(146,0.000848)(147,-0.003812)(148,-0.012328)(149,0.002512)(150,-0.000389)(151,-0.011022)(152,0.014179)(153,0.006320)(154,0.012355)(155,0.009310)(156,0.001486)(157,0.007441)(158,0.012634)(159,0.022369)(160,0.006511)(161,0.008971)(162,0.013229)(163,0.000871)(164,0.011653)(165,0.006432)(166,0.010200)(167,0.002211)(168,0.001560)(169,-0.005331)(170,0.017052)(171,0.008849)(172,0.002661)(173,-0.007047)(174,-0.010507)(175,0.003490)(176,0.011689)(177,-0.006515)(178,0.003514)(179,0.015022)(180,0.003016)(181,0.009305)(182,0.007067)(183,0.000579)(184,0.008545)(185,0.003015)(186,0.012926)(187,0.017481)(188,-0.003549)(189,0.010807)(190,-0.013168)(191,-0.005466)(192,-0.006998)(193,0.011133)(194,0.000812)(195,0.010933)(196,0.000722)(197,0.009851)(198,-0.000888)(199,0.008899)(200,0.001905)(201,0.012632)(202,0.012290)(203,0.006777)(204,0.008597)(205,0.008729)(206,-0.041620)(207,-0.012128)(208,-0.021690)(209,-0.034106)(210,-0.011277)(211,-0.028288)(212,-0.023449)(213,-0.043184)(214,-0.024925)(215,0.006442)(216,-0.037266)(217,-0.015097)(218,-0.028398)(219,-0.021920)(220,-0.032176)(221,-0.041151)(222,-0.039924)(223,-0.030302)(224,-0.015817)(225,-0.024588)(226,-0.042787)(227,-0.031620)(228,-0.033721)(229,-0.042946)(230,-0.039722)(231,-0.028867)(232,-0.015314)(233,-0.013575)(234,-0.042645)(235,-0.021709)(236,-0.037688)(237,-0.021859)(238,-0.035809)(239,-0.029556)(240,-0.041571)(241,-0.033309)(242,-0.020114)(243,-0.018279)(244,-0.008171)(245,-0.047024)(246,-0.032769)(247,-0.023435)(248,-0.027948)(249,-0.013233)(250,-0.021650)(251,-0.030571)(252,-0.036989)(253,-0.024759)(254,-0.016090)(255,-0.016789)(256,-0.028734)(257,-0.026973)(258,-0.024661)(259,-0.025011)(260,-0.021533)(261,-0.037831)(262,-0.015651)(263,-0.040873)(264,-0.021642)(265,-0.018203)(266,-0.000593)(267,-0.045232)(268,-0.024090)(269,-0.039502)(270,-0.026161)(271,-0.016852)(272,-0.043255)(273,-0.013614)(274,-0.016781)(275,-0.029071)(276,-0.019855)(277,-0.029933)(278,-0.034187)(279,-0.033428)(280,-0.012436)(281,-0.035311)(282,-0.020071)(283,-0.017127)(284,-0.037260)(285,-0.013074)(286,-0.023107)(287,-0.015786)(288,-0.029411)(289,-0.038258)(290,-0.045727)(291,-0.008412)(292,-0.025826)(293,-0.040474)(294,-0.037216)(295,-0.025181)(296,-0.027724)(297,-0.024930)(298,-0.025337)(299,-0.030178)(300,-0.038002)(301,-0.027394)(302,-0.029228)(303,-0.064109)(304,-0.030276)(305,-0.029698)(306,-0.028912)(307,-0.027763)(308,-0.056315)(309,-0.040484)(310,-0.033805)(311,-0.016977)(312,-0.015798)(313,-0.046205)(314,-0.026865)(315,-0.014016)(316,-0.026015)(317,-0.039719)(318,-0.019074)(319,-0.026030)(320,-0.027753)(321,-0.016964)(322,-0.047414)(323,-0.023450)(324,-0.025117)(325,-0.030094)(326,-0.037989)(327,-0.043954)(328,-0.013012)(329,-0.021748)(330,-0.034182)(331,-0.023233)(332,-0.036615)(333,-0.032165)(334,-0.028839)(335,-0.029426)(336,-0.011019)(337,-0.043641)(338,-0.034059)(339,-0.052494)(340,-0.025637)(341,-0.045627)(342,-0.034462)(343,-0.026409)(344,-0.024289)(345,-0.030404)(346,-0.039596)(347,-0.013341)(348,-0.039855)(349,-0.031225)(350,-0.009753)(351,-0.036281)(352,-0.009981)(353,-0.022981)(354,-0.017320)(355,-0.015644)(356,-0.037773)(357,0.009734)(358,-0.029403)(359,-0.039652)(360,-0.022990)(361,-0.022511)(362,-0.055080)(363,-0.036200)(364,-0.019307)(365,-0.025460)(366,-0.041129)(367,-0.031976)(368,-0.019423)(369,-0.022168)(370,-0.030454)(371,-0.059768)(372,-0.023882)(373,-0.022470)(374,-0.021678)(375,-0.035022)(376,-0.020465)(377,-0.014262)(378,-0.020811)(379,-0.011296)(380,-0.014990)(381,-0.029225)(382,-0.015879)(383,-0.037241)(384,-0.029297)(385,-0.029699)(386,-0.029363)(387,-0.026494)(388,-0.020665)(389,-0.044119)(390,-0.030998)(391,-0.021618)(392,-0.056229)(393,-0.021532)(394,-0.046573)(395,-0.015026)(396,-0.023361)(397,-0.005799)(398,-0.050788)(399,-0.018350)(400,-0.022673)(401,-0.032463)(402,-0.034356)(403,-0.027358)(404,-0.039429)(405,-0.014939)(406,-0.028753)(407,-0.020149)(408,-0.021645)(409,-0.014800)(410,-0.014355)
        };
        %%%
        \addplot[smooth,BLUE!60!white,line width=.5pt] coordinates{
        (1,0.011311)(2,0.009021)(3,0.005284)(4,-0.000844)(5,0.013558)(6,0.005207)(7,0.009430)(8,-0.002252)(9,-0.000425)(10,-0.000707)(11,0.016655)(12,0.008468)(13,0.020167)(14,0.008641)(15,0.017211)(16,0.008312)(17,-0.001284)(18,0.014168)(19,0.014221)(20,-0.010357)(21,-0.003784)(22,0.009141)(23,0.017482)(24,0.001938)(25,0.000934)(26,0.014411)(27,0.013643)(28,-0.005883)(29,0.000946)(30,-0.001582)(31,0.009119)(32,0.004759)(33,0.014458)(34,0.011538)(35,0.010134)(36,0.007809)(37,0.006599)(38,0.007910)(39,0.009649)(40,-0.000720)(41,0.006049)(42,-0.008181)(43,0.012125)(44,0.010313)(45,0.010072)(46,-0.009140)(47,0.010911)(48,-0.002939)(49,0.009315)(50,0.005131)(51,-0.008190)(52,0.003232)(53,0.015840)(54,0.013588)(55,0.003710)(56,0.009177)(57,0.015818)(58,0.013890)(59,0.003708)(60,0.014787)(61,0.007380)(62,-0.005815)(63,-0.004768)(64,0.005518)(65,0.007963)(66,0.006673)(67,0.006831)(68,0.000911)(69,-0.001629)(70,-0.005460)(71,0.008335)(72,-0.003328)(73,0.007443)(74,0.007238)(75,0.021429)(76,0.012032)(77,0.009627)(78,0.005176)(79,0.004473)(80,0.010141)(81,-0.006079)(82,0.011692)(83,0.007963)(84,-0.005175)(85,0.012930)(86,0.011546)(87,0.013013)(88,0.013579)(89,-0.004507)(90,0.020259)(91,0.008653)(92,0.003070)(93,0.003824)(94,0.011227)(95,0.011096)(96,0.009896)(97,0.012890)(98,0.003282)(99,-0.003216)(100,0.000052)(101,0.009565)(102,-0.006036)(103,-0.007288)(104,0.012598)(105,0.010239)(106,0.000963)(107,-0.010173)(108,-0.000358)(109,-0.002377)(110,0.001538)(111,0.011157)(112,0.000650)(113,-0.001098)(114,0.021187)(115,0.016416)(116,0.008254)(117,0.012501)(118,0.008500)(119,0.007228)(120,0.005503)(121,-0.008377)(122,0.009375)(123,0.012397)(124,0.004067)(125,0.003920)(126,0.008550)(127,0.009379)(128,0.012788)(129,0.004720)(130,-0.002657)(131,0.002081)(132,0.014114)(133,0.005218)(134,0.008918)(135,-0.009312)(136,-0.005828)(137,-0.006351)(138,0.004769)(139,0.014836)(140,0.013485)(141,-0.002013)(142,-0.004497)(143,0.008702)(144,0.001979)(145,0.021433)(146,-0.003884)(147,-0.003318)(148,-0.006910)(149,0.005084)(150,-0.001810)(151,-0.007901)(152,0.005751)(153,0.010070)(154,0.011560)(155,0.007418)(156,-0.000011)(157,0.010278)(158,0.015677)(159,0.019509)(160,0.006347)(161,0.009056)(162,0.009274)(163,0.004073)(164,0.011074)(165,0.003367)(166,0.007342)(167,0.005019)(168,-0.001437)(169,-0.009353)(170,0.015127)(171,0.005268)(172,0.008719)(173,-0.003476)(174,-0.002078)(175,0.002261)(176,0.007317)(177,-0.001819)(178,0.002063)(179,0.010775)(180,0.005036)(181,0.009678)(182,0.009278)(183,-0.001662)(184,0.005756)(185,0.009244)(186,0.008882)(187,0.012998)(188,-0.002808)(189,0.013006)(190,-0.011791)(191,0.003734)(192,-0.009850)(193,0.003641)(194,0.006544)(195,0.007449)(196,0.001317)(197,0.009827)(198,-0.000444)(199,-0.003215)(200,-0.002539)(201,0.011963)(202,0.013692)(203,0.001981)(204,0.008394)(205,0.005342)(206,-0.037194)(207,-0.009073)(208,-0.022479)(209,-0.041239)(210,-0.008380)(211,-0.035158)(212,-0.022177)(213,-0.039973)(214,-0.018233)(215,0.006354)(216,-0.031918)(217,-0.011118)(218,-0.032130)(219,-0.023863)(220,-0.029026)(221,-0.039271)(222,-0.040366)(223,-0.044206)(224,-0.028423)(225,-0.017743)(226,-0.032210)(227,-0.019835)(228,-0.038506)(229,-0.038734)(230,-0.037778)(231,-0.034989)(232,-0.030044)(233,-0.020824)(234,-0.046296)(235,-0.024136)(236,-0.038120)(237,-0.028714)(238,-0.033501)(239,-0.030295)(240,-0.033226)(241,-0.026821)(242,-0.016038)(243,-0.024587)(244,-0.000730)(245,-0.041639)(246,-0.035226)(247,-0.017668)(248,-0.019755)(249,-0.023481)(250,-0.019978)(251,-0.037462)(252,-0.035525)(253,-0.021492)(254,-0.010866)(255,-0.015736)(256,-0.026037)(257,-0.028819)(258,-0.026046)(259,-0.036062)(260,-0.023288)(261,-0.028476)(262,-0.006612)(263,-0.039045)(264,-0.030679)(265,-0.021481)(266,-0.006981)(267,-0.034598)(268,-0.019759)(269,-0.051889)(270,-0.025301)(271,-0.015632)(272,-0.047606)(273,-0.012410)(274,-0.022877)(275,-0.025627)(276,-0.025649)(277,-0.031645)(278,-0.022951)(279,-0.032483)(280,-0.009667)(281,-0.027524)(282,-0.018763)(283,-0.013211)(284,-0.038684)(285,-0.007526)(286,-0.034519)(287,-0.020966)(288,-0.013562)(289,-0.039248)(290,-0.039423)(291,-0.013077)(292,-0.017334)(293,-0.020129)(294,-0.041345)(295,-0.024952)(296,-0.023374)(297,-0.026936)(298,-0.033756)(299,-0.026671)(300,-0.045737)(301,-0.026285)(302,-0.036249)(303,-0.059300)(304,-0.030298)(305,-0.031821)(306,-0.014360)(307,-0.029919)(308,-0.050790)(309,-0.047298)(310,-0.021804)(311,-0.022585)(312,-0.007368)(313,-0.034950)(314,-0.024276)(315,-0.017398)(316,-0.024584)(317,-0.033994)(318,-0.009230)(319,-0.023081)(320,-0.009388)(321,-0.025689)(322,-0.038571)(323,-0.018657)(324,-0.024608)(325,-0.018159)(326,-0.054161)(327,-0.043067)(328,-0.021305)(329,-0.029310)(330,-0.033454)(331,-0.031599)(332,-0.046912)(333,-0.039133)(334,-0.031060)(335,-0.024895)(336,-0.009778)(337,-0.037285)(338,-0.043530)(339,-0.046492)(340,-0.030859)(341,-0.045619)(342,-0.030971)(343,-0.026585)(344,-0.018563)(345,-0.035868)(346,-0.043397)(347,-0.016521)(348,-0.043670)(349,-0.026780)(350,-0.008642)(351,-0.036398)(352,-0.004978)(353,-0.026799)(354,-0.020594)(355,-0.014010)(356,-0.030731)(357,0.005371)(358,-0.027262)(359,-0.028872)(360,-0.026002)(361,-0.018241)(362,-0.061381)(363,-0.028655)(364,-0.028173)(365,-0.026483)(366,-0.045973)(367,-0.043928)(368,-0.021214)(369,-0.024567)(370,-0.036191)(371,-0.061603)(372,-0.010353)(373,-0.037536)(374,-0.011122)(375,-0.044948)(376,-0.024212)(377,-0.014938)(378,-0.017264)(379,-0.022537)(380,-0.013104)(381,-0.034175)(382,-0.018198)(383,-0.027663)(384,-0.026378)(385,-0.029577)(386,-0.034679)(387,-0.023154)(388,-0.036460)(389,-0.039515)(390,-0.035077)(391,-0.007940)(392,-0.030699)(393,-0.027232)(394,-0.045100)(395,-0.016782)(396,-0.022862)(397,-0.010336)(398,-0.054896)(399,-0.023645)(400,-0.017023)(401,-0.031650)(402,-0.031017)(403,-0.013097)(404,-0.041676)(405,-0.005477)(406,-0.024241)(407,-0.040387)(408,-0.019767)(409,-0.012836)(410,-0.009213)
        };
        \end{axis}
        \end{tikzpicture}
         %\caption{Piecewise constant (blue) vs \(\tanh\) (green)}
         \caption{Cubic (blue) versus piecewise (green) function}
    \end{subfigure}
    %\caption{Eigenvalue distribution and top eigenvectors of \(\K\) for the hyperbolic tangent function (in blue, with respect to Bernoulli distribution with zero mean and unit variance) and the associated piecewise constant function (in blue, performed on standard Gaussian distribution) in \eqref{eq:def-piecewise-func} with the same \((a_1, a_2, \nu)\), in the setting of Figure~\ref{fig:validation-approx}.}
    \caption{Eigenvalue distribution and top eigenvectors of \(\K\) for the piecewise constant function (in green) and the associated cubic function (in blue) with the same \((a_1, a_2, \nu)\), performed on Bernoulli distribution with zero mean and unit variance, in the setting of Figure~\ref{fig:validation-approx}.}
    \label{fig:piecewise-versus-cubic} 
\end{figure}
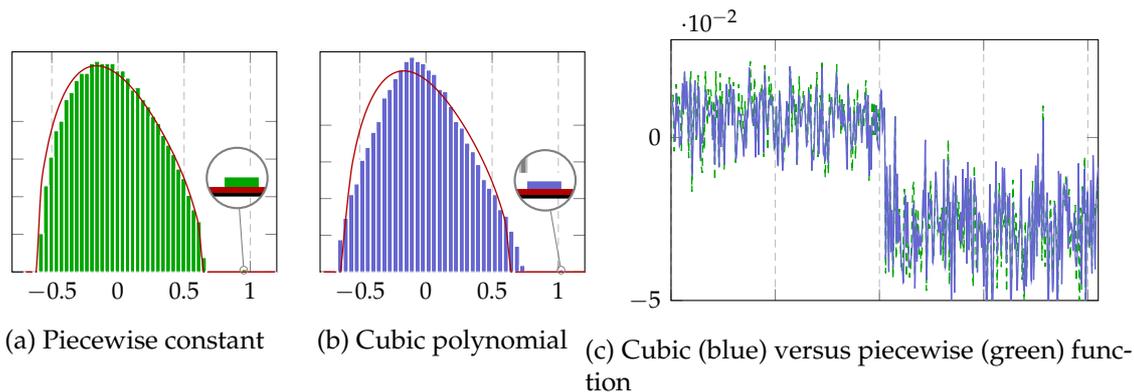

%{\RED  \(\tanh\) nonlinearity: \(a_1 = 0.605706\), \(a_2 = 0\) and \(\nu = 0.394294\).}

\section{Concluding remarks}
\label{sec:conclusion}

We have shown that inner-product kernel matrices \(\sqrt{p}\K=f(\x_i^\T\x_j/\sqrt{p})\) with \(\x_i=\bmu_a+(\I_p+\E_a)^{\frac12}\z_i\), \(a\in\{1,2\}\), asymptotically behave as a spiked random matrix model which spectrally only depends on three defining parameters of \(f\). Turning \(\I_p\) into a generic \(\C\) covariance is more technically challenging, breaking most of the orthogonality properties of the orthogonal polynomial approach of the proofs, but a needed extension of the result.

Interestingly, this study can be compared to analyses in neural networks (see e.g.,  \cite{pennington2017nonlinear,benigni2019eigenvalue}) where it has been shown that in the case of sub-Gaussian entries for both random layer \(\W\) and input data \(\X\) the (limiting) spectrum of the Gram matrix \(f(\W \X)f(\W \X)^\T\) ( \(f\) understood entry-wise) is uniquely determined by the same \((a_1, \nu)\) coefficients. Our results may then be adapted to an improved understanding of classification in random neural networks. 
%{\BLUE When deep networks are considered, it was conjectured in \cite{pennington2017nonlinear} and then proved in \cite{benigni2019eigenvalue} that, }

\medskip

\small
\bibliographystyle{alpha}
\bibliography{liao}

\clearpage

\appendix

\vskip 0.3in

\section{The non-trivial classification regime}
\label{sec:sm-non-trivial}
In the ideal case where \(\bmu_1,\bmu_2\) and \(\E_1, \E_2\) are perfectly known, the (decision optimal) Neyman-Pearson test to decide on the class of an unknown and normally distributed \(\x\), genuinely belonging to \(\mathcal{C}_1\), consists in the following comparison
\[
  (\x-\bmu_2)^\T (\I_p + \E_2)^{-1} (\x-\bmu_2) - (\x-\bmu_1)^\T (\I_p + \E_1)^{-1} (\x-\bmu_1) \overset{\mathcal C_1}{\underset{\mathcal C_2}{\gtrless}} \log \frac{\det(\I_p + \E_1)}{\det(\I_p + \E_2)}.
\]
Writing \(\x = \bmu_1 + (\I_p + \E_1)^{\frac12} \z\) so that \(\z \sim \NN(\zo,\I_p)\), the above test is then equivalent to
\begin{align*}
  T(\x) &= \frac1p \z^\T \left( (\I_p + \E_1)^\frac12 (\I_p + \E_2)^{-1} (\I_p + \E_1)^\frac12 - \I_p \right) \z + \frac2p \Delta\bmu^\T (\I_p + \E_2)^{-1} (\I_p + \E_1)^\frac12 \z \\
  %%%
  &+ \frac1p \Delta\bmu^\T (\I_p + \E_2)^{-1} \Delta\bmu - \frac1p \log \frac{\det(\I_p + \E_1)}{\det(\I_p + \E_2)} \overset{\mathcal C_1}{\underset{\mathcal C_2}{\gtrless}} 0
\end{align*}
where \(\Delta \bmu \equiv \bmu_1 - \bmu_2\). Since, for \(\U \in \RR^{p \times p}\) an eigenvector basis for \((\I_p + \E_1)^\frac12 (\I_p + \E_2)^{-1} (\I_p + \E_1)^\frac12 - \I_p\), \(\U \z\) has the same distribution as \(\z\), with a careful application of the Lyapunov's central limit theorem (see for example \cite[Theorem~27.3]{billingsley2012probability}), along with the assumption \(\| \E_a \| = o (1)\) for \(a \in \{1,2\}\), we obtain 
\[
    V_T^{-\frac12} (T(\x) - E_T) \overset{\mathcal L}{\longrightarrow}\NN(0,1)
\]
as \(p \to \infty\), with
\begin{align*}
    E_T &\equiv \frac1p \tr \left( (\I_p + \E_1) (\I_p + \E_2)^{-1} \right) - 1 + \frac1p \Delta\bmu^\T (\I_p + \E_2)^{-1} \Delta\bmu - \frac1p \log \frac{\det(\I_p + \E_1)}{\det(\I_p + \E_2)} \\
    %%%
    V_T &\equiv \frac2{p^2} \| (\I_p + \E_1)^\frac12 (\I_p + \E_2)^{-1} (\I_p + \E_1)^\frac12 - \I_p \|_F^2 \nonumber \\ &+ \frac4{p^2} \Delta\bmu^\T (\I_p + \E_2)^{-1} (\I_p + \E_1) (\I_p + \E_2)^{-1} \Delta\bmu.
\end{align*}
The classification error rate is thus non-trivial (i.e., converging neither to \(0\) not \(1\) as \(p \to \infty\)) if \(E_T\) and \(\sqrt{V_T}\) are of the same order of magnitude (with respect to \(p\)). 

In the case where \(\E_1 = \E_2 = \E\),
\[
    E_T = \frac1p \Delta\bmu^\T (\I_p + \E)^{-1} \Delta\bmu = O( \| \Delta\bmu \|^2 p^{-1} ), \ \sqrt{V_T} = \frac2p \sqrt{ \Delta\bmu^\T (\I_p + \E)^{-1} \Delta\bmu } = O( \| \Delta\bmu \| p^{-1} )
\]
so that we must as least demand \(\| \Delta\bmu \| \ge O(1)\) (which, up to centering, is equivalent to asking \(\| \bmu_a \| \ge O(1)\) for \(a\in\{1,2\}\)).

Under the critical condition \(\| \Delta\bmu \| = O(1)\), we move on to the study of the condition on the covariance \(\E_a\). To this end, a Taylor expansion can be performed for \(\I_p + \E_2\) around \(\I_p + \E_1\) so that
\begin{align*}
    E_T &= \frac1p \Delta\bmu^\T (\I_p + \E_1 )^{-1} \Delta\bmu  + \frac1{2p} \| (\I_p + \E_1 )^{-1} \Delta\E \|_F^2 + o(p^{-1}) \\
    %%%
    V_T &= \frac2{p^2} \| (\I_p + \E_1 )^{-1} \Delta\E \|_F^2 + \frac4{p^2} \Delta\bmu^\T (\I_p + \E_1)^{-1} \Delta\bmu + o(p^{-2}).
\end{align*}
with \(\Delta \E \equiv \E_1 - \E_2\). Thus one must have \(\| \Delta \E \| \ge O(p^{-1/2})\) for \(\| (\I_p + \E_1 )^{-1} \Delta\E \|_F^2 \) not to vanish for \(p\) large and for \(\Delta \E\) to have discriminating power. It is thus convenient to request \(\| \E_a \| \ge O(p^{-1/2})\) for \(a \in \{1,2 \}\), which unfolds from
\[
    |\tr \E_a | \ge O(\sqrt p), \quad\textmd{or}\quad \| \E_a \|_F^2 \ge O(1).
\]

Yet, as noticed in \cite{couillet2018classification}, many classification algorithms, either supervised, semi-supervised or unsupervised, are not able to achieve the optimal rate \(\| \E_a \|_F^2 = O(1)\) when \(n,p\) are of the same order of magnitude. Indeed, the best possible rate \(\| \E_a \|_F^2 = O(\sqrt p)\) can only be obtained in very particular cases, for instance if \(|\tr \E_a | = o(\sqrt p)\) and \(\| \bmu_a \| = o(1)\) as investigated in \cite{liao2019large}. This thus leads to the non-trivial classification condition demanded in Assumption~\ref{ass:growth-rate}.

\section{Exact computation of \texorpdfstring{ \(\phi\)}{something} in the Gaussian case}
\label{sec:sm-expectation-computation-Gaussian}

In the Gaussian case where \(\z \sim \mathcal{N}(\zo,\I_p)\) we resort to computing, as in \cite{williams1997computing,louart2018random}, the integral
\begin{align*}
    &\EE_\z[(\z^\T \a)^{k_1} (\z^\T \b)^{k_2}] = (2\pi)^{-p/2} \int_{\RR^p} (\z^\T \a)^{k_1} (\z^\T \b)^{k_2} e^{-\|\z\|^2/2} d\z \\
    %%%
    &=\frac1{2\pi} \int_{\RR^2} (\tilde z_1 \tilde a_1)^{k_1} (\tilde z_1 \tilde b_1 + \tilde z_2 \tilde b_2)^{k_2} e^{-(\tilde z_1^2 + \tilde z_2^2)/2} d \tilde z_1 d \tilde z_2 = \frac1{2\pi} \int_{\RR^2} (\tilde \z^\T \tilde \a)^{k_1} (\tilde \z^\T \tilde \b)^{k_2} e^{-\|\tilde \z\|^2/2} d \tilde \z
\end{align*}
where we apply the Gram-Schmidt procedure to project \(\z\) into the two-dimensional space\footnote{By assuming first that \(\a,\b\) are linearly independent before extending by continuity to \(\a,\b\) proportional.} spanned by \(\a,\b\) with \(\tilde a_1 = \| \a \|\), \(\tilde b_1 = \frac{\a^\T \b}{\| \a\|}\), \(\tilde b_2 = \sqrt{ \| \b \|^2 - \frac{(\a^\T \b)^2}{\| \a \|^2}}\) and denote \(\tilde \z = [\tilde z_1; \tilde z_2]\), \(\tilde \a = [\tilde a_1; 0]\) and \(\tilde \b = [\tilde b_1; \tilde b_2]\). As a consequence, we obtain, for \(k\) even,
\begin{align*}
    &\EE \left[ (\z_i^\T \z_j/\sqrt{p})^{k}\right]= \EE[\xi^k] = (k - 1)!!; \\
    %%%
    &\EE_{\z_i} \left[ ( \z_i^\T \z_j/\sqrt{p})^{k} (\z_i^\T \b) \right]= \EE_{\z_i} \left[ ( \z_i^\T \z_j/\sqrt{p})^{k-1} (\z_i^\T \b)^2 \right]= 0; \\
    %%%
    &\EE_{\z_i}\left[ ( \z_i^\T \z_j/\sqrt{p})^{k-1} (\z_i^\T \b) \right]= (k - 1)!! (\| \z_j\| /\sqrt{p} )^{k - 2}  (\z_j^\T \b)/\sqrt{p}; \\
    %%%
    & \EE_{\z_i}\left[ ( \z_i^\T \z_j/\sqrt{p})^{k} (\z_i^\T \b)^2 \right]= (k - 1)!! \left( k ( \| \z_j\| /\sqrt{p} )^{k - 2} ( \z_j^\T \b/\sqrt{p})^2 + ( \| \z_j\| /\sqrt{p} )^{k} \| \b \|^2 \right);
\end{align*}
where we denote \(k!!\) the double factorial of an integral \(k\) such that \(k!! = k (k-2) (k-4) \cdots\). This futher leads to, in the Gaussian case, the expression of \(\tilde \K_I\) in Proposition~\ref{prop:K-I-approx}.
\section{Proof of Proposition~\ref{prop:K-I-approx}}
\label{sec:sm-proof-prop-K-I-approx}
%\begin{proof}[Proof of Proposition~\ref{prop:K-I-approx}]
Define by \(\L\) the matrix with \(\L_{ij} \equiv [\frac1p (\J \M^\T \M \J^\T + \J \M^\T \Z + \Z^\T \M \J^\T)]_{ij}\) for \(i\neq j\) and \(\L_{ii} = 0\). Then \(\K_I\) can be written as
\begin{align*}
    \K_I &= k (\Z^\T \Z/\sqrt p)^{\circ (k-1)} \circ \L + \bPhi, \\
    %%%
    \bPhi_{ij} &\equiv \frac{k}p (\z_i^\T \z_j/\sqrt p)^{k-1} \z_i^\T \left( \frac12 (\E_a + \E_b) - \frac18 (\E_a - \E_b)^2 \right) \z_j \\
    %%%
    &+ \frac{k(k-1)}{8p} (\z_i^\T \z_j/\sqrt p)^{k-2} \frac1{\sqrt p} (  \z_i^\T (\E_a + \E_b) \z_j )^2
\end{align*}
for \(i\neq j\) and \(\bPhi_{ii} = 0\). With this expression, the proof of Proposition~\ref{prop:K-I-approx} can be divided into three steps.

\paragraph{Concentration of \(\bPhi\).}

We first show that, \(\| \bPhi - \EE[\bPhi] \| \to 0\) almost surely, as \(n,p \to \infty\). This follows from the observation that \(\bPhi\) is a \(p^{-1/4}\) rescaling (since \(\| \E_a \| = O(p^{-1/4})\)) of the null model \(\K_N\), which concentrates around its expectation in the sense that \(\| \K_N - \EE[\K_N] \|= O(1)\) for \(\EE[\K_N] = O(\sqrt p)\) if \(a_0 \neq 0\) (see Remark~\ref{rem:a_0}). Indeed, it is shown in \cite[Theorem~1.7]{fan2019spectral} that, the leading eigenvalue of order \(O(\sqrt p)\) discarded (arising from \(\EE[\K_N]\)), \(\K_N\) is of bounded operator norm for all large \(n,p\) with probability one; this, together with the fact that \(\| \EE[\bPhi] \| = O(1)\) that will be shown subsequently, allows us to conclude that \(\| \bPhi - \EE[\bPhi] \| \to 0\) as \(n,p \to \infty\).

\paragraph{Computation of \(\EE[\bPhi]\): beyond the Gaussian case.}

We then show that, for random vectors \(\z\) with zero mean, unit variance and bounded moments entries, the expression of \(\EE[\bPhi]\) coincides with the Gaussian case. To this end, recall that the entries of \(\bPhi\) are the sum of random variables of the type
\[
  \phi = \frac{C}{\sqrt p} (\x^\T \y /\sqrt p)^\alpha (\x^\T \F \y)^\beta
\]
for independent random vectors \(\x,\y \in \RR^p\) with i.i.d.\@ zero mean, unit variance and finite moments (uniformly on \(p\)) entries, deterministic \(\F \in \RR^{p \times p}\), \(C\in\RR\), \(\alpha\in \mathbb N\) and \(\beta\in\{1,2\}\). Let us start with the case \(\beta = 1\) and expand \(\phi\) as
\begin{equation}\label{eq:phi-expension-1}
  \phi = \frac{C}{\sqrt p} \left( \frac1{\sqrt p} \sum_{i_1=1}^p x_{i_1} y_{i_1} \right) \ldots \left( \frac1{\sqrt p} \sum_{i_\alpha=1}^p x_{i_\alpha} y_{i_\alpha} \right) \left( \sum_{j_1, j_2=1}^p F_{j_1,j_2} x_{j_1} y_{j_2} \right)
\end{equation}
with \(x_i\) and \(y_i\) the \(i\)-th entry of \(\x\) and \(\y\), respectively, so that (i) \(x_i\) is independent of \(y_j\) for all \(i,j\) and (ii) \(x_i\) is independent of \(x_j\) for \(i \neq j\) with \(\EE[x_i] = 0\), \(\EE[x_i^2] = 1\) and \(\EE[|x_i|^k] \le C_k\) for some \(C_k\) independent of \(p\) (and similarly for \(\y\)).

At this point, note that to ensure \(\EE[\K_I]\) has non-vanishing operator norm as \(n,p \to \infty\), we need \(\EE[\phi] \geq O(p^{-1})\) since \( \| \A \| \leq p \| \A \|_\infty \) for \(\A \in \RR^{p \times p}\). Also, note that (as \(\beta = 1\)), all terms in the sum \(\sum_{j_1, j_2=1}^p F_{j_1,j_2} x_{j_1} y_{j_2}\) with \(j_1 \neq j_2\) must be zero since in other terms \(x_i\) always appears together with \(y_i\), so that all terms with \(j_1 \neq j_2\) give rise to zero in expectation. Hence, the \(p^2\) terms of the sum only contain \(p\) nonzero terms in expectation (those with \(j_1 = j_2\)). The arbitrary (absolute) moments of \(x\) and \(y\) being finite, the first \(\alpha p\) terms must be divided into \(\lceil \alpha \rceil/2\) groups of size two (containing \(O(p)\) terms) so that, with the normalization by \(p^{-1}\) for each group of size two, the associated expectation is not vanishing. We shall thus discuss the following two cases:
\begin{enumerate}
  \item \(\alpha\) even: the \(\alpha\) terms in the sum form \(\alpha/2\) groups with different indices each and also different from \(j_1 = j_2\). Therefore we have \(\EE_{x_j}[\phi]= 0\) and \(\EE[\phi] = 0\).
  \item \(\alpha\) odd: the \(\alpha\) terms in the sum form \((\alpha - 1)/2\) groups with indices different from each other and the remaining one goes with the last term containing \(\F\) and one has \(\EE[\phi] = \frac{C \alpha!!}p \tr (\F)\) by a simple combinatorial argument.
\end{enumerate}
The case \(\beta = 2\) follows exactly the same line of arguments except that \(j_1\) may not equal \(j_2\) to give rise to non-vanishing terms.  

\paragraph{Concentration of Hadamard product.}

It now remains to treat the term \(k(\Z^\T \Z/\sqrt p)^{\circ (k-1)} \circ \L\) and show it also has an asymptotically deterministic behavior (as \(\bPhi\)). It can be shown that
\[
    \| \N \circ \L \| \to 0, \quad n,p \to \infty
\]
with \(\N = (\Z^\T \Z/ \sqrt p)^{\odot (k-1)} - (k-2)!! \one_n \one_n^\T\) for \(k\) odd and \(\N = (\Z^\T \Z/ \sqrt p)^{\odot (k-1)}\) for \(k\) even.

To prove this, note that, depending on the key parameter \(a_0 = \EE[f(\xi)]\), the operator norm of \(f(\Z^\T \Z/\sqrt p)/\sqrt p\) is either of order \(O(\sqrt p)\) for \(a_0 \neq 0\) or \(O(1)\) for \(a_0 = 0\). In particular, for monomial \(f(x) = x^k\) under study here, we have \(a_ 0 = \EE[\xi^k] = 0\) for \(k\) odd and \(a_0 = \EE[\xi^k] = (k-1)!! \neq 0\) for \(k\) even, \(\xi \sim \NN(0,1)\). To control the operator norm of the Hadamard product between matrices, we introduce the following lemma.

\begin{Lemma}\label{lem:hadamard-product-control}
For \(\A,\B \in \mathbb{R}^{p \times p}\), we have \(\| \A \circ \B \| \le \sqrt{p} \| \A \|_\infty \| \B\|\).
\end{Lemma}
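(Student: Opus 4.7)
The plan is to prove the bound directly by an operator-norm argument on an arbitrary unit vector, using only (i) the trivial entrywise estimate $|\A_{ij}|\le\|\A\|_\infty$, (ii) Cauchy--Schwarz, and (iii) the standard inequality $\|\B\|_F\le\sqrt{\rank(\B)}\,\|\B\|\le\sqrt{p}\,\|\B\|$. No heavy machinery is needed; the lemma is purely linear-algebraic and the work consists of chaining these three elementary facts.

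Concretely, fix an arbitrary $x\in\RR^p$ with $\|x\|=1$. I would first expand the $i$-th coordinate as $[(\A\circ\B)x]_i=\sum_j \A_{ij}\B_{ij}x_j$, and then estimate, using $|\A_{ij}|\le\|\A\|_\infty$ and Cauchy--Schwarz applied to the vectors $(\B_{ij})_j$ and $(x_j)_j$,
\[
  \bigl|[(\A\circ\B)x]_i\bigr| \;\le\; \|\A\|_\infty \sum_{j}|\B_{ij}||x_j| \;\le\; \|\A\|_\infty\,\|\B_{i,:}\|_2\,\|x\|_2.
\]
Squaring and summing over $i$ collapses the row norms into the Frobenius norm:
\[
  \|(\A\circ\B)x\|_2^2 \;\le\; \|\A\|_\infty^2\,\|x\|_2^2\sum_{i=1}^{p}\|\B_{i,:}\|_2^{2} \;=\; \|\A\|_\infty^2\,\|\B\|_F^{2}.
\]

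Finally, I would invoke the inequality $\|\B\|_F^2=\sum_k \sigma_k(\B)^2 \le \rank(\B)\,\|\B\|^2 \le p\,\|\B\|^2$, which upgrades the Frobenius bound into the desired operator-norm bound $\|(\A\circ\B)x\|\le\sqrt{p}\,\|\A\|_\infty\,\|\B\|$. Taking the supremum over unit $x$ yields the claim. There is no real obstacle here; the only thing to watch is ensuring that the Cauchy--Schwarz step is set up row-wise so that the subsequent summation produces the Frobenius norm (rather than, say, $\||\B|\|$, which can exceed $\|\B\|$ and would spoil the estimate). Because the bound is tight up to constants (attained, e.g., when $\A=\one_p\one_p^\T$ and $\B$ has a flat spectrum), no sharper argument is needed for the application in Proposition~\ref{prop:K-I-approx}.
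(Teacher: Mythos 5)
Your proof is correct, but it takes a genuinely different elementary route from the paper's. The paper argues column-by-column: it first bounds \(\| (\A\circ\B)\e_i\| \le \|\A\|_\infty \|\B\e_i\| \le \|\A\|_\infty\|\B\|\) for each canonical basis vector \(\e_i\), then expands an arbitrary \(\v=\sum_i v_i\e_i\) and pays the factor \(\sqrt p\) through the \(\ell_1\)--\(\ell_2\) comparison \(\sum_i|v_i|\le\sqrt p\,\|\v\|\). You instead work row-by-row with Cauchy--Schwarz, which yields the intermediate inequality \(\|\A\circ\B\|\le\|\A\|_\infty\|\B\|_F\), and only then pass to the operator norm via \(\|\B\|_F\le\sqrt{\rank(\B)}\,\|\B\|\le\sqrt p\,\|\B\|\); the \(\sqrt p\) thus enters at a different place. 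Your version is marginally sharper --- it gives \(\sqrt{\rank(\B)}\) in place of \(\sqrt p\) and the clean Frobenius bound as a byproduct --- while the paper's is marginally shorter since it never leaves the operator norm. Either is more than enough for Proposition~\ref{prop:K-I-approx}, where all that matters is that a factor with \(o(p^{-1/2})\) entries against a factor with bounded operator norm produces a vanishing Hadamard product. One small correction to your closing aside: the bound is not attained at \(\A=\one_p\one_p^\T\), since then \(\A\circ\B=\B\) and \(\|\A\|_\infty=1\), so the inequality is off by the full factor \(\sqrt p\); a case of equality is instead \(\B=H/\sqrt p\) for \(H\) a Hadamard matrix and \(\A_{ij}=\sign(\B_{ij})\), for which \(\A\circ\B=\one_p\one_p^\T/\sqrt p\) has norm \(\sqrt p=\sqrt p\,\|\A\|_\infty\|\B\|\).
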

\begin{proof}[Proof of Lemma~\ref{lem:hadamard-product-control}]
Let \(\e_1, \ldots, \e_p\) be the canonical basis vectors of \(\mathbb{R}^p\), then for all \(1 \le i \le p\),
\[
  \| (\A \circ \B) \e_i \| \le \max_{i,j} |\A_{ij}| \| \B \e_i \| = \| \A \|_\infty \|\B \e_i\| \le \| \A \|_\infty \|\B \|.
\]
As a consequence, for any \(\v = \sum_{i=1}^p v_i \e_i\), we obtain
\[
  \| (\A \circ \B) \v \| \le \sum_{i=1}^p |v_i| \| (\A \circ \B) \e_i \|  \le \sum_{i=1}^p |v_i| \| \A \|_\infty \|\B\|
\]
which, by Cauchy-Schwarz inequality further yields \(\sum_{i=1}^p |v_i| \le \sqrt{p} \| \v\|\). This concludes the proof of Lemma~\ref{lem:hadamard-product-control}.
\end{proof}
Lemma~\ref{lem:hadamard-product-control} tells us that the Hadamard product between a matrix with \(o(p^{-1/2})\) entry and a matrix with bounded operator norm is of vanishing operator norm, as \(p \to \infty\). As such, since \(\| \N \| = O(1)\) and \(\L\) has \(O(p^{-1})\) entries, we have \(\| \N \circ \L \| \to 0\). This concludes the proof of Proposition~\ref{prop:K-I-approx}.
%\end{proof}
%
%
%
%
%
\section{Proof of Theorem~\ref{prop:K-approx-final}}
\label{sec:sm-proof-prop-K-approx-final}

%\begin{proof}[Proof of Theorem~\ref{prop:K-approx-final}]
The proof follows from the fact that the individual coefficients of the Hermite polynomials \(P_\kappa(x) = \sum_{l=0}^\kappa c_{\kappa,l} x^l\) satisfy the following recurrent relation \cite{abramowitz1965handbook}
\begin{equation}\label{eq:hermite-coeff}
  c_{\kappa+1,l} = \begin{cases} - \kappa c_{\kappa-1,l} & l=0; \\ c_{\kappa,l-1} - \kappa c_{\kappa-1,l} & l \ge 1; \end{cases}
\end{equation}
with \(c_{0,0} = 1\), \(c_{1,0} = 0\) and \(c_{1,1} = 1\). As a consequence, by indexing the informative matrix in Proposition~\ref{prop:K-I-approx} of the monomial \(f(x) = x^l\) as \(\tilde\K_{I,l}\), we have for odd \(\kappa \ge 3\),
\[
  \tilde\K_I = \sum_{l=1,3,\ldots}^\kappa c_{\kappa,l} \tilde\K_{I,l} = \sum_{l=1,3,\ldots}^\kappa c_{\kappa,l} l!! (\J \M^\T \M \J^\T + \J \M^\T \Z + \Z^\T \M \J^\T)/p -\diag(\cdot) = \zo
\]
with \([\X-\diag(\cdot)]_{ij}=\X_{ij}\delta_{i\neq j}\).
This follows from the fact that, for \(\kappa \ge 3\), we have both \(\sum_{l=1,3,\ldots}^\kappa c_{\kappa,l} l!! = 0\) and \(\sum_{l=0,2,\ldots}^{\kappa+1} c_{\kappa+1, l} (l+1)!! = 0\). The latter is proved by induction on \(\kappa\): first, for \(\kappa = 3\), we have \(c_{3,1} + 3 c_{3,3} = c_{4,0} + 3 c_{4,2} + 15 c_{4,4} = 0\); then, assuming \(\kappa\) odd, we have \(\sum_{l=1,3,\ldots}^\kappa c_{\kappa,l} l!! = \sum_{l=0,2,\ldots}^{\kappa+1} c_{\kappa+1,l} (l+1)!! = 0\) so that, together with \eqref{eq:hermite-coeff}
\[
  \sum_{l=1,3,\ldots}^{\kappa+2} c_{\kappa+2,l} l!! = \sum_{l=1,\ldots}^{\kappa+2} ( c_{\kappa+1,l-1} - (\kappa+1) c_{\kappa,l}) l!! = \sum_{l=1,\ldots}^{\kappa+2} c_{\kappa+1,l-1} l!! = \sum_{l=0,2,\ldots}^{\kappa+1} c_{\kappa+1, l} (l+1)!! = 0
\]
as well as
\begin{align*}
  \sum_{l=0,2,\ldots}^{\kappa+3} c_{\kappa+3,l} (l+1)!! &= - (\kappa+2) c_{\kappa+1,0} + \sum_{l=2,4,\ldots}^{\kappa+3} (c_{\kappa+2, l-1} - (\kappa+2) c_{\kappa+1,l}) (l+1)!! = 0
\end{align*}
where we used \(c_{\kappa,l} = 0\) for \(l \ge \kappa+1\). Similar arguments hold for the case of \(\kappa\) even, which concludes the proof.
%\end{proof}

% {\RED 
% \emph{Note on simulations}:

% Consider a symmetric Toeplitz matrix \(\A \in \RR^{p \times p}\) such that \(\A_{ij} = a^{|i-j|}\) for some \(0 < a <1\), we have the upper bound \(\| \A \| \le \max_{1 \le i \le p} \sum_{j=1}^p |\A_{ij}| \le \frac{1+a}{1-a}\) as a results of the Gershgorin circle theorem (e.g., Theorem~6.1.1 in \cite{horn2012matrix}). As a consequence, for \(\E\) to have operator norm of order \(O(p^{-1/4})\), one may wish to take for instance \(\E = p^{-\frac14} \A\) for \(0 < a < 1\).

% Then, since the (squared) Frobenius norm of \(\A\) is given by 
% \[
%     \| \A \|_F^2 = p (c^2)^0 + 2 (p-1) (c^2)^1 +  2 (p-2) (c^2)^2 + \ldots + 2 (c^2)^{p-1} \le \frac{1+a^2}{1-a^2} p %- \frac{2 a^2}{(1-a^2)^2}
% \]
% so that \(\|\E\|_F^2 = O(\sqrt p) \) as required.

% Also, for \(\| \I_p - \A\|_F^2 = 2 (p-1) (a^2)^1 +  2 (p-2) (a^2)^2 + \ldots +2 (a^2)^{p-1}\) to ensure
% \[
%   \| \I_p - \A\|_F^2 = \frac{2 a^2 p}{1-a^2} - \frac{2a^2}{(1-a^2)^2} - 2c^{2p} \frac{2-a^2}{(1-a^2)^2} = O(\sqrt{p})
% \]
% it suffices to take \(a = p^{-\alpha}\) with \(\alpha = \frac14\).
% }

\end{document}